\newtheorem{theorem}{Theorem}
\newtheorem{lemma}{Lemma}
\newtheorem{corollary}{Corollary}
\theoremstyle{definition}
\newcommand{\remove}[1]{ }
\newcommand{\EEE}{\mathcal{E}}
\DeclareMathOperator{\sgn}{{\tt sgn}}
\newcommand{\ud}{\,\mathrm{d}}
\newcommand{\bigO}{\mathcal{O}}
\newcommand{\cc}{\mathbf{c}}
\newcommand{\dd}{\mathbf{d}}
\newcommand{\calT}{\mathcal{T}}
\begin{document}

\title{\bf How effective can simple ordinal peer grading be?\thanks{A preliminary version of this paper appeared in {\em Proceedings of the 17th ACM Conference on Economics and Computation (EC)}, pages 323--340, 2016. This work has been partially supported by a PhD scholarship from the Onassis Foundation, and by the European Research Council (ERC) under grant number 639945 (ACCORD).}
} 

\author{Ioannis Caragiannis \quad George A. Krimpas}
\affil{\em Department of Computer Engineering and Informatics, University of Patras, Greece}

\author{Alexandros A. Voudouris}
\affil{\em School of Computer Science and Electronic Engineering, University of Essex, UK}

\date{}

\maketitle   

\begin{abstract}
Ordinal peer grading has been proposed as a simple and scalable solution for computing reliable information about student performance in massive open online courses. The idea is to outsource the grading task to the students themselves as follows. After the end of an exam, each student is asked to rank --- in terms of quality --- a bundle of exam papers by fellow students. An aggregation rule then combines the individual rankings into a global one that contains all students. We define a broad class of simple aggregation rules, which we call type-ordering aggregation rules, and present a theoretical framework for assessing their effectiveness. 
When statistical information about the grading behaviour of students is available 
(in terms of a noise matrix that characterizes the grading behaviour of the average student from a student population), the framework can be used to compute the optimal rule from this class with respect to a series of performance objectives that compare the ranking returned by the aggregation rule to the underlying ground truth ranking.
For example, a natural rule known as Borda is proved to be optimal when students grade correctly. In addition, we present extensive simulations that validate our theory and prove it to be extremely accurate in predicting the performance of aggregation rules even when only rough information about grading behaviour (i.e., an approximation of the noise matrix) is available. 
Both in the application of our theoretical framework and in our simulations, we exploit data about grading behaviour of students that have been extracted from two field experiments in the University of Patras.
\end{abstract}

\section{Introduction}\label{sec:intro}
Educational platforms such as Coursera and EdX provide easy access to high level education to everyone who has a decent Internet access. At the end of 2018, these platforms had more than 101 million users --- essentially, students attending the offered courses --- and this number is expected to further increase in the near future. The term ``massive open online course'', or simply MOOC, is very descriptive of the service these platforms offer. A MOOC is the result of their partnership with a faculty member in a top university, whose role is to design the course and organize the course material so that it takes advantage of the most popular Internet apps that the platform utilizes. Courses offered include literally everything.

Even though the service provided is certainly useful, the viability of MOOCs will strongly depend on their revenue sources. Currently, investments from VCs have secured their survival for a short term, but their long term success requires a more stable business model. A feature that could be the main source of revenue for MOOCs is the so-called {\em verified certificate} which the students can get at a reasonable cost. The verified certificate keeps information about the performance of a student in a course (or in a chain of courses) and can be used to justify a student's quality to potential employers. So, the verified certificate should have {\em reliable information} about the student performance in the courses she has participated in. Even though the means to guarantee this in the traditional University system is well-established, achieving this in a MOOC is a challenge.

The big issue is in the massive student participation. Of course, the Internet provides tools so that organizing exams with huge numbers of students is logistically feasible. But what about assessment and grading? As the most popular courses attract $50\,000$ students or more and the vision of MOOCs enthusiasts is for millions of students per course, is grading of assignments or exams possible? Undoubtedly, professional graders would be extremely costly. Organizing the material using multiple-choice questions and answers that could be graded automatically cannot be an option when the students are asked to prepare an essay or a formal mathematical proof or express their critical thinking over some issue. Grading is a typical example of a {\em human computation}~\citep{LvA11} task in these cases.

The only solution that seems consistent to the MOOCs vision is known as {\em peer grading} \citep{KWL+13,PHC+13,W14}, according to which the grading task is outsourced to the students that participated to the exam themselves. This approach has been already implemented in some MOOCs, and standalone experimental tools such as \texttt{crowdgrader.org}~\citep{dAS14}, \texttt{peergrading.org}~\citep{RJ14}, and our own \texttt{co-rank}\footnote{Available at \texttt{co-rank.ceid.upatras.gr}.}~\citep{CKPV16} are already available. Even though the approach seems straightforward, there are subtle implementation issues. For example, allowing the students to use cardinal scores is problematic, since they participate both in the exam and in grading and they may have incentives to assign low grades in order to improve their personal relative performance. Even if we assume that they grade honestly, their experience in doing so is very limited and the result will most probably be unreliable.

In this paper, we focus our attention on {\em ordinal peer grading}, which has recently received attention in the AI and machine learning community~\citep{CKV15,RJ14,SBP+13}. Following the setting that we considered in our previous work~\citep{CKV15}, each student gets a bundle of a small number (our favourite number that we have extensively used recently is $6$) of exam papers so that each exam paper is given to the same number of students. Each student has to {\em rank} the exam papers in her bundle (in terms of quality) and an {\em aggregation rule} will then combine the (partial) rankings submitted by the students and come up with a final ranking of all exam papers; this will be the grading outcome.\footnote{We remark that ordinal peer grading has also been used ---in a smaller scale--- in the evaluation of proposals for research funding, e.g., by the Sensors and Sensing Systems (SSS) program of NSF in 2013 \citep{H13}, using a Borda-like method proposed earlier by \citet{MS09}; see also~\citep{KLMP15}.} Information about the position of a student in the final ranking (e.g., top $10\%$ out of $33\,000$ students) can be included in her verified certificate.

In our previous related work~\citep{CKV15}, we formally proved that a simple aggregation rule, inspired from Borda's rule from social choice theory~\citep{BCELP16}, recovers correctly an expected fraction of $1-\bigO(1/k)$ of the pairwise relations in the underlying {\em ground truth} ranking, when bundles of size $k$ are used and students make no mistakes when grading. The assumption for a ground truth and the comparison of the grading outcome to it is similar in spirit to recent approaches that combine voting and learning~\citep{SPX14,BM08,CPS16,CPS14,CK12,CS05,LB11b,MPC13,P13,X14,XC11,Y88}. The new aspect in \citep{CKV15}, as well as in the current paper, the recent paper of our group~\citep{CCKV17}, and the papers by \citet{SW17}, \citet{WGS16}, and \citet{WJJ13}, is the relaxed requirement of recovering the ground truth only {\em approximately}.
Simulation results in \citep{CKV15} show that Borda has very good performance in an imperfect grading scenario inspired by a noisy model of generating random rankings that has been proposed by \citet{Mallows}. Note that, unlike other studies \citep{GWL16,RJ14,SBP+13}, we investigate the potential of applying ordinal peer grading exclusively, without involving any professionals in grading. 

We remark that theoretical analysis in \citep{CKV15} requires to handle with extra care dependencies between several random variables that appear due to the distribution of exam papers to bundles. The analysis of Borda was possible only due to its particular definition; until the current paper, we had not managed to extend the analysis to any other aggregation rule. Also, the $\bigO$ notation in the theoretical guarantee for Borda above hides large constant terms that constitute the bound of theoretical interest only. We follow a different approach here. We would like to develop a ``theory'' for determining the performance of Borda with the highest possible accuracy and, more importantly, extend our study to more aggregation rules.

We define and study a large class of {\em simple} aggregation rules, which we call {\em type-ordering aggregation rules}. A type-ordering aggregation rule determines the position each exam paper has in the final ranking, based only on the ranks each paper has in the bundles that contain it. This class includes Borda. We present a theoretical framework for assessing the performance of each member of this class with respect to a series of performance objectives. A crucial step in our study is that we have completely neglected the dependencies between the random variables that make the rigorous analysis difficult. This sacrifice of mathematical rigor is formally incorrect but makes sense (a rigorous proof is given in appendix) when the number of students tends to infinity; this can be justified by the massive participation in MOOCs. But the best justification of our approach is that the theoretical predictions of performance are experimentally shown --- through extensive simulations --- to be {\em exact}. This apparently means that the dependencies between random variables have no positive or negative impact on performance. Furthermore, once (statistical) information about the grading behaviour of students and the desired performance objectives are known (both in specific formats, which are introduced later in Section~\ref{sec:rules}), our framework can serve as an {\em optimization toolkit} for selecting the optimal type-ordering aggregation rule. This requires an exact solution to an instance of the {\em feedback arc set} problem which, albeit NP-hard in general~\citep{A06}, can be solved exactly for the instances that do arise.

Our theoretical framework allows us to obtain a series of results. For example, we establish that Borda is the optimal type-ordering aggregation rule when students act as perfect graders. This is rather surprising, since Borda is among the simplest aggregation rules in the class we consider. Even though it was not observed to be optimal in any other scenario we considered, its performance is always  close to optimality. Furthermore, as mentioned above, the optimization task of deciding the optimal aggregation rule strongly depends on the information about grading behaviour. We study how inaccuracies of this information affect the choice of the optimal aggregation rule and its performance for the Mallows model as well as for a simple {\em random utility} model~\citep{APX12}. The results suggest a very minor impact and, essentially, a tiny sample of a student population is enough for building a fairly accurate model of grading behaviour.

Overall, our approach combines theory, simulations, and experimentation and is presented graphically in Figure \ref{fig:approach}. The lower chain of the figure describes what one would expect from a simulated exam. There is a student population and some of them participate in an exam. The preparation level of the students that determines their performance in the exam is a random variable following a uniform probability distribution. After the exam, each student acts as the grader of a small number of exam papers submitted by other students. The grading performance typically depends on the preparation level as well. The grades are combined using the aggregation rule and the final ranking is compared to the ground truth to come up with the observed performance. 

\begin{figure*}[ht]
\centering
\includegraphics[trim=0 130 0 196, clip=true, scale=0.5]{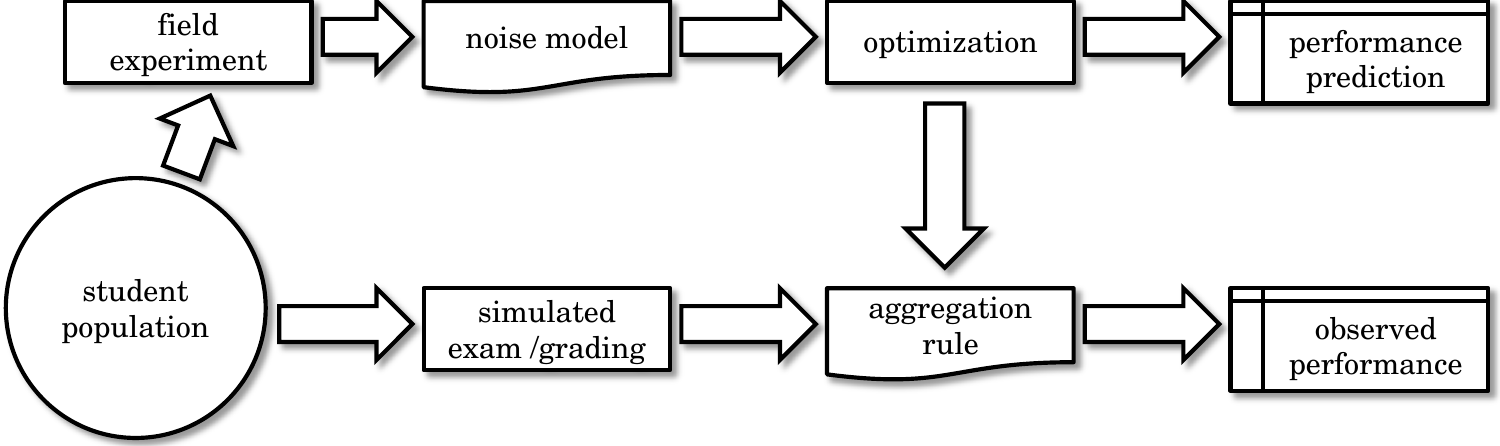}
\caption{A graphical overview of our approach.}
\label{fig:approach}
\end{figure*} 

The most interesting part of Figure \ref{fig:approach} is the upper chain. First, a {\em field experiment} can be used to extract information about the student population. We have performed such field experiments with students in our home institution; we describe them in detail and present the collected data later in the paper. These data are used to build {\em noise models} which, together with the desired performance objective, are given as input to the optimization engine. The optimal aggregation rule for the particular scenario is then constructed, and a theoretical prediction about the performance the rule is expected to have is reported. The optimal aggregation rule can also be applied to the grades from our simulated exams (hence, the downward arrow in Figure \ref{fig:approach}) and a comparison of the theoretically predicted performance with the observed performance of the simulated exam can validate our theory.

The rest of the paper is structured as follows. We begin with a description of the basic tasks that support ordinal peer grading and related preliminary definitions and notation in Section~\ref{sec:prelim}. The type-ordering aggregation rules and our theoretical framework are presented in Section~\ref{sec:rules}. The field experiments and the validation of our framework are then presented in Section~\ref{sec:exp}. We conclude in Section~\ref{sec:open} with a short discussion on future research directions.


\section{Preliminaries} \label{sec:prelim}
We assume that $n$ students have participated in an exam and have submitted their exam papers. Our approach to ordinal peer grading has three distinct tasks: the distribution of papers to students, the grading task by each student, and the aggregation of the grades into a final result. We describe these tasks in detail here and give definitions that will be useful later.

\subsection{Distributing the exam papers}\label{subsec:bundles}
All students that participated in the exam will have to participate in grading as well. The goal of the first task is to balance their grading load. This is done by distributing (copies of) each paper to the students so that each exam paper is given to exactly $k$ students and each student receives exactly $k$ (distinct) exam papers. The $k$ papers that a student receives form her {\em bundle}. These are the exam papers which the student has to grade. Crucially, the bundle of a student should not contain her own exam paper.

A $k$-regular bipartite graph $G=(U,V,E)$ with $n$ nodes on each side of the bipartition (called bundle graph) can be used to represent the distribution of exam papers to students. Each node of sets $U$ and $V$ represents a student. An edge of the graph $G$ between a node $u\in U$ and a node $v\in V$ indicates that the exam paper of the student corresponding to node $u$ is in the bundle of the student corresponding to node $v$. The restriction on the degree of the nodes of set $U$ means that each exam paper is given to exactly $k$ students and the restriction on the degree of the nodes of $V$ means that all bundles have size $k$.

In \citep{CKV15}, we considered bundle graphs that satisfy a particular structural property, namely they contain no cycle of length $4$. This was a technical constraint, required only in theoretical analysis. Simulation results in that paper indicate that uniformly random $k$-regular bipartite graphs are almost as good as bundle graphs. These are the bundle graphs we consider in the current work. A random $k$-regular graph can be built as follows. Starting from the complete bipartite graph $K_{n,n}$ with node sets $U$ and $V$, first remove the edges between nodes corresponding to the same student in $U$ and $V$. Then, draw a perfect matching uniformly at random among all perfect matchings of $K_{n,n}$ that do not include previously removed edges. The edges in the $k$ perfect matchings obtained by repeating the above step $k$ times form the bundle graph.\footnote{Equivalently, we can start from $K_{n,n}$, obtain $k$ perfect matchings (by selecting each of them uniformly at random among all perfect matching of $K_{n,n}$ that do not include edges that have been included in previous matchings), and then rename the nodes in one side of the bipartition so that no student is assigned a bundle that contains her exam paper. This alternative process is used in our formal analysis in Appendix~\ref{app:sec:formal}.}

\subsection{Modelling the grading task}\label{subsec:grading}
Throughout the paper, we assume that there is an underlying strict ranking of the exam papers, the {\em ground truth}, which we aim to recover. As it will shortly become apparent, the setting we consider is so restrictive that we should not expect to recover the ground truth exactly. Instead, we aim to recover the ground truth {\em approximately}. 

A restriction of our setting is that each student is given only $k$ exam papers to grade. Another restriction is that the grading task for each student is simply to rank the exam papers in her bundle, in decreasing order of quality. We consider different scenarios for the grading behaviour of the students. In a first scenario, we assume that, after the end of the exam, the instructor announces indicative solutions and gives detailed instructions that the students can use during grading. Here, we assume that students will act as {\em perfect graders}. Admittedly, this is an unrealistic assumption but we include it as an extreme case in our study together with many others.

In a second scenario, we assume that the students receive no solutions or grading guidelines by the instructor after the end of the exam. In this case, students will inevitably make mistakes when grading and it is reasonable to assume that the performance of a student in grading is strongly correlated to her preparation level and her performance in the exam. We will use the term {\em imperfect grading} to refer to this scenario.

In the study of imperfect grading scenarios, we will consider student populations with different characteristics. In the first such population, each student has a quality drawn uniformly at random from the interval $[1/2,1]$, which affects her position in the ground truth and her ability to grade as well. The ground truth is the ranking of the students in decreasing order of quality. A student $b$ of quality $q$ performs the grading task as follows: she considers every pair of exam papers $x$ and $y$ in her bundle, such that $x$ appears ahead of $y$ in the ground truth, and temporarily determines $x \succ_b y$ with probability $q$ and $y \succ_b x$ with probability $1-q$; the pairwise relation $\succ_b$ will evolve into her ranking of the exam papers in her bundle. If, after considering all pairs of exam papers in the bundle, the pairwise relation $\succ_b$ is cyclic, the whole process is repeated from scratch. Otherwise, the ranking of the exam papers in the bundle induced by $\succ_b$ is the grading outcome of student $b$. Due to its similarities with the well-known Mallows model~\citep{Mallows} for generating random rankings, we refer to this grading behaviour as {\em Mallows grading}.

In another interesting type of student population, grading behaviour follows the general structure of random utility models in the literature; e.g., see \citet{APX12}. Each student has a quality drawn uniformly at random from the interval $[0,1]$. The ground truth is again defined as the ranking of the students in decreasing order of quality. A student $b$ of quality $q$ performs the grading task by assigning a score to every exam paper $x$ of quality $q_x$ in her bundle as follows: with probability $q$ she sets the score of $x$ equal to $q_x$ and with probability $1-q$ the score is drawn uniformly at random from the interval $[0,1]$. Then, the ranking of the exam papers is computed by sorting them in non-increasing order of these scores. We use the term {\em RUM grading} to refer to this grading behaviour.

The two paragraphs above describe how the behaviour of populations of Mallows and RUM graders is simulated in the experiments that we discuss in Sections~\ref{subsec:optimal-exp} and~\ref{subsec:apx}. Admittedly, these two populations are very stylized. We will introduce two more in Section~\ref{subsec:real-exp}, which are closer to the grading behaviour of real students.

\subsection{Aggregation rules}
The third important task is to aggregate the partial rankings provided by the graders into a final output ranking. This is done using an {\em aggregation rule}. A simple but very compelling aggregation rule is inspired by the Borda count voting rule. In our context, Borda computes a score for each exam paper by examining the positions it has in the rankings of the graders that have this exam paper in their bundles. A first position by an exam paper contributes $k$ points to its score, a second position contributes $k-1$ points, and so on. The outcome of Borda is a ranking of the exam papers in non-increasing order in terms of their Borda scores. When we use Borda, we assume that ties are broken uniformly at random but other tie-breaking schemes could be considered as well. 

In our previous work \citep{CKV15}, we also considered several other aggregation rules such as a rule that we call Random Serial Dictatorship (RSD) as well as rules that are based on appropriately defined Markov chains, motivated by early work on rank aggregation on the web \citep{DKN+01,PBM+99}. RSD is very slow in the computation of the final outcome and, even though it performs remarkably well with perfect graders, it has a poor performance in simulated exams with Mallows graders. We will not consider it in the current paper; actually, applying it with input from $10\,000$ graders, which is the typical scenario we consider in this paper, is a computational challenge. The aggregation rules that are based on Markov chains were defined in an unsuccessful attempt to distinguish between high and low quality graders and put more weight on the partial rankings of the former. These ideas are not considered in this work either. 

\citet{RJ14} use optimization (stochastic gradient descent) methods that yield aggregation rules which are maximum likelihood estimators with respect to the cardinal scores of exam papers that are supposed to be part of the ground truth. Since we assume that the ground truth is just a ranking of all exam papers, such methods are not applicable in our case.
Instead, we focus on much simpler aggregation rules.


\section{Type-ordering aggregation rules and their theoretical analysis}\label{sec:rules}
We will use the term {\em type} to refer to the grading result for an exam paper. Its type consists of the ranks the exam paper gets from the $k$ graders that have it in their bundles. So, the type is a vector of $k$ integers from $[k]=\{1, 2, ..., k\}$. We follow the convention that the $k$ entries in types appear in monotone non-decreasing order. We use 
$$\calT_k=\{\sigma=(\sigma_1, \sigma_2, ..., \sigma_k)| 1\leq \sigma_1 \leq \sigma_2 \leq ... \leq \sigma_k\leq k\}$$
to denote the set of all types for bundle size $k$. It is not hard to see that $\calT_k$ contains ${2k-1\choose k}$ different types.

As an example with $k=6$, an exam paper of type $(1,2,2,2,2,5)$ is ranked first by one of its graders, second by four graders, and fifth by one grader. Now, consider another exam paper of type $(2,2,2,2,3,3)$ and observe that Borda would give the same Borda score of $28$ to both exam papers. Is there some  particular reason for which these two exam papers should be very close in the final ranking? Now, consider the two types $(1,1,1,2,5,6)$ and $(2,2,2,3,3,3)$ of Borda scores $26$ and $27$, respectively. Borda indicates that an exam paper with the second type is better. But looking carefully at the ranks, we could come up with the following interpretation. The first exam paper is very good (and most probably in one of the two top positions in any bundle) and the two low ranks are due to poor judgement by the graders. In contrast, the second exam paper is just above average and this is reflected in all grades. Of course, such interpretations are valid only when they can be supported by information about the graders (e.g., about the frequency with which they make mistakes). But, certainly, there are cases where such interpretations are indeed valid.

So, it seems that Borda is restrictive; then, one would think that this is due to the particular scores that Borda uses. We will not investigate whether different scores could yield better results. This, in a slightly different context, is the subject of another recent paper of our group \citep{CCKV17}. Instead, we will define a much broader class of aggregation rules. A {\em type-ordering aggregation rule} uses a strict ordering $\succ$ of the types in $\calT_k$. Then, the final ranking of the exam papers follows the ordering $\succ$ of their types, breaking ties uniformly at random. In general, rules of this class seem to be very powerful. Compared to Borda which partitions the set of exam papers into only $k^2-k+1$ different scores, a type-ordering aggregation rule can distinguish between exponentially many (in terms of $k$) different types. In the following, we use the term {\em Borda ordering} to refer to any ordering of the types in non-increasing order of Borda score. We also use $B(\sigma)$  to denote the Borda score of an exam paper with type $\sigma=(\sigma_1, ..., \sigma_k)$. Clearly, $B(\sigma)=\sum_{i=1}^k{(k+1-\sigma_i}) = k^2+k-\sum_{i=1}^k{\sigma_i}$.

We remark that the use of types in the definition of a broad class of aggregation rules has been possible due to the regularity that we imposed on the bundles and the distribution of exam papers to them. Of course, this creates issues related to the theoretical analysis of these rules (such as dependencies between the random variables involved in the distribution to bundles and in grading). In the next section, we discuss how to overcome such issues by making several simplifying assumptions. A (much more involved) rigorous analysis that justifies these assumptions is presented in Appendix~\ref{app:sec:formal}.

\subsection{A framework for theoretical analysis}\label{subsec:framework}
For the analysis of type-ordering aggregation rules, we will assume an {\em infinite} number of students. This is close to the vision of MOOCs with huge numbers of enrolled students and is the important assumption that constitutes the theoretical analysis possible. So, the positions of students in the ground truth ranking can be thought of as occupying the continuum of the interval $[0,1]$ with uniform density. We will usually identify an exam paper as a real number $x\in [0,1]$, i.e., by its rank in the ground truth ranking.\footnote{Notice that the interval $[0,1]$ is used only to represent the rank of a student and not some kind of absolute cardinal quality. In our analysis, the only information we infer from two students with ranks $x$ and $y$ with $x<y$ is that $x$ is better than $y$; we make no additional assumption about the difference in quality between the two students. This comes in contrast to assumptions by \citet{RJ14}, who assume that cardinal scores are part of the ground truth as well.}
Furthermore, we will assume that in each of the $k$ bundles to which exam paper $x$ belongs, the remaining $k-1$ exam papers are selected uniformly at random with replacement from the student population. Our assumption of infinitely many students allows us to ignore subtleties such as the requirement that all students in a bundle should be distinct and also different than the student that acts as the grader of the bundle (the probability that this requirement will not be satisfied in some bundle is zero).\footnote{Admittedly, this analysis is non-rigorous. A formal analysis should assume a finite number of students, take into account all dependencies between random variables that we neglect here, and conclude that these dependencies vanish as the number of students approaches infinity. Such a rigorous analysis is presented in Appendix \ref{app:sec:formal}.}

In our theoretical modelling of imperfect grading, we make further simplifying assumptions. In particular, we ignore the fact that grading behaviour is correlated to student quality, and instead assume {\em independence} of the two characteristics. The ground truth is selected uniformly at random among all possible rankings of all students. Equivalently, this can be thought of as selecting independently the quality of each student uniformly at random from a given interval, and then sorting the students in non-increasing order in terms of these qualities. 
Grading behaviour of the students is independent of quality. When a student receives a bundle of exam papers, she draws a random ranking of them according to a probability distribution that characterizes the grading behaviour of {\em all} students participating in the exam.
In particular, the behaviour of each grader is characterized by a $k\times k$ {\em noise matrix} $P =(p_{i,j})_{i,j\in [k]}$, where $p_{i,j}$ denotes the probability that the exam paper with correct rank $j$ among the $k$ exam papers in a bundle is ranked at position $i$ by the grader.

Clearly, a noise matrix is doubly stochastic, i.e., the sum of the entries in any column and any row is equal to $1$. Observe that the corresponding noise matrix for perfect grading is the $k\times k$ identity matrix. We will often use the term {\em noise model} as a synonym of the term noise matrix. Note that a noise matrix provides only aggregate information for all students of a population. Furthermore, this information is actually rough, as it is not hard to see that a doubly stochastic matrix may correspond to many different probability distributions over rankings. 

Consider an aggregation rule that uses an ordering $\succ$ of the types defined by bundles of size $k$ and is applied to partial rankings provided by graders whose behaviour follows the noise model $P$. Let us focus on computing the expected number of pairwise relations in the ground truth ranking that are correctly recovered in the outcome of the rule. It suffices to consider every pair of exam papers $x,y\in [0,1]$ with $x<y$ (i.e., exam paper $x$ has a better rank in the ground truth compared to exam paper $y$) and add one point if $x$ has a better type than $y$ according to the ordering $\succ$, and half a point if both exam papers have the same type. In this last case, the tie is resolved uniformly at random and the probability that the correct pairwise relation will be recovered is $1/2$. Hence, denoting by $C$ the expected\footnote{Here, the expectation is taken over the randomness in the assignment of exam papers to bundles, in the student grading, as well as in the resolution of ties.} fraction of pairwise relations recovered by the rule (we will refine this notation in a while), and by $x\rhd \sigma$ the event that exam paper $x$ gets type $\sigma$ after grading, we have
\begin{align*}
C &= \int_0^1{\int_x^1{\left(\sum_{\sigma,\sigma':\sigma\succ\sigma'}{\Pr[x \rhd \sigma \text{ and } y \rhd \sigma']}+\frac{1}{2}\sum_{\sigma}{\Pr[x \rhd \sigma \text{ and } y \rhd \sigma]}\right)\ud y}\ud x}\\
&= \sum_{\sigma,\sigma':\sigma\succ\sigma'}{\int_0^1{\int_x^1{\Pr[x \rhd \sigma \text{ and } y \rhd \sigma']\ud y}\ud x}}+\frac{1}{2}\sum_{\sigma}{\int_0^1{\int_x^1{\Pr[x \rhd \sigma \text{ and } y \rhd \sigma]\ud y}\ud x}}
\end{align*}
The first sum runs over all pairs of different types $\sigma, \sigma'$ of $\calT_k$ with order $\sigma\succ\sigma'$ and the second sum runs over all types. The scary (at first glance) double integral can be hidden under the notation $W(\sigma,\sigma')$ to obtain
\begin{align}\label{eq:sum-of-weights}
C &= \sum_{\sigma,\sigma':\sigma\succ\sigma'}{W(\sigma,\sigma')}+\frac{1}{2}\sum_\sigma{W(\sigma,\sigma)}.
\end{align}
We will use the term {\em weight} to refer to the quantity $W(\sigma,\sigma')$. Our assumption for an infinite number of students nullifies any 
dependencies between the types
that exam papers $x$ and $y$ get after grading. So, the events $x\rhd \sigma$ and $y\rhd \sigma'$ are independent and the definition of the weight $W(\sigma,\sigma')$ becomes
\begin{align}\label{eq:weight}
W(\sigma,\sigma') &= \int_0^1{\int_x^1{\Pr[x \rhd \sigma]\cdot \Pr[y \rhd \sigma']\ud y}\ud x}.
\end{align}

Let us now compute the probability that exam paper $x$ gets type $\sigma=(\sigma_1, ..., \sigma_k)$. By considering all ways to distribute the entries of the type vector as ranks of an exam paper by the graders that handle it (ignoring symmetries), there are 
$$N(\sigma)=\frac{k!}{d_1!\cdot ... \cdot d_k!}$$
ways that the exam paper can get type $\sigma$, where $d_i$ is the number of graders that have the exam paper ranked $i$-th. Again, due to our assumption for infinitely many students and the uniform inclusion of them into bundles, the quality of each exam paper included in a bundle does not affect the quality of other exam papers (in the same or different bundles). Clearly, the grading by different students is performed without dependencies either. Denoting by $\EEE(x,\sigma_i)$ the event that exam paper $x$ is ranked $\sigma_i$-th in a bundle, the probability that $x$ is of type $\sigma$ is
\begin{align*} 
\Pr[x \rhd \sigma] &= N(\sigma) \prod_{i=1}^k{ \Pr[\EEE(x,\sigma_i)] }.
\end{align*}
To compute $\Pr[\EEE(x,\sigma_i)]$, it suffices to consider all possible true ranks that exam paper $x$ may have in a bundle and account for the probability of having such a rank and being ranked $\sigma_i$-th by the grader that is handling the bundle. Let us denote by $\EEE^*(x,j)$ the event that the true rank of $x$ in a bundle is $j$. Then,
\begin{align*} 
\Pr[x \rhd \sigma] &= N(\sigma) \prod_{i=1}^k \sum_{j=1}^k {  p_{\sigma_i,j} \Pr[\EEE^*(x,j)] }.
\end{align*}
Now, the probability $\Pr[\EEE^*(x,j)]$ is equal to the number of ways we can choose $j-1$ exam papers to be ahead of $x$, times the probability that all of them will indeed be ahead of $x$ in the bundle, times the probability that the rest $k-j$ exam papers in the bundle will have true ranks worse than $j$. We use $L_k$ to denote the set of all $k$-entry vectors $\ell=(\ell_1, ..., \ell_k)$ with $\ell_i\in [k]$ and, for compactness of notation, we abbreviate $\sum_{i=1}^k{\ell_i}$ by $|\ell|_1$. We have
\begin{align}\label{eq:simple}
\Pr[x \rhd \sigma] &= N(\sigma) \prod_{i=1}^k{ \sum_{j=1}^k{ p_{\sigma_i,j} {k-1 \choose j-1}x^{j-1}(1-x)^{k-j} }} \\\nonumber
&=  N(\sigma) \sum_{\ell\in L_k}{\prod_{i=1}^k{p_{\sigma_i,\ell_i} {k-1 \choose \ell_i-1}x^{\ell_i-1}(1-x)^{k-\ell_i} }} \\\label{eq:theta}
&= N(\sigma) \sum_{\ell\in L_k}{\bigg(\prod_{i=1}^k{ p_{\sigma_i,\ell_i} {k-1 \choose \ell_i-1}} \bigg)  x^{|\ell|_1-k}(1-x)^{k^2-|\ell|_1}}, 
\end{align}
where the second equality is obtained by exchanging the sum and product operators. Using the fact that $(1-x)^m = \sum_{j=0}^m{ {m \choose j}(-1)^jx^j}$ for $m=k^2-|\ell|_1$, we obtain
\begin{align}\nonumber
\Pr[x \rhd \sigma] &= N(\sigma) \sum_{\ell\in L_k}{\bigg( \prod_{i=1}^k{ p_{\sigma_i,\ell_i} {k-1 \choose \ell_i-1}} \bigg)  x^{|\ell|_1-k} \sum_{j=0}^{k^2-|\ell|_1}  { {k^2-|\ell|_1 \choose j}(-1)^jx^j }} \\ \label{eq:prob-x-sigma-is-a-poly}
&= N(\sigma) \sum_{\ell\in L_k}{\sum_{j=0}^{k^2-|\ell|_1}{\bigg( \prod_{i=1}^k{ p_{\sigma_i,\ell_i} {k-1 \choose \ell_i-1}} \bigg) { {k^2-|\ell|_1 \choose j}(-1)^j  x^{|\ell|_1-k+j} } }}.
\end{align}
Interestingly, $\Pr[x \rhd \sigma]$ is a univariate polynomial of degree $k^2-k$. Then, the double integral in equation (\ref{eq:weight}) can be computed analytically. The computation is tedious but straightforward; see Appendix \ref{app:sec:compute-integral}.

\subsection{Computing optimal type-ordering aggregation rules}\label{subsec:optimal}
The approach in Section \ref{subsec:framework} suggests a general way of evaluating the performance of any type-ordering aggregation rule. In order to compute the expected number of correctly recovered pairwise relations, it suffices to use equations (\ref{eq:sum-of-weights}), (\ref{eq:weight}), and (\ref{eq:prob-x-sigma-is-a-poly}). Equation (\ref{eq:prob-x-sigma-is-a-poly}) can be used to obtain $\Pr[x\rhd \sigma]$, which is then used in equation (\ref{eq:weight}) to compute the weights (for any possible pair of types $\sigma$ and $\sigma'$). Finally, equation (\ref{eq:sum-of-weights}) returns the expected number of correctly recovered pairwise relations. 

Of course, the expected number of correctly recovered pairwise relations is not the only performance objective one would like to measure. For example, we could simply ignore exam papers that are very close to each other in the ground truth ranking. The ground truth ranking is mostly a modelling assumption and it should not be very restrictive in the evaluation of an aggregation rule. So, we could just measure the expected number of correctly recovered pairwise relations between pairs of exam papers with ranks in the ground truth that differ by at least $a\%$ (for small values such as $5\%$). Another possibility would be to ignore pairwise relations between pairs of exam papers that have both very low rank in the ground truth. For example, why is it important to recover correctly the pairwise relation between the students that have true ranks $80\%$ and $95\%$? A general objective in this direction would be to measure the correctly recovered relations between pairs of exam papers that involve one with true rank in the top $a\%$ (e.g., $20\%$). 

Our theoretical framework can be easily extended to handle such cases using many different performance objectives. In general, a {\em bivariate} performance objective is defined by a bivariate function $f:[0,1]^2\rightarrow [0,1]$ which returns the importance of measuring a correctly recovered relation between two students $x$ and $y$ with $x\leq y$. In the presentation of our framework in Section \ref{subsec:framework}, we have assumed such a function with $f(x,y)=1$ for every pair of students. The two scenarios of the previous paragraph can be captured by the function
(i) $f(x,y)=1$ when $y-x\geq a\%$ and $f(x,y)=0$ otherwise, and (ii) $f(x,y)=1$ when $x\leq a\%$ (and $x\leq y$) and $f(x,y)=0$ otherwise. Many other performance objectives can be defined including ones in which the function $f$ returns fractional values between $0$ and $1$.

The only modification in the computation of Section \ref{subsec:framework} is in the computation of the weights which should now become 
\begin{align}\label{eq:new-weight}
W(\sigma,\sigma') &= \int_0^1{\int_x^1{f(x,y)\cdot \Pr[x \rhd \sigma]\cdot \Pr[y \rhd \sigma']\ud y}\ud x}.
\end{align}
In order to capture the generality of the scenarios considered, we overload the notation for the performance measure $C$ to specify the bundle size $k$, the aggregation rule $\succ$, the noise matrix $P$ describing the grading behaviour, and the bivariate performance objective $f$. 

\begin{theorem}\label{thm:sum-of-weights-refined}
Consider a type-ordering aggregation rule $\succ$ that is applied on $k$-sized partial rankings from an infinite population of students with grading behaviour that follows a noise matrix $P$. Then, the fraction of correctly recovered pairwise relations that satisfy the performance objective given by the bivariate function $f$ is 
\begin{align}\label{eq:sum-of-weights-refined}
C(k,\succ,P,f) &= \sum_{\sigma,\sigma':\sigma\succ\sigma'}{W(\sigma,\sigma')}+\frac{1}{2}\sum_\sigma{W(\sigma,\sigma)},
\end{align}
where $W(\sigma,\sigma')$ is given by (\ref{eq:new-weight}) and $\Pr[x\rhd \sigma]$ is in turn given by (\ref{eq:prob-x-sigma-is-a-poly}).
\end{theorem}

Note that the weights do not depend on the aggregation rule at all. They depend on the grading behaviour and the bivariate performance objective. Instead, the aggregation rule determines only the particular weights that should be summed up in order to compute $C(k,\succ,P,f)$. This means that, once we have information about the bundle size, the grading behaviour, and the desired bivariate performance objective, we can seek for the type-ordering aggregation rule that is optimal for this particular scenario. All we have to do is to compute the type-ordering aggregation rule $\succ$ that maximizes $C(k,\succ,P,f)$ which, actually, translates to computing an ordering of the types so that the leftmost summation in the definition (\ref{eq:sum-of-weights-refined}) is maximized.

It is not hard to see that the problem is equivalent to solving a maximization variant of the \textit{feedback arc set} (FAS) problem. On input a complete directed graph $G=(V,E,w)$ with non-negative edge weights, the objective of FAS is to find an ordering $\succ$ of the nodes of $G$  such that $\sum_{u,v:u\succ v}{w(u,v)}$ (i.e., the total weight of ``consistently directed'' edges with respect to $\succ$) is maximized. In our case, the input is a complete directed graph that has a node for each type $\sigma\in \calT_k$. A directed edge from a node corresponding to type $\sigma$ to a node corresponding to type $\sigma'$ has weight $W(\sigma,\sigma')$.  The next statement should now be obvious.

\begin{theorem}\label{thm:hardness}
Computing the optimal type-ordering aggregation rule for a scenario involving an infinite population of students, specific bundle size, grading behaviour, and desired bivariate performance objective is equivalent to solving feedback arc set on an edge-weighted complete directed graph.
\end{theorem}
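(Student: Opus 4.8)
The plan is to read Theorem~\ref{thm:hardness} as a faithful reformulation of the maximization implicit in the preceding theorem, so the first step is to separate the ordering-dependent part of the objective from the ordering-independent part. In the expression (\ref{eq:sum-of-weights-refined}) for $C(k,\succ,P,f)$, the diagonal sum $\frac{1}{2}\sum_\sigma W(\sigma,\sigma)$ does not reference $\succ$ at all, so it is a constant over the set of all strict orderings of ${\cal T}_k$. Consequently the ordering $\succ$ that maximizes $C(k,\succ,P,f)$ is exactly the ordering that maximizes the off-diagonal sum $\sum_{\sigma,\sigma':\sigma\succ\sigma'} W(\sigma,\sigma')$, and it suffices to analyze this single term. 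I would make this observation precise first, since it is what lets the tie-breaking (diagonal) contribution drop out of the optimization entirely.

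Next I would build the weighted digraph explicitly and exhibit the correspondence. Create a complete directed graph $D$ whose node set is ${\cal T}_k$, and assign to the arc from $\sigma$ to $\sigma'$ the weight $W(\sigma,\sigma')$ defined in (\ref{eq:new-weight}); each such weight is a finite, effectively computable number, being the definite integral of a polynomial obtained from (\ref{eq:prob-x-sigma-is-a-poly}). A strict ordering $\succ$ of ${\cal T}_k$ is, by definition, a linear arrangement of the nodes of $D$, and under this bijection the quantity $\sum_{\sigma\succ\sigma'} W(\sigma,\sigma')$ is precisely the total weight of the arcs pointing \emph{forward}, i.e.\ from a node to one of higher rank in the arrangement. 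Maximizing this over all linear arrangements is the standard maximization form of weighted feedback arc set on a complete digraph. This settles the forward direction: every instance of our optimization problem is verbatim an instance of weighted feedback arc set, and the optimal type-ordering rule is read off from its solution. I regard this direction as routine, which is presumably why the paper calls the statement obvious.

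The delicate step, and the one I expect to be the genuine obstacle, is the reverse direction required for a literal two-way equivalence (and for any transfer of NP-hardness to our problem): showing that an \emph{arbitrary} edge-weighted complete digraph arises as the weight matrix $\big(W(\sigma,\sigma')\big)$ of some triple $(k,P,f)$. This is not immediate, because the weights produced by (\ref{eq:new-weight}) are highly constrained by their integral form. For the base objective $f\equiv 1$, for instance, the integrand of $W(\sigma,\sigma')+W(\sigma',\sigma)$ is symmetric in $x$ and $y$ on the triangle $\{x\le y\}$, which forces $W(\sigma,\sigma')+W(\sigma',\sigma)=\big(\int_0^1 \Pr[x\rhd\sigma]\,\ud x\big)\big(\int_0^1 \Pr[y\rhd\sigma']\,\ud y\big)$; that is, the symmetric parts of the weights are pinned down by the type marginals and cannot be prescribed freely. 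A naive attempt to realize a target weight matrix will therefore fail. I would address this by exploiting the extra degrees of freedom in the performance objective $f$ (and, if needed, in $k$ and $P$): since $f(x,y)$ reweights each pairwise contribution almost arbitrarily, the aim would be to show that the induced map from objectives to the relevant (antisymmetric) weight matrices is surjective onto the appropriate cone, or else to reduce directly from a structured NP-hard subfamily of weighted feedback arc set whose instances already have the constrained form above. Determining exactly which weight matrices are realizable, and hence whether Theorem~\ref{thm:hardness} holds as a literal identity of problems or only as the easy-direction reduction, is where the real work lies.
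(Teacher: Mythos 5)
Your first two paragraphs are exactly the paper's argument: the paper's entire proof consists of constructing the complete directed graph on ${\cal T}_k$ with arc weights $W(\sigma,\sigma')$ from (\ref{eq:new-weight}) and observing that maximizing the ordering-dependent part of (\ref{eq:sum-of-weights-refined}) is precisely the maximization form of weighted feedback arc set on that graph, after which it declares the statement obvious. Your third paragraph correctly flags that the reverse direction of a literal two-way equivalence is not established, but the paper does not establish it either and never uses it --- it does not claim the rule-optimization problem is NP-hard, only that FAS is NP-hard in general and that the particular instances arising here turn out to be tractable --- so ``equivalent'' should be read as the one-way reduction you have already given, and your proof matches the paper's.
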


FAS is NP-hard even in its very simple variant on unweighted tournaments \citep{A06}. The particular weighted version we consider here admits a PTAS \citep{KS07}. Unfortunately, the solutions that such a PTAS can guarantee in reasonable time are quite far from optimality and the resulting type-ordering aggregation rule will consequently have highly suboptimal performance. Fortunately, the FAS instances that we had to solve in order to compute optimal rules have a very nice structure for all the scenarios considered. This structure allows us to compute the optimal FAS solution (almost) exactly by a straightforward algorithm that we present in the following. We strongly believe that this nice property holds in {\em any} scenario that can appear in practice.

Let us assume that we would like to solve FAS on an edge-weighted complete directed graph $G=(V,E,w)$ and to compute an ordering of the nodes of $V$ so that the total weight of edges in the direction that is consistent to the ordering is as high as possible. First observe that if two opposite directed edges have the same weight, the ordering of its endpoints does not affect the contribution of the consistently directed edge. So, the decision about the relative order of such {\em non-critical} node pairs can be postponed until the very end of the algorithm and any decision about them will be just fine. Now, consider two nodes $u$ and $v$ of $G$ such that $w(u,v)>w(v,u)$; then, the consistently directed edge that we would like to have in the final solution is $(u,v)$. We will call such pairs of nodes {\em critical} pairs. Decisions about the ordering of critical pairs of nodes have to be taken first. An ideal situation would be if after deciding the critical node pairs, we came up with a partial ordering of all nodes that participate in at least one critical pair. The ordering could then be completed by appropriate decisions about non-critical node pairs. And, luckily, this process would have resulted in an optimal solution for FAS since every pair of nodes would have the maximum possible contribution to the objective. Of course, things are not as easy in general since the decisions about critical pairs may lead to cycles of nodes, which cannot be part of the final ordering.

Our algorithm proceeds as follows. It takes as input an edge-weighted complete directed graph $G=(\calT_k,E,W)$ with $\calT_k$ as the node set and weight $W(\sigma,\sigma')$ (computed using (\ref{eq:new-weight})) for every directed edge from type $\sigma$ to type $\sigma'$. Our algorithm builds an auxiliary unweighted directed graph $H=(\calT_k,A)$ again over the types. For every critical pair of types $\sigma,\sigma'$ with $W(\sigma,\sigma')>W(\sigma',\sigma)$, the auxiliary graph has a directed edge from type $\sigma$ to type $\sigma'$. The next step is to compute all strongly connected components of $H$; two types $\sigma$ and $\sigma'$ belong to the same strongly connected component if $H$ contains a directed path from $\sigma$ to $\sigma'$ and a directed path from $\sigma'$ to $\sigma$. This computation is easily performed by computing breadth first search trees rooted at every node of $H$. After this step, the ordering of the types in different strongly connected components is irrevocably decided. In order to decide the ordering of types within the same strongly connected component, we use brute force on the corresponding subgraph of $G$. If the size of a strongly connected component is so large that brute forcing is prohibitive, we just order the types within the component according to a Borda ordering (breaking ties uniformly at random). As a final straightforward step, we decide the order of non-critical node pairs.

The approach to use Borda ordering when brute forcing is very costly in terms of running time, might give the impression that the outcome of the above algorithm is always very close to a Borda ordering. Surprisingly, our algorithm returns Borda orderings (or orderings that are very close to Borda) very rarely. One such situation is presented in the next section where we show that Borda is indeed the optimal type-ordering aggregation rule in all scenarios that involve perfect graders. For imperfect graders, brute forcing has been proved extremely useful as the vast majority of strongly connected components are small. We report statistical information from the size distribution of strongly connected components in Section \ref{sec:exp} (see Table \ref{tab:component-size} in Section \ref{subsec:optimal-exp}).

\subsection{Borda is optimal for perfect graders}\label{subsec:borda-proof}
We will now exploit our theoretical framework to obtain our first concrete result.

\begin{theorem}\label{thm:borda-perfect}
For every scenario that involves an infinite population of perfect graders, specific bundle size, and a bivariate performance objective, Borda (with any tie-breaking rule) is the optimal type-ordering aggregation rule.
\end{theorem}

\begin{proof}
Assume that we have a scenario with a bundle size of $k$, perfect grading (i.e., a $k\times k$ identity noise matrix), and the bivariate function $f$ that represents the performance objective. 

We first compute the probability that exam paper $x$ gets type $\sigma$ using (\ref{eq:simple}) and the fact that $p_{\sigma_i,\ell}=1$ if $\sigma_i=\ell$ and $p_{\sigma_i,\ell}=0$ otherwise. Hence, 
\begin{align*}
\Pr[x \rhd \sigma] &= N(\sigma) \prod_{i=1}^{k}{{k-1\choose \sigma_i-1}x^{\sigma_i-1}(1-x)^{k-\sigma_i}}\\
&= N(\sigma) \left(\prod_{i=1}^k{k-1\choose \sigma_i-1}\right) x^{k^2-B(\sigma)}(1-x)^{B(\sigma)-k}.
\end{align*}

Now, consider two exam papers with ranks $x$ and $y$ in the ground truth such that $x<y$ and let $\sigma$ and $\sigma'$ be two types. Using the above equality, we obtain 
\begin{align}\label{eq:RHS}
\frac{\Pr[x \rhd \sigma]\Pr[y \rhd \sigma']}{\Pr[x \rhd \sigma']\Pr[y \rhd \sigma]}=\left(\frac{y(1-x)}{x(1-y)}\right)^{B(\sigma)-B(\sigma')}.
\end{align}
Since $y>x$, it is also $1-x>1-y$ and the right hand side of the last equation is above, equal, or below $1$ if and only if the quantity $B(\sigma)-B(\sigma')$ is positive, zero, or negative. Hence, the quantity $$\Pr[x \rhd \sigma]\Pr[y \rhd \sigma'] - \Pr[x \rhd \sigma']\Pr[y \rhd \sigma]$$ and the Borda score difference $B(\sigma)-B(\sigma')$ between the two types $\sigma$ and $\sigma'$ have the same sign. Now, let $\sgn:\mathbb{R}\rightarrow\{-1,0,1\}$ be the signum function. We have that 
\begin{align*}
&\sgn\left(W(\sigma,\sigma')-W(\sigma',\sigma)\right) \\
&= \sgn\left(\int_{0}^{1}{\int_{x}^{1}{f(x,y)(\Pr[x \rhd \sigma]\Pr[y \rhd \sigma'] - \Pr[x \rhd \sigma']\Pr[y \rhd \sigma])\ud y}\ud x}\right)\\
&= \sgn\left(B(\sigma)-B(\sigma')\right)
\end{align*}
This implies that any Borda ordering $\succ$ of the types maximizes the quantity $\sum_{\sigma,\sigma':\sigma\succ\sigma'}{W(\sigma,\sigma')}$ and, consequently, the quantity $C(k,\succ,I,f)$; the theorem follows.
\end{proof}

The statement of Theorem~\ref{thm:borda-perfect} is rather surprising as Borda is among the simplest type-ordering aggregation rules. For example, when $k=6$, Borda classifies the exam papers into only $31$ different levels (based on their Borda scores) while there are type-ordering rules that exploit a more refined classification of the papers into $462$ different levels; the gap is much higher for larger values of $k$. Theorem~\ref{thm:borda-perfect} essentially says that this extra power is not at all necessary and Borda is always the best choice when perfect grading is used.


\section{Validation of our framework}\label{sec:exp}
In this section, we present our field experiments and simulation results that validate the theoretical framework we developed in Section~\ref{sec:rules}.

\subsection{Building realistic noise models using field experiments}\label{subsec:real-exp}
We have run two field experiments with the students that attended the course on Computational Complexity in the Department of Computer Engineering and Informatics of the University of Patras during the Spring 2015 and Spring 2016 semesters. This is a course that the first author teaches during the last few years and usually includes an optional mid term exam. As it is typically the case in Greek universities, cardinal integer and half-integer scores between $0$ and $10$ are used in such exams and they represent how correct the answers of the students to the exam questions are. Hence, these cardinal scores represent the success of the students in the exam in absolute terms.

In our experiments, our goal has been to investigate how effective the students can be in ordinal grading. For this purpose, we created hypothetical exams with three questions and prepared several answers for them. In particular, for the 2015 experiment, we prepared 16 different answers to question 1, 12 answers to question 2, and 8 answers to question 3. Combinations of these answers into all different ways resulted in a pool of 1536 different exam papers. We created bundles of size $6$ from this pool. Each student was given a bundle of exam papers which was asked to rank (for a bonus grade). Note that the selection of papers in each bundle was not arbitrary. The answers for the questions belonged to different levels of correctness and included excellent ones, almost excellent ones with a minor issue not fully resolved, answers in the right direction but with sloppy write-up, completely incorrect answers, no answer at all, etc. Specifically, we had 7, 6, and 5 different levels of correctness for the answers in questions 1, 2, and 3, respectively. For the 2016 experiment, these numbers were only slightly modified. When bundles were formed, we imposed the following constraint for any pair of exam papers $A$ and $B$ in a bundle: if the correctness level of paper $A$ in an answer is strictly higher than that in paper $B$, then paper $B$ cannot have a strictly higher correctness level than $A$ in any other answer. Furthermore, there was at least one question for which the answers (in $A$ and $B$) had different levels of correctness. This guaranteed a strict ranking of the exam papers in each bundle and, furthermore, that this ranking would be well-defined and independent of any assumptions about the importance of the different questions. 

In this ranking exercise, each student was given a bundle of $6$ exam papers and returned a ranking of them. In addition, the students participated in the traditional mid term exam. This allowed us to quantify the correlation between their grading behaviour and their success in the traditional exam. So, the outcome of each experiment is a list consisting, for each student, of a ranking of the exam papers in her bundle (as a permutation of the correct ranking) together with her performance in the exam. These data are depicted in Tables~\ref{tab:2015} (for the 2015 experiment), \ref{tab:2016a}, and~\ref{tab:2016b} (for the 2016 experiment) in Appendix~\ref{app:sec:data}. Figures \ref{fig:bubbles}(a) and \ref{fig:bubbles}(b) show the correlation between grading error (Kendall-tau distance of the ranking returned by each grader from the correct ranking of the exam papers in her bundle) and quality for the $136$ and $241$ students that participated in the mid term exams in 2015 and 2016, respectively. Observe that, in 2015, the grading performance of the majority of the non-excellent students seems to be uniformly distributed between average and excellent, with just a few under-performing outliers, whereas the picture is more clear in 2016 and the grading performance has improved. An explanation for this grading behaviour is that, even though the students have participated in many exams like the mid term in the past and have a very good idea of what they are expected to do, in 2015, it was the very first time they were asked to rank. In contrast, before the 2016 exam, we made the data we collected in the previous year available in order to help the students prepare for the ordinal grading task as well.

\begin{figure*}[ht]
\centering
\begin{subfigure}{0.44\textwidth}
  \includegraphics[trim=20 200 30 235, clip=true, scale=0.33]{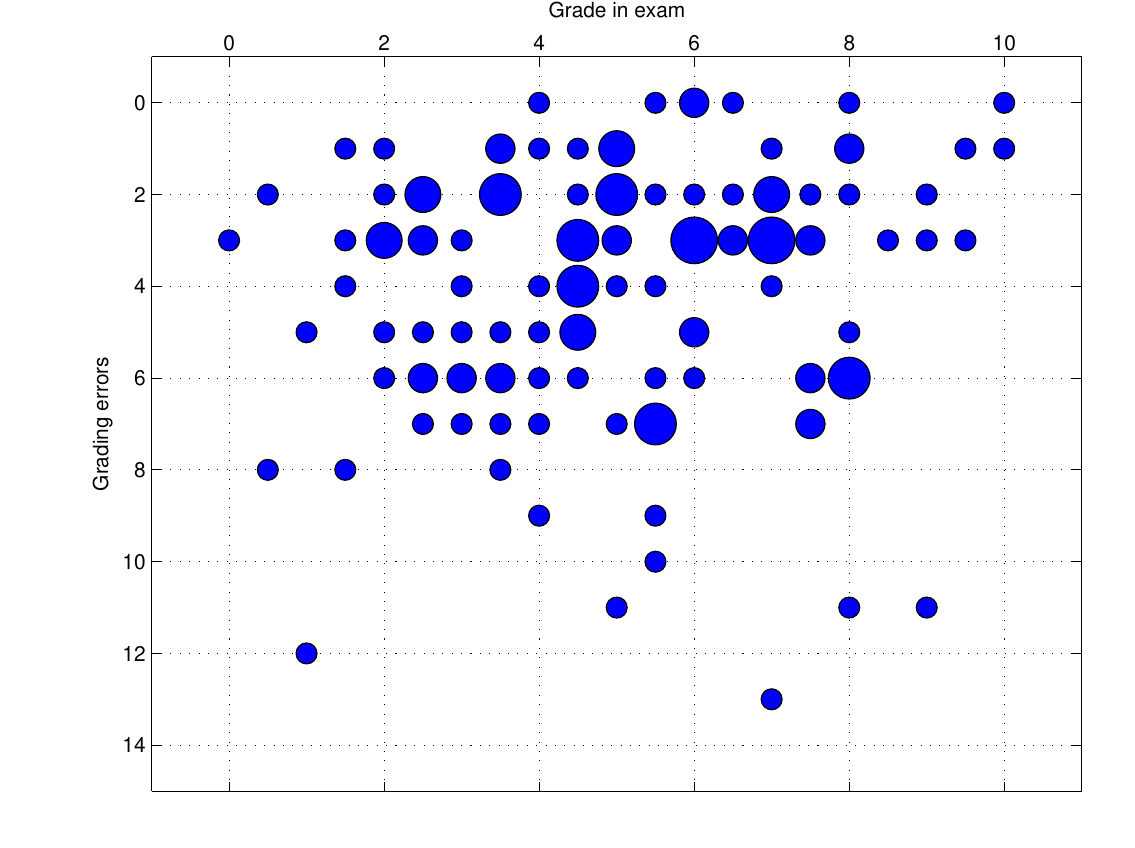}
  \caption{Realistic noise 2015}
\end{subfigure}
\begin{subfigure}{0.44\textwidth}
  \includegraphics[trim=20 200 30 180, clip=true, scale=0.33]{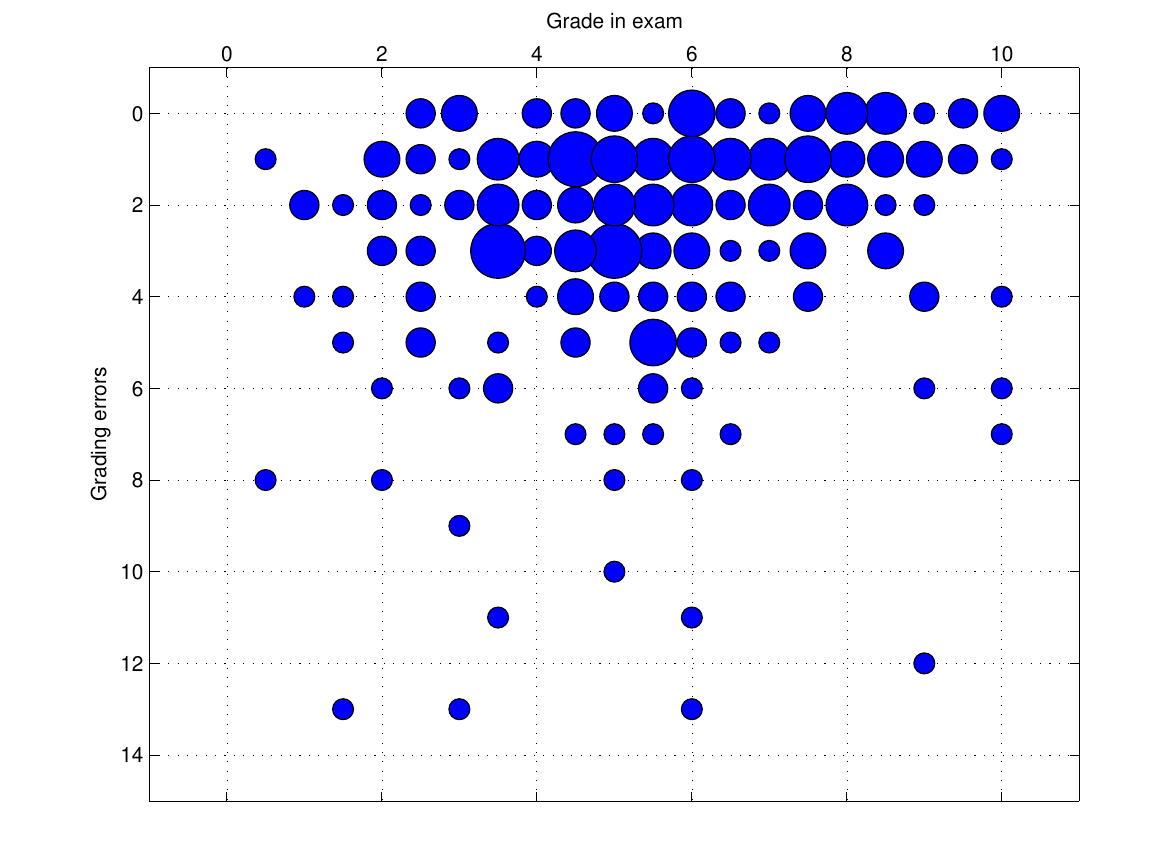}
  \caption{Realistic noise 2016}
\end{subfigure}
\begin{subfigure}{0.44\textwidth}
  \includegraphics[trim=20 200 30 180, clip=true, scale=0.33]{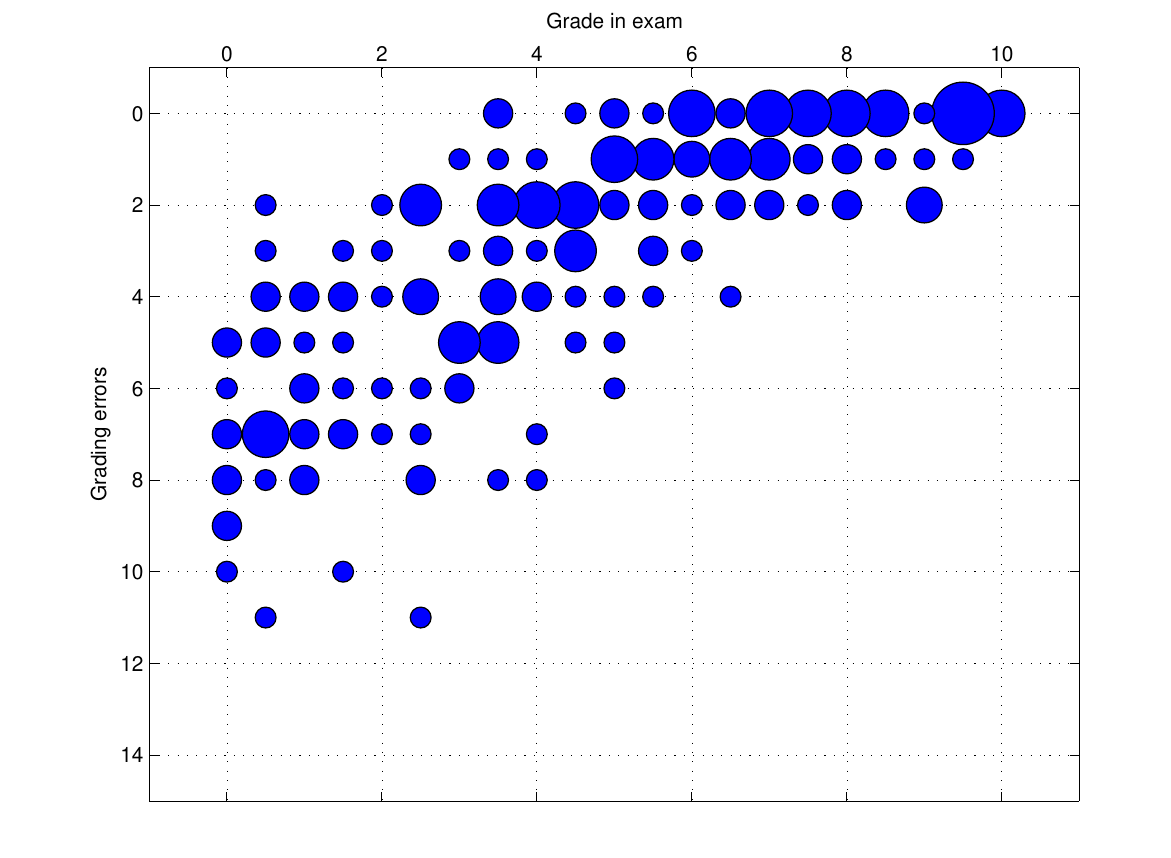}
  \caption{Mallows noise}
\end{subfigure}  
\begin{subfigure}{0.44\textwidth}
  \includegraphics[trim=20 200 30 180, clip=true, scale=0.33]{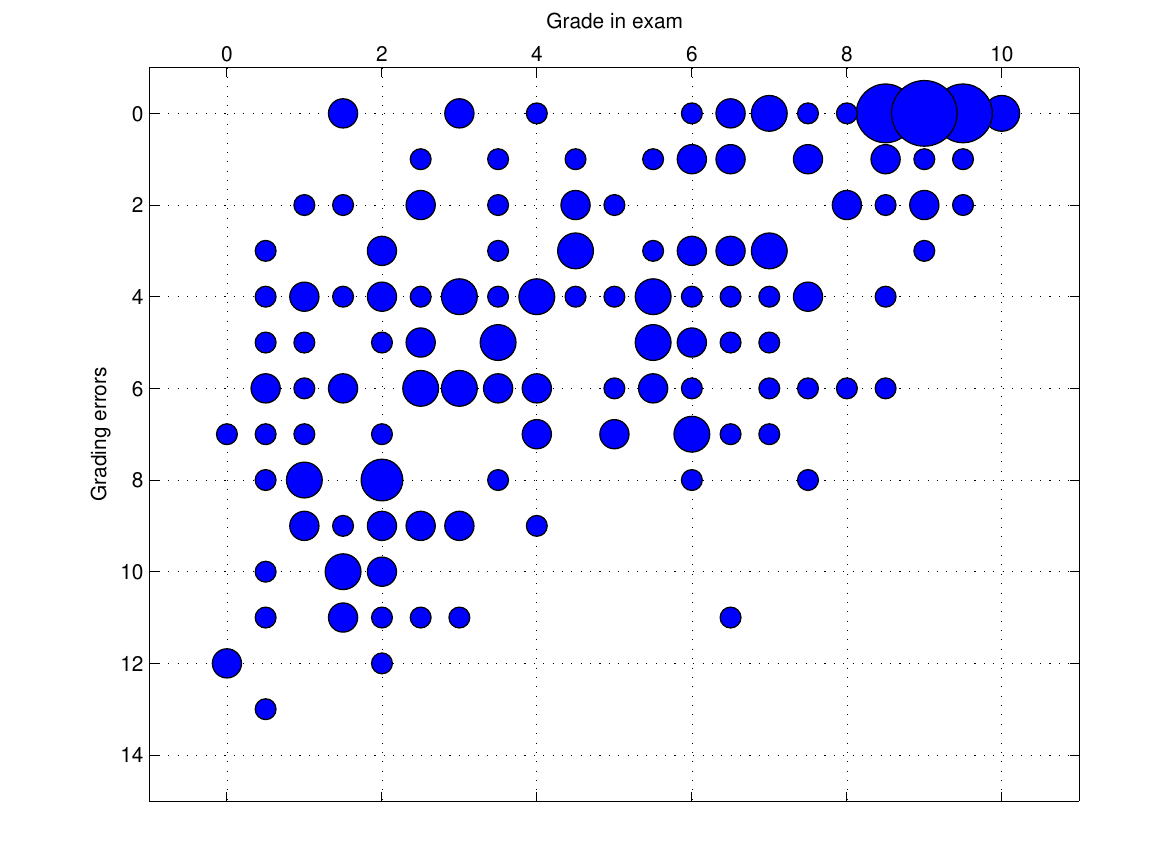}
  \caption{RUM noise}
\end{subfigure} 
\caption{Correlation between (cardinal) grade of students in the traditional exam and grading error (Kentall-tau distance from the correct ranking). Data refer (a) to the $136$ students that participated in our field experiment in 2015, (b) to $241$ students that participated in our field experiment in 2016, (c) to $200$ random students/graders drawn from the Mallows distribution, and (d) to $200$ random students/graders drawn from the RUM distribution. Each bubble corresponds to a number of students that is proportional to its area. The data in (a) and (b) have been obtained by processing the data in Tables~\ref{tab:2015},~\ref{tab:2016a}, and~\ref{tab:2016b}.}
\label{fig:bubbles}
\end{figure*}

For comparison, we have also plotted this correlation for randomly chosen students for the Mallows and RUM populations in Figures \ref{fig:bubbles}(c) and \ref{fig:bubbles}(d). Both figures show data about $200$ students as a representative number in between the number of participants in the two field experiments. For the Mallows population, the correlation between student quality (to be thought of as equivalent to the success in the traditional exam) and grading performance is clear. However, for the RUM population, the correlation seems to be more noisy.

The data depicted in Tables~\ref{tab:2015}, \ref{tab:2016a}, and~\ref{tab:2016b} have been used extensively in simulated exams with two ``realistic'' populations. Essentially, each student in the corresponding experiments serves as the support of the distribution of the grading behaviour of the realistic populations. For sampling students from these populations, we draw {\em pairs} (consisting of a ranking of exam papers and the corresponding quality) from the tables uniformly at random and independently, with the quality values slightly perturbed (randomly) so that a strict ground truth ranking of the sampled students is obtained.

The information about the grading behaviour of students in our two experiments has also been distilled into the noise matrices
\begin{align} \label{eq:real-2015}
P_{2015} = \left[
\begin{array}{cccccc}
0.4632 & 0.2573	& 0.1029 & 0.0588 &	0.0588  & 0.0588  \\
0.2059 & 0.3162	& 0.2279 & 0.1103 &	0.0662  & 0.0735 \\ 
0.1617 & 0.1912	& 0.2574 & 0.2059 &   0.1324 & 0.0515  \\
0.1029 & 0.1176	& 0.1912 & 0.2426 &	0.2794 & 0.0662  \\
0.0441  & 0.0661	& 0.1397 & 0.2206 &	0.3015 & 0.2279 \\
0.0221  & 0.0514	& 0.0808 & 0.1618 &	0.1618 & 0.5221 
\end{array}  \right]
\end{align}
and
\begin{align} \label{eq:real-2016}
P_{2016} = \left[
\begin{array}{cccccc}
0.6224 & 0.2199 & 0.0788 & 0.0373 & 0.0124 & 0.0290 \\
0.1826 & 0.4896 & 0.1867 & 0.1037 & 0.0249 & 0.0124 \\
0.0664 & 0.1494 & 0.4647 & 0.1992 & 0.0788 & 0.0415 \\
0.0664 & 0.0664 & 0.1411 & 0.4315 & 0.2116 & 0.0830\\
0.0456 & 0.0498 & 0.0913 & 0.1618 & 0.4730 & 0.1784 \\
0.0166 & 0.0249 & 0.0373 & 0.0664 & 0.1992 & 0.6556 \\
\end{array}  \right]
\end{align}
which are used when applying our theoretical framework.
We will use the terms {\em realistic 2015} and {\em realistic 2016} to refer to the noise model represented by matrices $P_{2015}$ and $P_{2016}$, respectively. The information in the matrices was obtained by measuring the frequency that the $i$-th ranked exam paper by students should be correctly ranked at position $j$. For example, in 2015, $28$ out of $136$ students ranked third an exam paper in their bundles which should have been ranked fourth; thus, cell $(3,4)$ in $P_{2015}$ contains the value $28/136\approx 0.2059$. 

We have also implemented the two processes that define Mallows and RUM graders (see Section~\ref{subsec:grading}) for bundles of size $6$, and use them in simulations. By sampling $10^9$ Mallows and RUM students with uniform qualities and simulating their grading behaviour, we have computed the corresponding noise matrices  
\begin{align}\label{eq:mallows}
P_{\text{mallows}} &= \left[
\begin{array}{cccccc}
0.6337	& 0.1753 &	0.0824 &0.0494 & 0.0339	& 0.0253 \\
0.1753	& 0.5112 &	0.1549 &0.0768 & 0.0479	& 0.0339 \\
0.0824  & 0.1549 &	0.4865 &0.1500 & 0.0768 & 0.0494 \\
0.0494	& 0.0768 &	0.1500 &0.4865 & 0.1549	& 0.0824 \\
0.0339	& 0.0479 &	0.0768 &0.1549 & 0.5112	& 0.1753 \\
0.0253	& 0.0339 &	0.0494 &0.0824 & 0.1753	& 0.6337
\end{array}  \right]
\end{align}
and
\begin{align}\label{eq:rum} 
P_{\text{rum}} &= \left[
\begin{array}{cccccc}
0.5046  &  0.1587  &  0.0963  &  0.0824  &  0.0793  &  0.0788\\
0.1587  &  0.4048  &  0.1709  &  0.1026  &  0.0836  &  0.0793\\
0.0963  &  0.1709  &  0.3746  &  0.1732  &  0.1026  &  0.0823\\
0.0824  &  0.1026  &  0.1732  &  0.3746  &  0.1709  &  0.0963\\
0.0793  &  0.0836  &  0.1026  &  0.1709  &  0.4048  &  0.1586\\
0.0788  &  0.0793  &  0.0824  &  0.0963  &  0.1586  &  0.5046
\end{array}  \right]
\end{align}

The noise matrices $P_{2015}$, $P_{2016}$, $P_{\text{mallows}}$, and $P_{\text{rum}}$ are used in the computation of the optimal type-ordering aggregation rules for the corresponding student populations according to the methodology developed in Sections \ref{subsec:framework} and \ref{subsec:optimal}. We stress again that these noise matrices do not include any information about the correlation between the grading behaviour and the quality of the student that acts as grader. This is a feature that our theoretical framework completely neglects. In contrast, this correlation is implemented in our simulations. Surprisingly, as we will see in the next section, our theory leads to very accurate performance predictions, in spite of its several simplifying assumptions compared to practice.

\subsection{On the accuracy of theoretical performance predictions}\label{subsec:optimal-exp}
We have applied the theoretical framework that we developed in Sections~\ref{subsec:framework} and~\ref{subsec:optimal} in order to obtain the optimal type-ordering aggregation rules for several scenarios together with theoretical predictions regarding their performance. In all scenarios, we use the same bundle size of $k=6$ and distinguish between the realistic 2015, the realistic 2016, the Mallows, and the RUM noise models by using the corresponding matrices $P_{2015}$, $P_{2016}$, $P_{\text{mallows}}$ and $P_{\text{rum}}$ defined in equations (\ref{eq:real-2015}), (\ref{eq:real-2016}), (\ref{eq:mallows}) and (\ref{eq:rum}), respectively. As bivariate performance objectives, we have considered the following:
\begin{itemize}
\item all2all: the total number of all correctly recovered pairwise relations. The corresponding bivariate function is defined as $f(x,y)=1$ for $x,y\in [0,1]$ with $x\leq y$ and $f(x,y)=0$ otherwise;
\item th-10\% and th-50\%: the total number of correctly recovered relations between pairs that include an exam paper that is ranked in the top $10\%$ and top $50\%$ in the ground truth, respectively, i.e., $f(x,y)=1$ if $x\leq 0.1$ and $x\leq 0.5$ and $x\leq y$, respectively;
\item acc-2\% and acc-5\%: the total number of correctly recovered relations between pairs with positions that differ by at least $2\%$ and $5\%$ in the ground truth, respectively, i.e., $f(x,y)=1$ if $y-x\geq 0.02$ and $y-x\geq 0.05$, respectively.
\end{itemize}

For each scenario, we use (\ref{eq:new-weight}) and (\ref{eq:prob-x-sigma-is-a-poly}) to compute the weight $W(\sigma,\sigma')$ for every pair of types $\sigma$ and $\sigma'$ from $\mathcal{T}_6$. Then, following Theorem~\ref{thm:hardness}, we solve the corresponding instance of FAS (as described in Section~\ref{subsec:optimal}) to compute the type-ordering aggregation rule that is optimal (of course, under the simplifying assumptions of our theoretical framework) for the particular scenario. The theoretical prediction of performance is then given by (\ref{eq:sum-of-weights-refined}) from Theorem~\ref{thm:sum-of-weights-refined}.

Computations required for the application of our theoretical framework (i.e., those described in Appendix \ref{app:sec:compute-integral} for the computation of the weights as well as the FAS algorithm described in Section \ref{subsec:optimal-exp}) have been automated. All the computational results that we report in the following have been obtained using an Intel 12-core i7 machine with 32Gb of RAM running Windows 7. Our methods have been implemented in C using the GNU Multiple Precision Arithmetic Library (GMP) and in Matlab R2013a. In particular, high precision is absolutely necessary in order to compute the weights even for bundles of size $6$ since, by inspecting equations (\ref{eq:new-weight}) and (\ref{eq:prob-x-sigma-is-a-poly}) carefully (see also Appendix~\ref{app:sec:compute-integral} for a detailed discussion on the computation of the weights), we can see that there are products with more than $30$ factors and factorials of integers up to $30$ that are involved in the computations.

In all scenarios we have considered, the algorithm for solving FAS is fast. This is due to the fact that the strongly connected components have small size. In all cases, among the 462 different types that we can have for bundles of size $6$, more than 97\% of them form singleton components and the maximum component size never exceeded 50 (for the RUM model). Brute forcing has been used to order the types in strongly connected components of size up to $10$. For larger components, Borda orderings have been used as described in Section~\ref{subsec:optimal}. The distribution of the strongly connected components for the scenarios we considered is depicted in Table \ref{tab:component-size}.

\begin{table}[h]
\centering
\begin{tabular}{ccccccc}
\noalign{\hrule height 1.5pt}
\multicolumn{2}{c}{size}	&	1		& 	3--7	&	8--11	&	$\geq$ 12	&	max \\\noalign{\hrule height 1.5pt}
\multirow{4}{*}{\parbox[t]{1.5cm}{\centering realistic\\2015}} & 	all2all &	448		&	13		&	1		&		0		&	10	\\
& 	th-50\% &	460		&	2		& 	0		&		0		&	3	\\
& 	acc-2\% &	449		&	12		& 	1		&		0		&	10	\\
& 	acc-5\% & 	451		&	10		&	1		&		0		&	10	\\\hline
\multirow{4}{*}{\parbox[t]{1.5cm}{\centering realistic\\2016}} 
& 	all2all &	458		&	4		&	0		&		0		&	5	\\
& 	th-50\% &	460		&	2		& 	0		&		0		&	3	\\
& 	acc-2\% &	458		&	4		& 	0		&		0		&	5	\\
& 	acc-5\% & 	460		&	2		&	0		&		0		&	4	\\\hline
\multirow{4}{*}{mallows} 
& 	all2all &	453		&	6		&	2		&		1		&	20	\\
& 	th-50\% &	459		&	3		& 	0		&		0		&	4	\\
& 	acc-2\% &	449		&	10		& 	2		&		1		&	20	\\
& 	acc-5\% & 	449		&	12		&	0		&		1		&	20	\\\hline
\multirow{4}{*}{rum} 
& 	all2all &	443		&	10		&	2		&		7		&	50	\\
& 	th-50\% &	448		&	11		& 	2		&		1		&	17	\\
& 	acc-2\% &	439		&	14		& 	4		&		5		&	50	\\
& 	acc-5\% & 	435		&	16		&	10		&		1		&	50	\\
\noalign{\hrule height 1.5pt}
\end{tabular}
\caption{Distribution of the size of strongly connected components. Results about th-10\% are not shown; curiously, all strongly connected components are singletons in these cases.}
\label{tab:component-size}
\end{table} 

In parallel to the application of our theoretical framework (see again Figure \ref{fig:approach} which summarizes our overall approach), we have also performed extensive simulations for all scenarios considered. For each scenario, we have simulated exams with $10\,000$ students (as explained in Section~\ref{subsec:real-exp}), using the optimal type-ordering aggregation rule, that was obtained by applying our theoretical framework for the scenario, as discussed above. Tables \ref{tab:realistic-data} and \ref{tab:synthetic-data} contain the average values (from 1000 simulated exams) of the performance measure for each simulated grading scenario in columns labelled as ``simulation''. The columns labelled ``theory'' contain the theoretical performance predictions for the same aggregation rule and scenario. Data for perfect grading scenarios are reported in Table~\ref{tab:synthetic-data}, where Borda is the optimal type-ordering aggregation rule.

In contrast to the simplifying assumptions of our theoretical framework, correlation of grading behaviour and performance in the exam is a key feature in our simulations. Therefore, the information contained in Tables \ref{tab:realistic-data} and \ref{tab:synthetic-data} is rather surprising and shows that, in spite of our assumptions, our theory provides extremely accurate predictions for the performance of type-ordering aggregation rules in practice. Note that the values in Tables \ref{tab:realistic-data} and \ref{tab:synthetic-data} are percentages and we never observed differences beyond the second decimal point between the theoretically predicted value and the simulated one.\footnote{Even though we have consistently used exams with $10\,000$ students in all our simulations, data with smaller exams are also very close to the theoretically predicted values. For example, in simulations with 1000 exams with 1000 students in the realistic 2016 grading scenario with the all2all performance objective, the optimal rule and  Borda have average performance percentage of $85.76$ and $85.07$, compared to $85.69$ and $85.02$ in Table~\ref{tab:realistic-data}. Higher differences are observed for much smaller exams (respectively, $86.23$ and $85.55$ for $100$-student exams).} Also, note that the number of $10\,000$ students in our simulations is much lower than the vision for the most popular courses that will be offered by MOOCs in the near future; the predictions become even more accurate for higher numbers of students.

Borda has been used in all imperfect grading scenarios for comparison purposes. The optimal type-ordering aggregation rule can have a performance that is $3.5\%$ better than Borda (e.g., in the th-10\% scenario with RUM graders). However, in many cases, Borda is closer to optimality.

\begin{table}[ht]
\centering
\begin{tabular}{ccccccccc}
\noalign{\hrule height 1pt}\hline
noise   & \multicolumn{4}{c}{realistic grading 2015}                              & \multicolumn{4}{c}{realistic grading 2016}                              \\\hline
setting & \multicolumn{2}{c}{theory} & \multicolumn{2}{c}{simulation} & \multicolumn{2}{c}{theory} & \multicolumn{2}{c}{simulation} \\
method  & opt         & borda        & opt                 & borda                & opt         & borda        & opt                 & borda                \\\noalign{\hrule height 1.5pt}
all2all		&	80.01	&	79.57	&	80.09	&	79.57	&	85.70	&	85.02	&	85.69	&	85.02	\\
th-10\%	&	87.61	&	87.18	&	87.60	&	87.17	&	91.71	&	90.02	&	91.69	&	90.01	\\
th-50\%	&	83.62	&	83.43	&	83.62	&	83.43	&	88.64	&	88.06	&	88.63	&	88.06	\\
acc-2\%	&	81.27	&	80.73	&	81.27	&	80.74	&	87.08	&	86.39	&	87.08	&	86.38	\\
acc-5\%	&	82.97	&	82.42	&	82.97	&	82.42	&	89.01	&	88.31	&	89.01	&	88.30	\\
\noalign{\hrule height 1pt}\hline
\end{tabular}
\caption{Performance of optimal type-ordering aggregation rules as well as Borda for the two realistic grading scenarios of 2015 and 2016 with respect to the five different objectives. The values presented are theoretical predictions (theory) and average simulation measurements from 1000 exams with $10\,000$ students.}
\label{tab:realistic-data}
\end{table}

\begin{table}[ht]
\centering
\begin{tabular}{ccccccccccc}	
\noalign{\hrule height 1.5pt}
noise & \multicolumn{2}{c}{perfect grading} & \multicolumn{4}{c}{mallows grading} & \multicolumn{4}{c}{rum grading}\\\hline
setting & theory & sim. & \multicolumn{2}{c}{theory} & \multicolumn{2}{c}{simulation} & \multicolumn{2}{c}{theory} & \multicolumn{2}{c}{simulation}\\
method & borda & borda & opt & borda & opt & borda & opt & borda & opt & borda \\\noalign{\hrule height 1.5pt}
all2all	&	92.01	&	92.02	&	85.15	&	84.38	&	85.16	&	84.39	&	77.89	&	76.79	&	77.89	&	76.81	\\
th-10\%	&	96.94	&	96.95	&	92.05	&	90.52	&	92.07	&	90.54	&	87.11	&	83.59	&	87.13	&	83.62	\\
th-50\%	&	94.13	&	94.14	&	88.39	&	87.8	&	88.4	&	87.81	&	81.27	&	80.32	&	81.28	&	80.33	\\
acc-2\%	&	93.57	&	93.57	&	86.52	&	85.72	&	86.52	&	85.73	&	78.99	&	77.85	&	78.99	&	77.86	\\
acc-5\%	&	95.47	&	95.47	&	88.42	&	87.61	&	88.42	&	87.62	&	80.57	&	79.40	&	80.57	&	79.41	\\
\noalign{\hrule height 1.5pt}
\end{tabular}
\caption{Performance of optimal type-ordering aggregation rules as well as Borda for scenarios with perfect, Mallows, and RUM grading with respect to the five different objectives. The values presented are theoretical predictions (theory) and average simulation measurements from 1000 exams with $10\,000$ students.}
\label{tab:synthetic-data}
\end{table}

Figure \ref{fig:clouds-borda-vs-opt} reports detailed information for all simulations, for the all2all, th-10\% and acc-5\% scenarios. Clearly, the performance of the aggregation rules for all objectives that we considered is sharply concentrated around the average values shown in Tables~\ref{tab:realistic-data} and~\ref{tab:synthetic-data}; note that the size of the $x$ and $y$-axis that are depicted in all subfigures are at most $3\%$ wide (besides in subfigure \ref{fig:clouds-borda-vs-opt}(k) for RUM grading with the th-10\% bivariate performance objective, which has axes that are $6\%$ wide). Again, Borda is used for comparison purposes.

\begin{figure*}[p]
\centering
\begin{subfigure}{0.3\textwidth}
   \includegraphics[trim= 50 200 50 250, clip=true, scale=0.25]{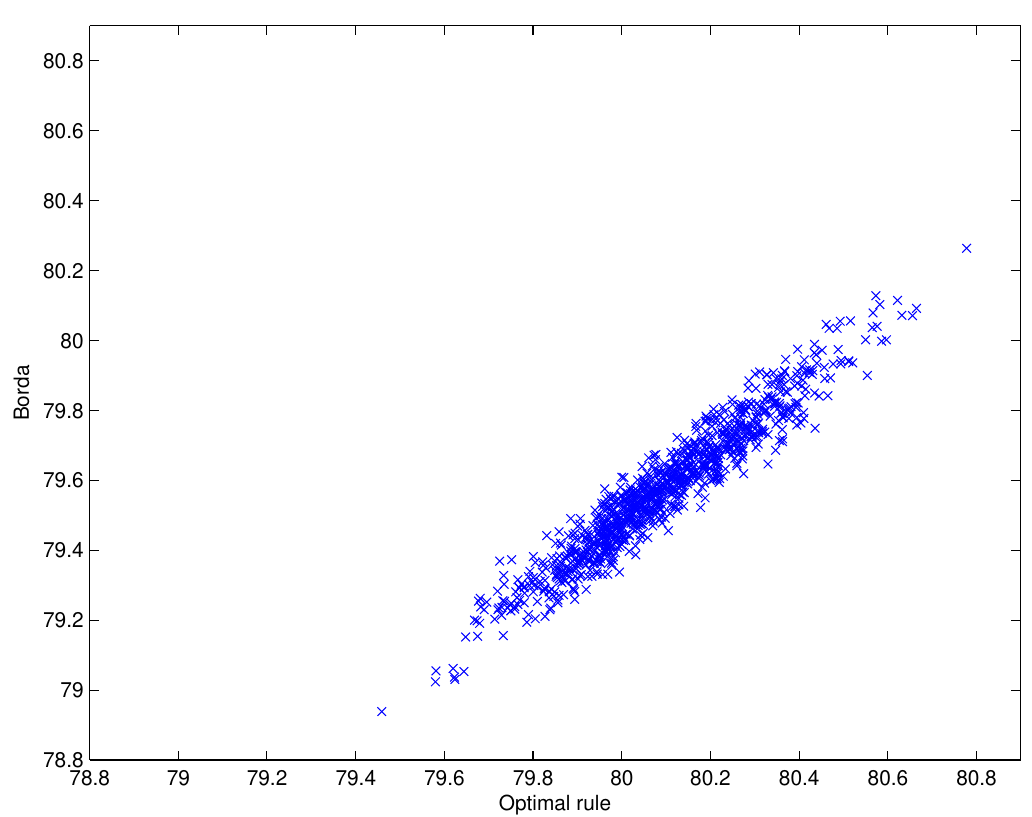} 
   \caption{realistic 2015, all2all}
\end{subfigure}
\begin{subfigure}{0.3\textwidth}
   \includegraphics[trim= 50 200 50 250, clip=true, scale=0.25]{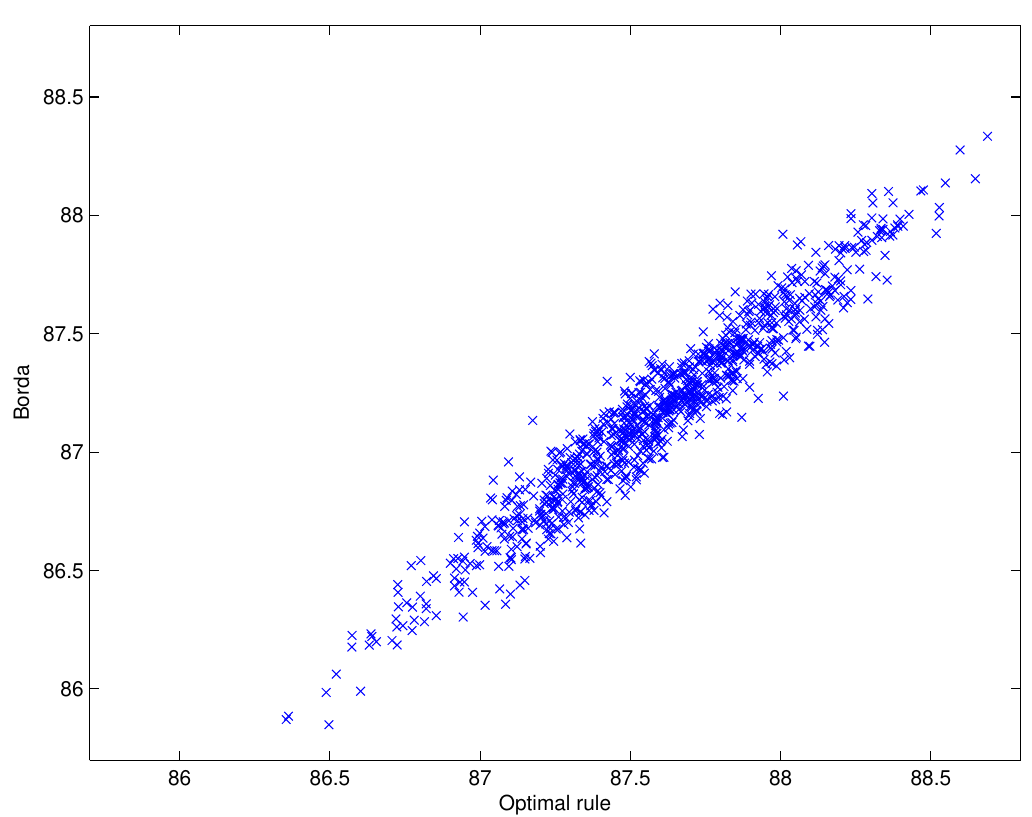} 
   \caption{realistic 2015, th-10\%}
\end{subfigure}
\begin{subfigure}{0.3\textwidth}
   \includegraphics[trim= 50 200 50 250, clip=true, scale=0.25]{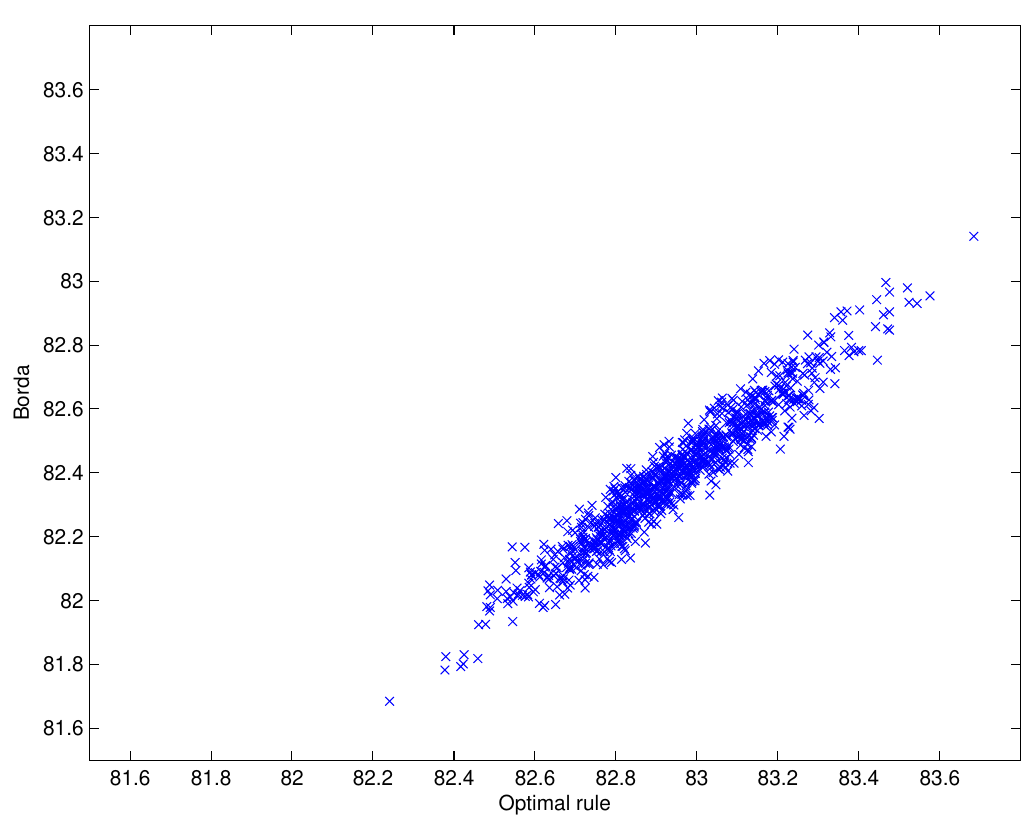}
   \caption{realistic 2015, acc-5\%}
\end{subfigure}

\begin{subfigure}{0.3\textwidth}
  \includegraphics[trim=50 200 50 200, clip=true, scale=0.25]{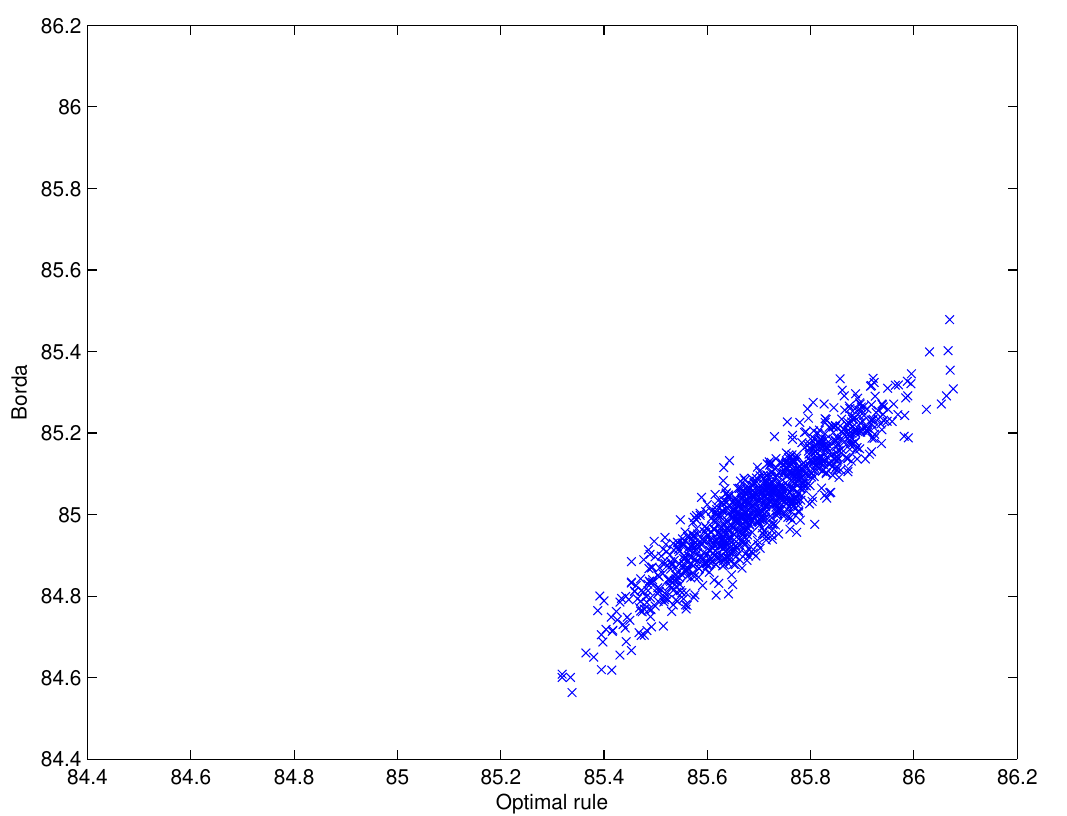}
   \caption{realistic 2016, all2all}
\end{subfigure}
\begin{subfigure}{0.3\textwidth}
     \includegraphics[trim=50 200 50 200, clip=true, scale=0.25]{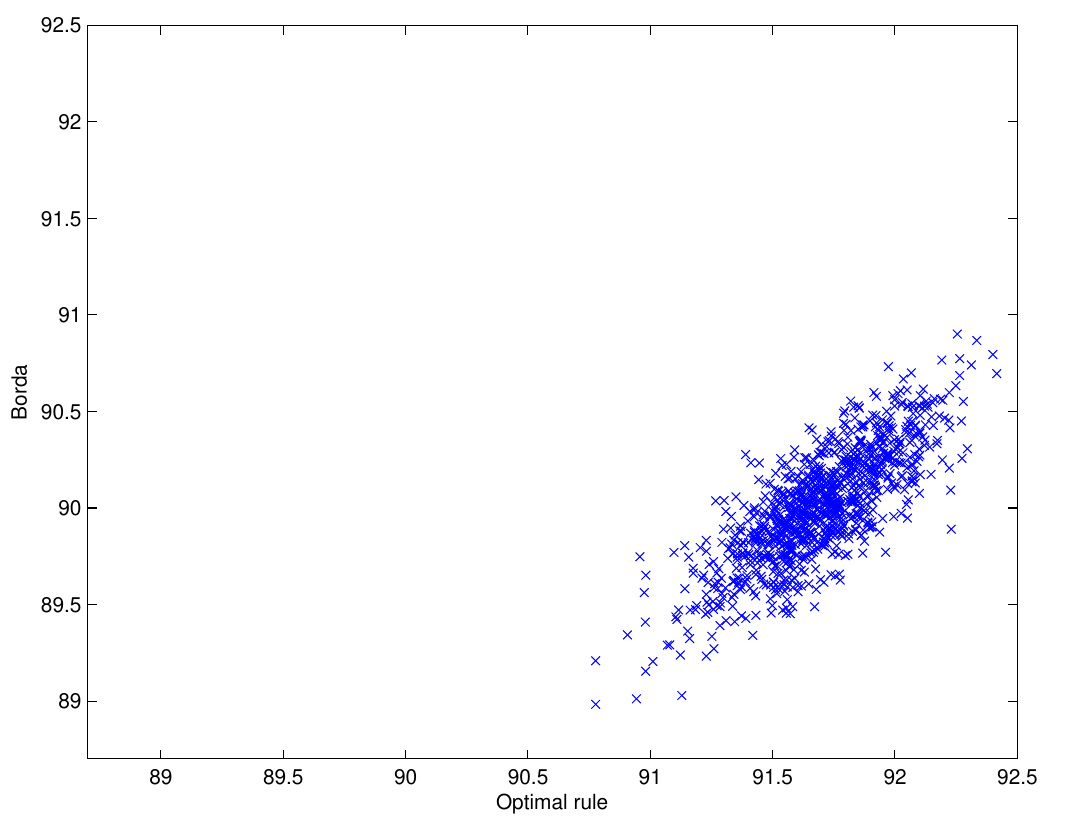}
   \caption{realistic 2016, th-10\%}
\end{subfigure}
\begin{subfigure}{0.3\textwidth}
     \includegraphics[trim=50 200 50 200, clip=true, scale=0.25]{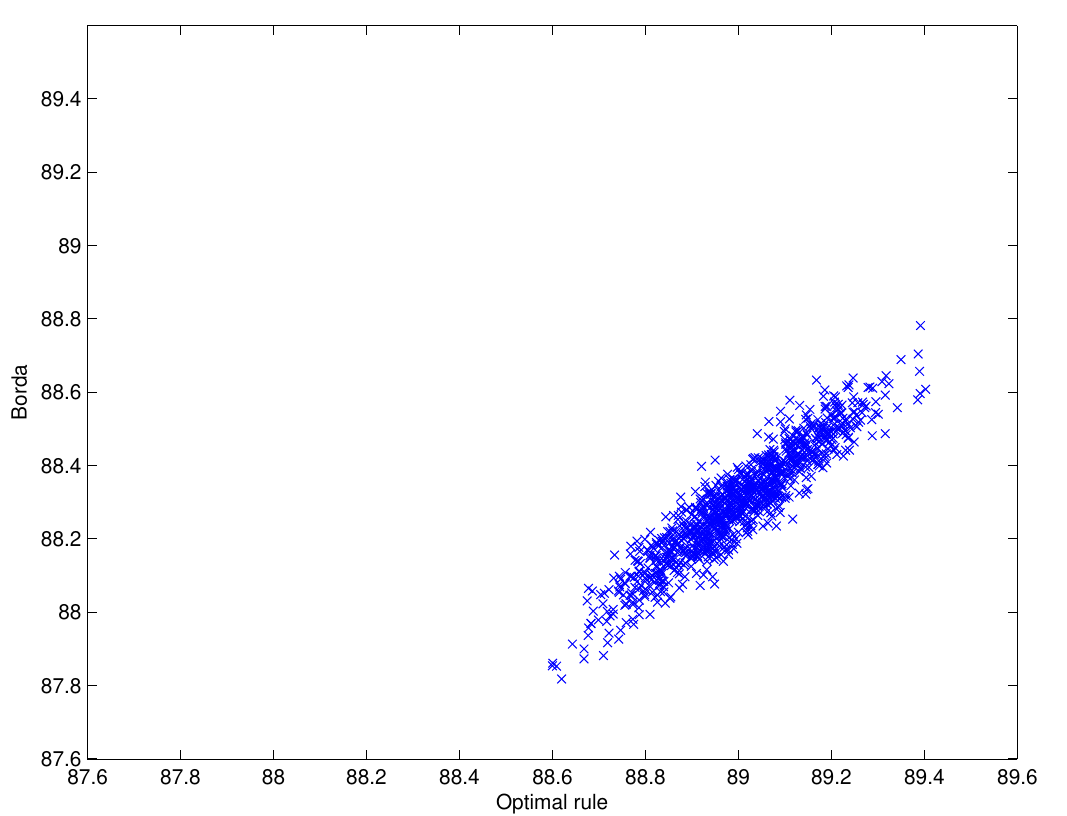}
   \caption{realistic 2016, acc-5\%}
\end{subfigure}

\begin{subfigure}{0.3\textwidth}
   \includegraphics[trim= 50 200 50 250, clip=true, scale=0.25]{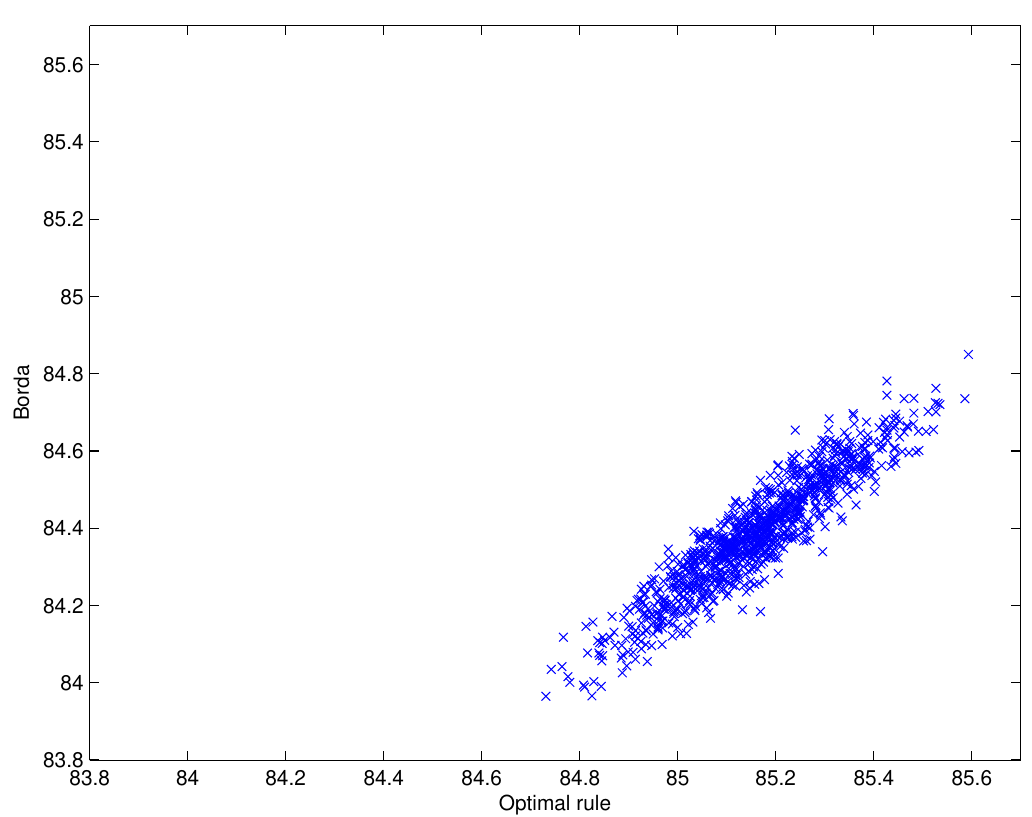} 
   \caption{mallows, all2all}
\end{subfigure}
\begin{subfigure}{0.3\textwidth}
   \includegraphics[trim= 50 200 50 250, clip=true, scale=0.25]{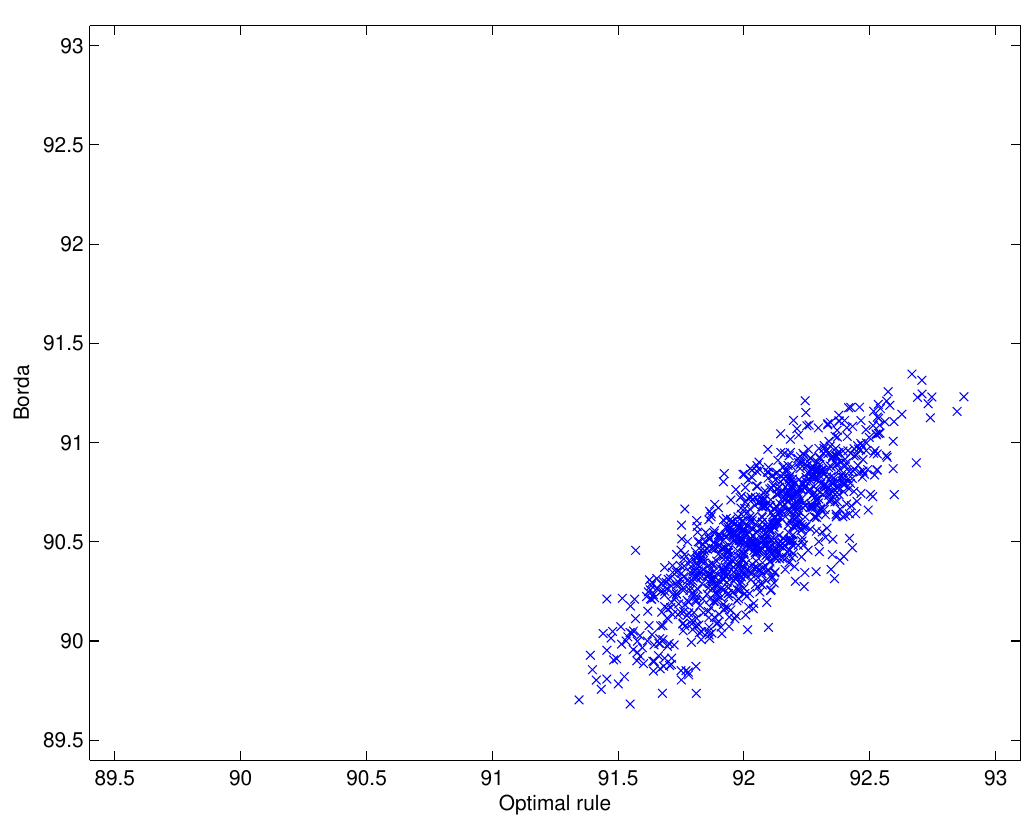} 
   \caption{mallows, th-10\%}
\end{subfigure}
\begin{subfigure}{0.3\textwidth}
   \includegraphics[trim= 50 200 50 250, clip=true, scale=0.25]{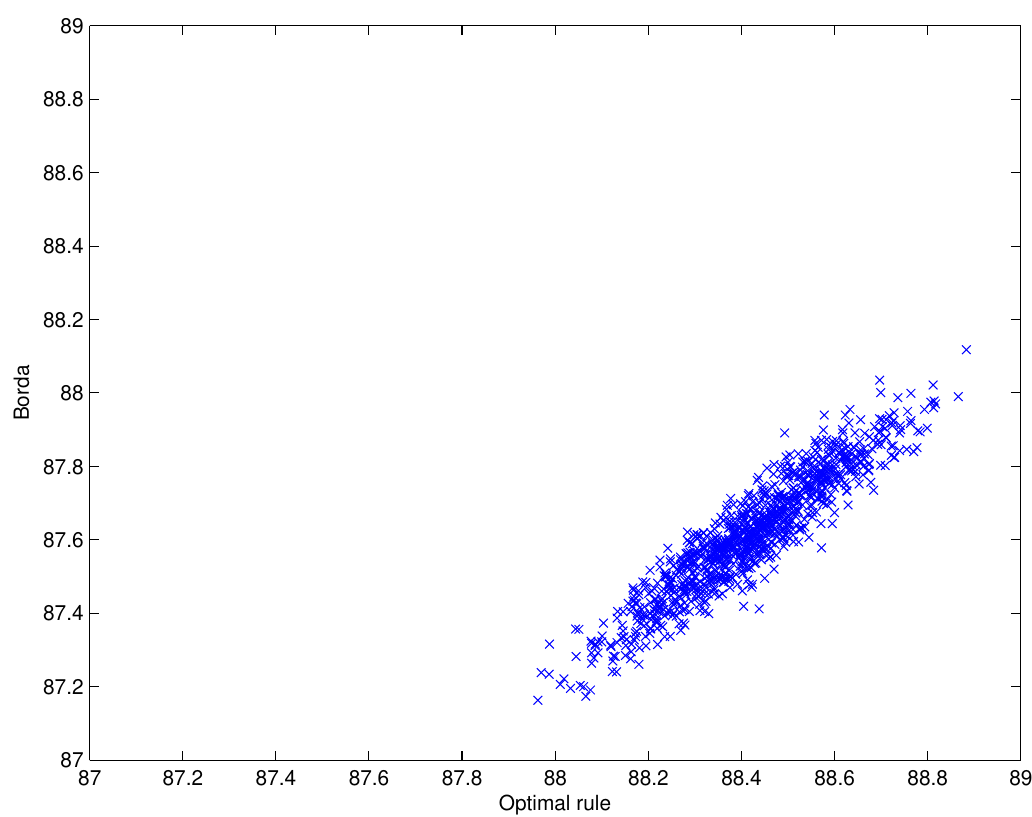} 
   \caption{mallows, acc-5\%}
\end{subfigure}

\begin{subfigure}{0.3\textwidth}
   \includegraphics[trim= 50 200 50 200, clip=true, scale=0.25]{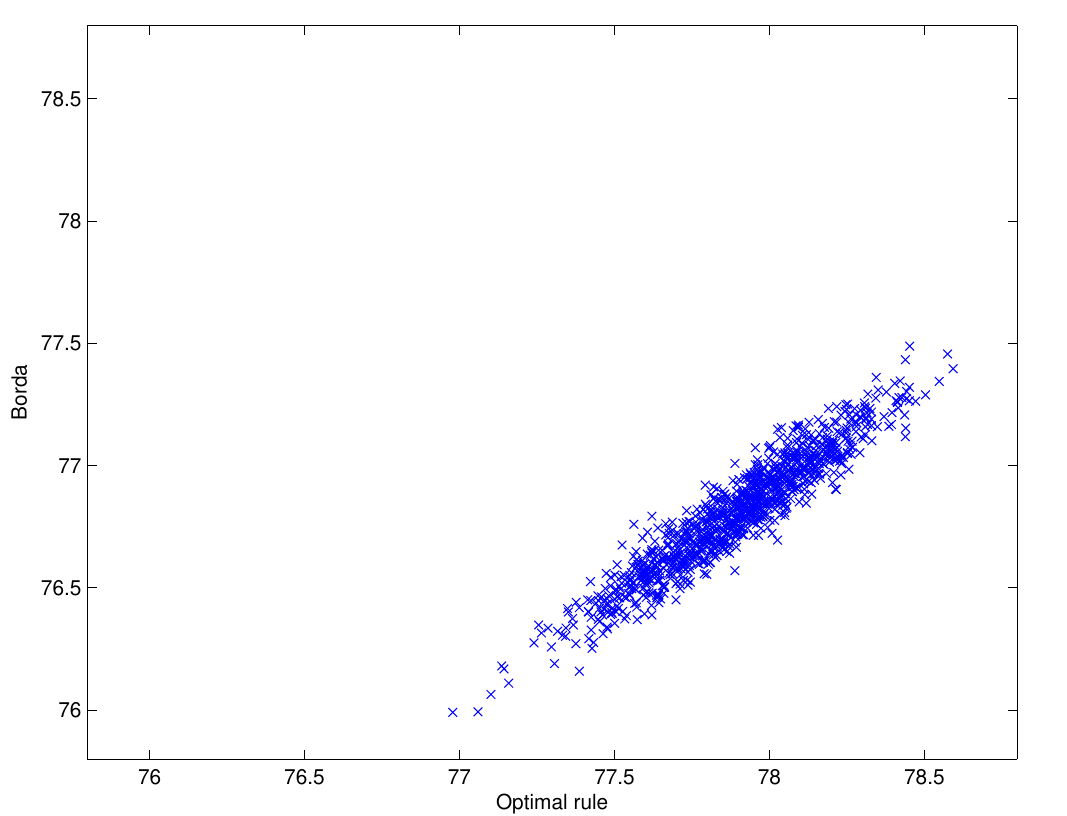} 
   \caption{rum, all2all}
\end{subfigure}
\begin{subfigure}{0.3\textwidth}
   \includegraphics[trim= 50 200 50 200, clip=true, scale=0.25]{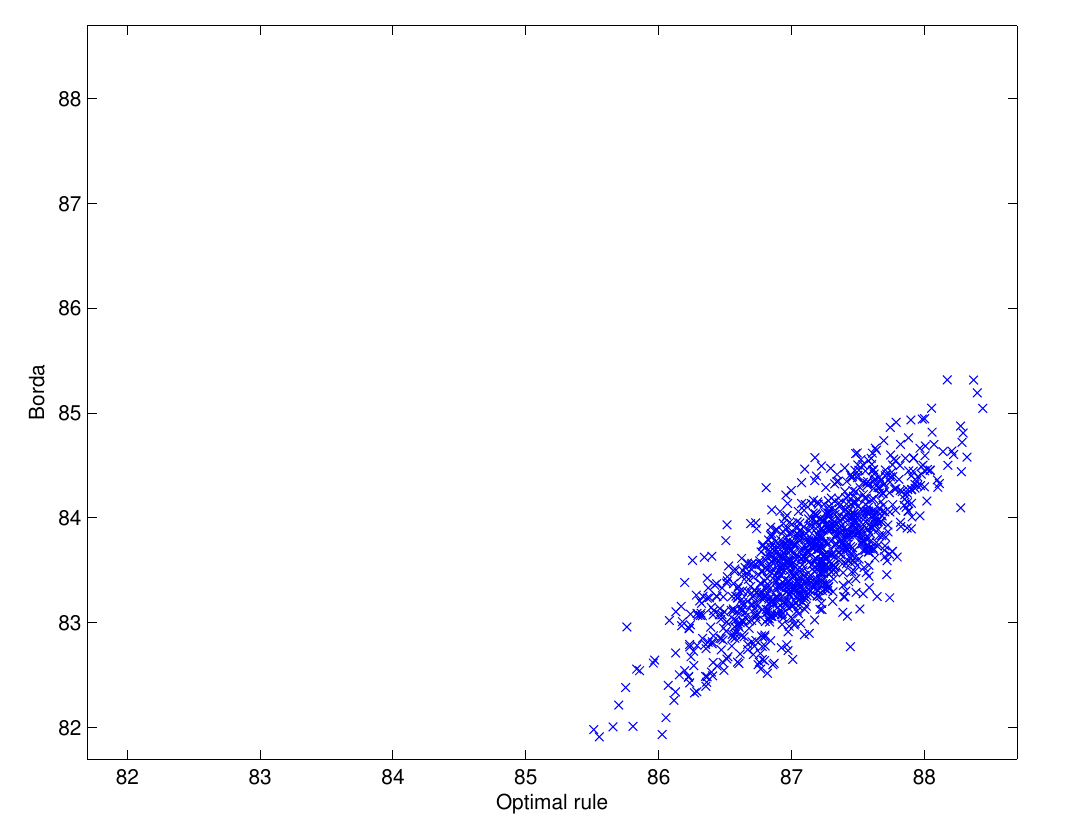} 
   \caption{rum, th-10\%}
\end{subfigure}
\begin{subfigure}{0.3\textwidth}
   \includegraphics[trim= 50 200 50 200, clip=true, scale=0.25]{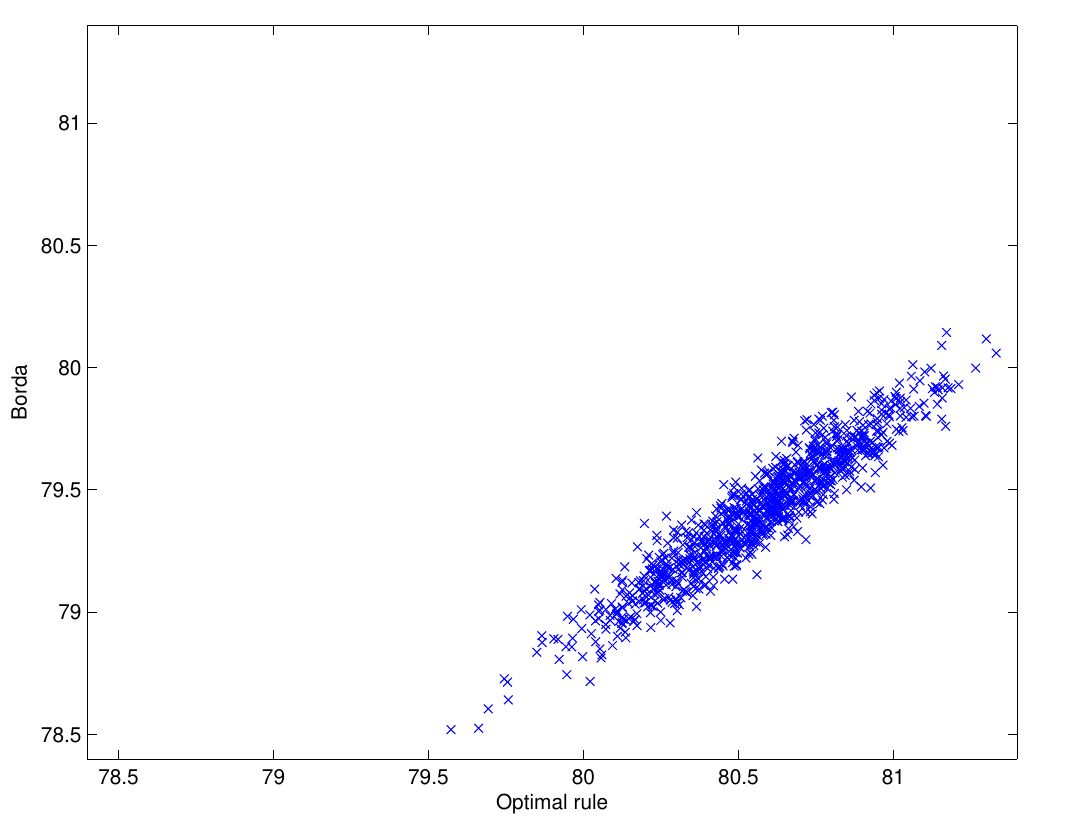}
   \caption{rum, acc-5\%}
\end{subfigure}
\caption{Performance of the optimal type-ordering aggregation rule and Borda for the realistic 2015, 2016, Mallows and RUM grading scenarios for the all2all, acc-5\%, and th-10\% objectives. Each point (among the 1000 points in each cloud) corresponds to a simulated exam with the participation of $10\,000$ students with the corresponding grading behaviour.}
\label{fig:clouds-borda-vs-opt}
\end{figure*}

A final comment on the performance of the optimal type-ordering aggregation rules is that they are {\em extremely robust}. Even though they have been optimized with respect to a particular bivariate performance objective, they perform very well with respect to other objectives as well. Figure \ref{fig:displacements} shows measurements of properties that cannot be expressed as bivariate performance objectives. Each plot shows data about Borda and the optimal (under the all2all objective) type-ordering aggregation rules in scenarios with perfect, realistic, Mallows, and RUM grading. Borda in the perfect grading scenario has the best performance with respect to these objectives as well. Actually, its performance in this scenario can serve as the optimistic barrier for every type-ordering aggregation rule in any (imperfect) grading scenario. More interestingly, Borda has performance that is very close to the optimal rule for realistic grading (the corresponding curves almost coincide in Figures \ref{fig:displacements}(a) and \ref{fig:displacements}(c)) and is slightly worse for Mallows and RUM grading. In fact, these results are in sync to those in Tables \ref{tab:realistic-data} and \ref{tab:synthetic-data}, and Figure \ref{fig:clouds-borda-vs-opt}.

\begin{figure*}[ht]
\centering
\begin{subfigure}{0.3\textwidth}
   \includegraphics[trim= 50 200 50 208, clip=true, scale=0.24]{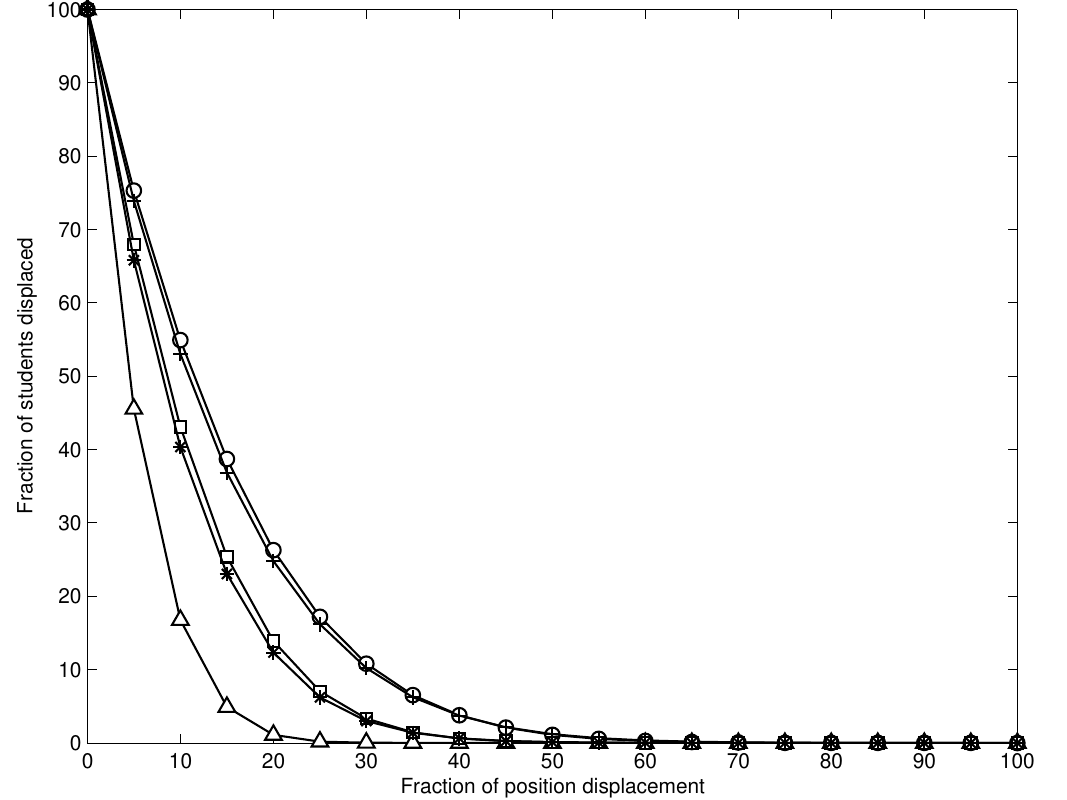} 
   \caption{Displacement, realistic \\ \ }
\end{subfigure}
\begin{subfigure}{0.3\textwidth}
   \includegraphics[trim= 50 200 50 208, clip=true, scale=0.24]{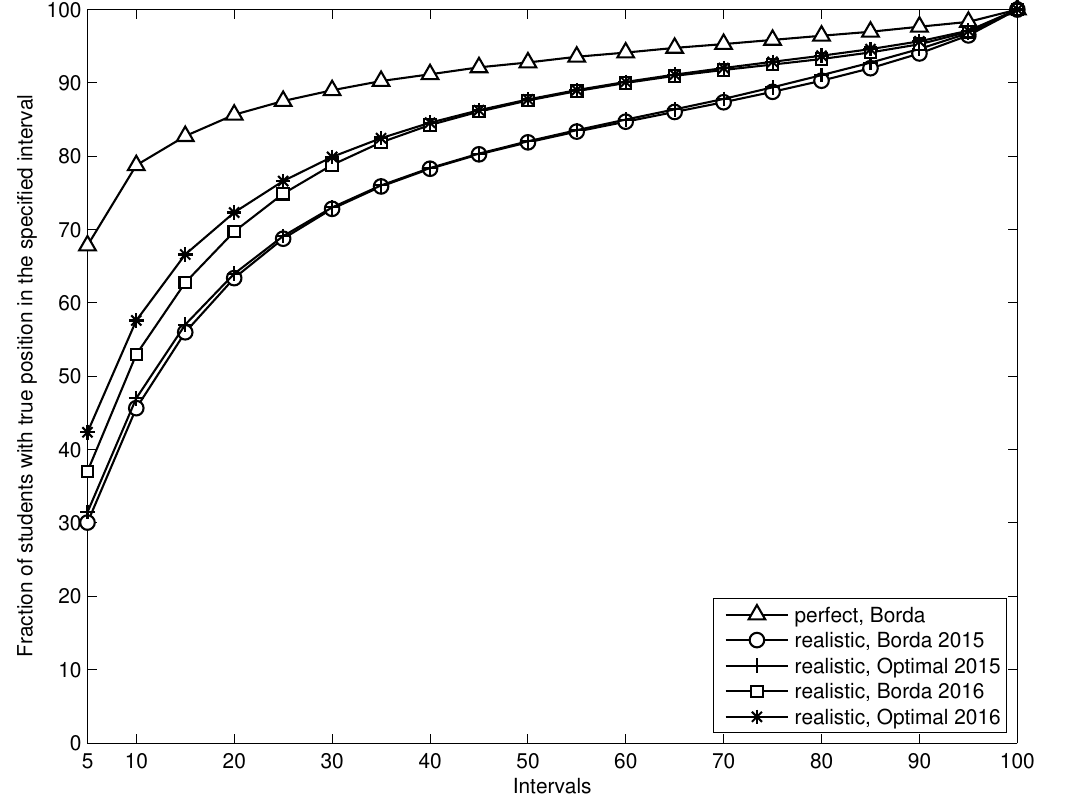}  
   \caption{Interval displacement,\\ realistic }
\end{subfigure}
\begin{subfigure}{0.3\textwidth}
   \includegraphics[trim=50 200 50 260, clip=true, scale=0.24]{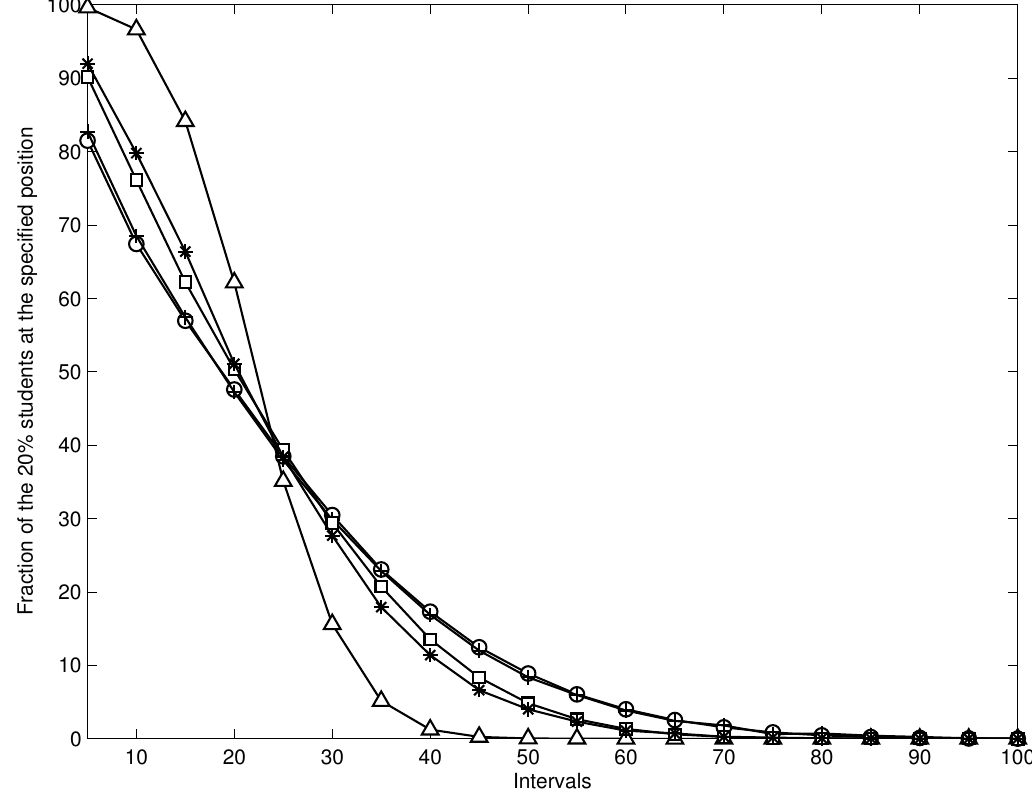}  
   \caption{Distribution of top 20\%,\\ realistic}
\end{subfigure}

\begin{subfigure}{0.3\textwidth}
   \includegraphics[trim= 50 200 50 208, clip=true, scale=0.24]{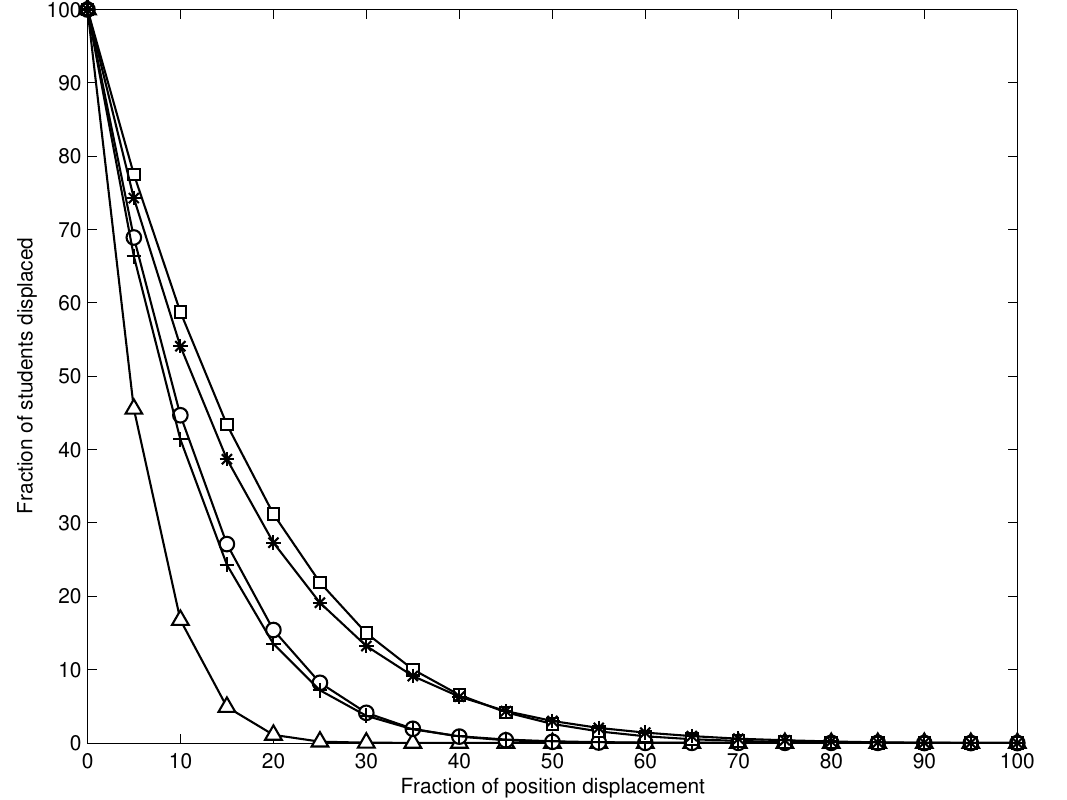} 
   \caption{Displacement, synthetic \\ \ }
\end{subfigure}
\begin{subfigure}{0.3\textwidth}
   \includegraphics[trim= 50 200 50 208, clip=true, scale=0.24]{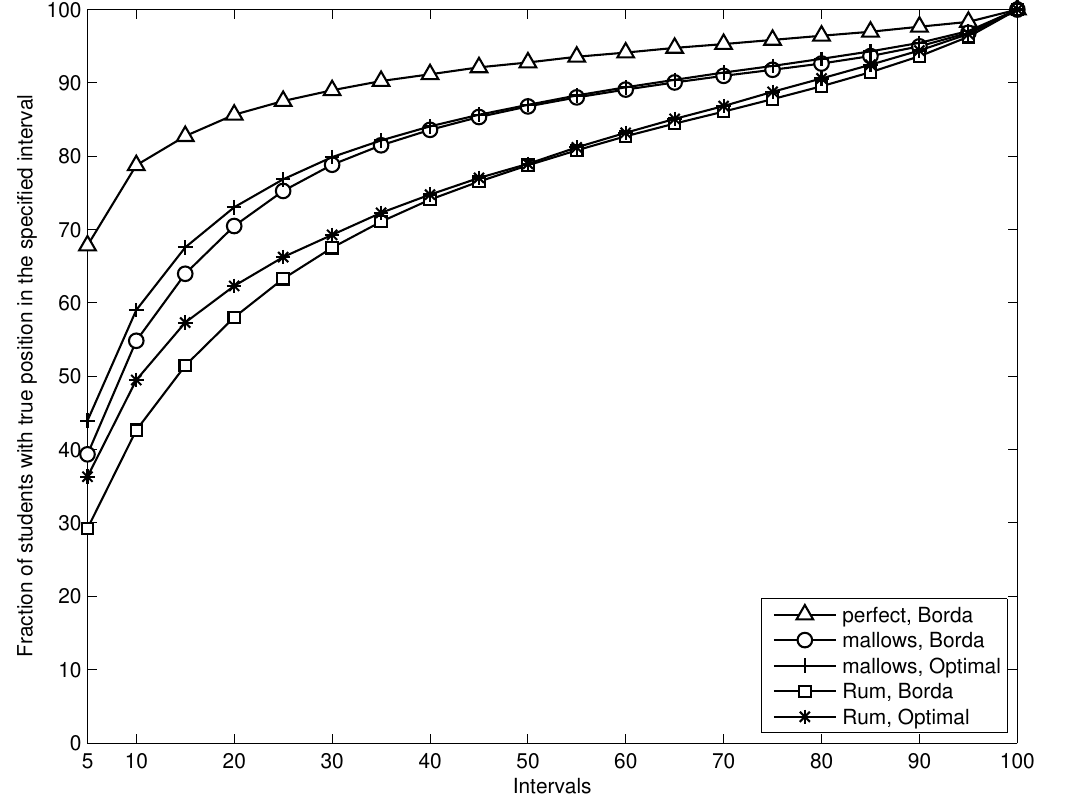}  
   \caption{Interval displacement,\\ synthetic}
\end{subfigure}
\begin{subfigure}{0.3\textwidth}
   \includegraphics[trim= 50 200 50 260, clip=true, scale=0.24]{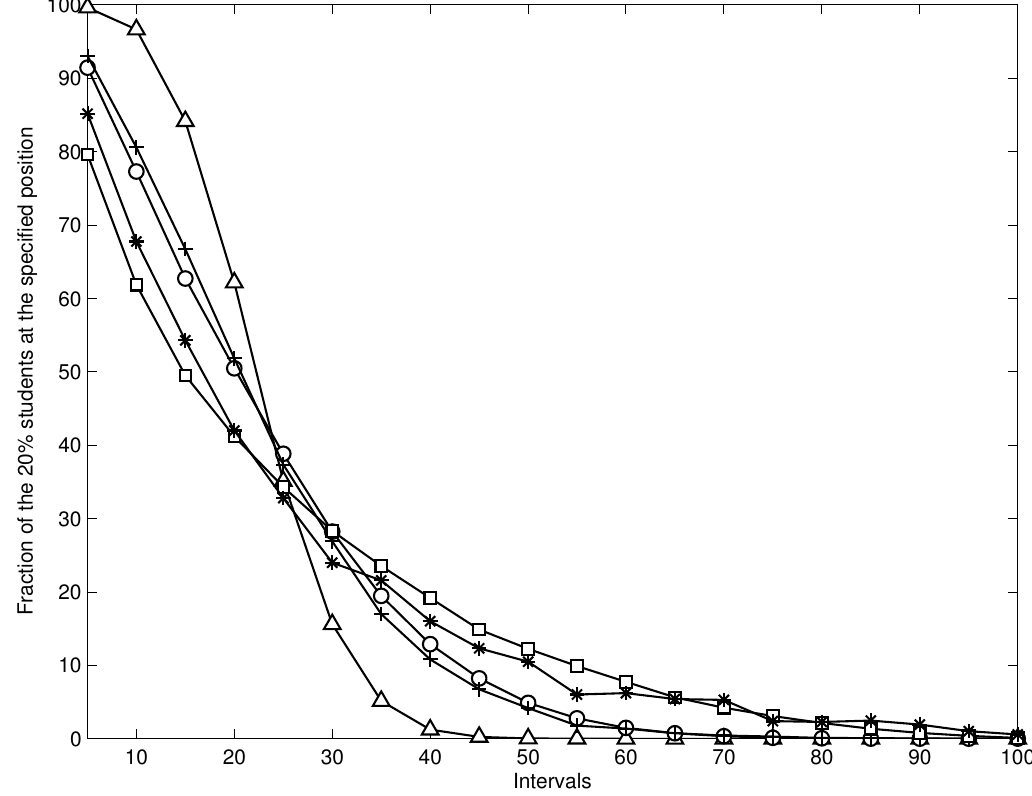}  
   \caption{Distribution of top 20\%,\\ synthetic}
\end{subfigure}
\caption{Robustness of optimal type-ordering aggregation rules and Borda with realistic graders in Subfigures (a), (b) and (c), and perfect, Mallows, and RUM graders in Subfigures (d), (e) and (f). The information depicted is the result of 1000 executions on exams with $10\,000$ participating students. Displacement: a point $(x, y)$ represents the fact that $y\%$ of the students have been displaced by at least $x\%$ from their true position. Interval displacement: a point $(x, y)$ represents the fact that $y\%$ of the students in the interval $[0, x\%]$ are the same both in the ground truth and in the rankings produced by the aggregation rules. Distribution of top $20\%$: a point $(x, y)$ represents the fact that $y\%$ of the top $20\%$ of the students are positioned in the interval $[(x - 5)\%, x\%]$.}
\label{fig:displacements}
\end{figure*}

\subsection{The effect of inaccuracies in the noise model}\label{subsec:apx}
The two realistic noise models that we built in Section \ref{subsec:real-exp} are, by definition, approximations of the students in our home institution. Besides limitations that have to do with our modelling assumptions, they have the obvious drawback that they have been built using a very small fraction of our students, i.e., 136 students in 2015 and the slightly increased number of 241 students in 2016. So far, the reader should have been convinced that the type-ordering aggregation rules we have built are indeed optimal for a large population that inherits the quality and grading performance of this small fraction of students; this has been the focus of the application of our theoretical framework and of our simulations with realistic grading. What is far from clear is whether these aggregation rules will perform equally well for the whole population of the students in our home institution. To see the importance of this question, imagine it in the planetary scale that MOOCs envision. Can we make safe predictions for huge student populations by sampling a tiny fraction of them, building a noise model as we did in Section \ref{subsec:real-exp}, and then selecting the optimal type-ordering aggregation rules as we did in Section \ref{subsec:optimal-exp}?

We give a positive answer to this question by considering Mallows and RUM grading scenarios. With Mallows and RUM, we have the luxury of two well-defined noise models for the grading behaviour of a huge student population which we have used in order to compute optimal type-ordering aggregation rules. This information will be used only for assessing the approach presented in the following. Now, we will pretend that no information about grading behaviour is available and all we can do is to apply (actually, to simulate) field experiments like the ones we presented in Section \ref{subsec:real-exp} on tiny fractions of the students in order to come up with noise matrices. In this way, we will compute  approximations of the true noise models.

We have followed this approach using samples of Mallows and RUM graders of size 100 and 1000; recall that we have used $10^9$ samples to compute the actual Mallows and RUM noise model matrices. The noise model matrices we obtained are as follows:
$$P_{\text{mallows}}^{\text{100}} = \left[
\begin{array}{cccccc}
    0.59  &  0.19  &  0.07  &  0.08  &  0.06  &  0.01 \\
    0.19  &  0.44  &  0.18  &  0.09  &  0.04  &  0.06 \\
    0.10  &  0.19  &  0.43  &  0.19  &  0.07  &  0.02 \\
    0.05  &  0.05  &  0.15  &  0.45  &  0.19  &  0.11 \\
    0.06  &  0.10  &  0.09  &  0.14  &  0.46  &  0.15 \\
    0.01  &  0.03  &  0.08  &  0.05  &  0.18  &  0.65
\end{array} \right]$$
and
$$P_{\text{mallows}}^{\text{1000}} = \left[
\begin{array}{cccccc}
    0.639  &  0.186  &  0.066  &  0.058  &  0.031  &  0.020 \\
    0.193  &  0.534  &  0.150  &  0.055  &  0.032  &  0.036 \\
    0.073  &  0.149  &  0.501  &  0.147  &  0.076  &  0.054 \\
    0.039  &  0.075  &  0.155  &  0.497  &  0.147  &  0.087 \\
    0.033  &  0.038  &  0.071  &  0.163  &  0.517  &  0.178 \\
    0.023  &  0.018  &  0.057  &  0.080  &  0.197  &  0.625
\end{array} \right]$$
for the Mallows model, and
$$P_{\text{rum}}^{\text{100}} = \left[
\begin{array}{cccccc}
    0.49  &  0.19  &   0.08   &  0.06  &  0.11  &   0.07 \\
    0.22  &  0.36  &   0.25  &  0.05  &  0.04   &   0.08 \\
    0.09  &  0.12  &   0.31  &  0.22  &  0.14   &  0.12 \\
    0.04  &  0.11  &   0.23  & 0.35   &  0.13   & 0.14 \\
    0.08  &  0.12  &   0.08  &  0.20  &  0.37   &  0.15 \\
    0.08  &  0.10  &   0.05  &  0.12  &  0.21   & 0.44
\end{array} \right]$$
and
$$P_{\text{rum}}^{\text{1000}} = \left[
\begin{array}{cccccc}
    0.506 &   0.154 &   0.080 &   0.095 &   0.075 &   0.090 \\
    0.156 &   0.401 &   0.186 &   0.093 &   0.084 &   0.080  \\
    0.088  &  0.163 &   0.385 &   0.173 &   0.107 &   0.084 \\
    0.088 &   0.110 &   0.159 &   0.374 &   0.183 &   0.086 \\
    0.076 &   0.088 &   0.100 &   0.190 &   0.386 &   0.160 \\
    0.086 &   0.084 &   0.090 &   0.075 &   0.165 &   0.500
\end{array} \right]$$
for the RUM model. The matrices have been used to compute the optimal type-ordering aggregation rules for the five bivariate performance objectives using our theoretical framework from Sections~\ref{subsec:framework} and~\ref{subsec:optimal}.

Interestingly, the instances of FAS that we had to solve were slightly harder now. In particular, for the $100$-sample Mallows noise model and the $100$-sample RUM noise model, we had strongly connected components of size up to $26$ and $89$, respectively. Still, our methodology was applied smoothly and allowed us to compute optimal rules. Recall that (see Sections~\ref{subsec:optimal} and~\ref{subsec:optimal-exp}) for strongly connected components of size larger than $10$, we use  Borda orderings of the types within the component, instead of computing the optimal ordering by brute forcing (which is prohibitive for so large components). Hence, an important question is how close to optimality are the type-ordering aggregation rules that we come up with, when we inevitably resort to Borda orderings. To answer this, we compute an upper bound on the performance of the optimal rules by considering the maximum weight edge between any pair of types that are part of a strongly connected component. Note that this gives an upper bound on the contribution of the strongly connected components to the total weight of the optimal solution of the FAS instance  since, in general, taking the maximum weight edges may lead to cycles. Then, we can see how close the performance of our rules is to this upper bound. Table \ref{tab:error-difference} contains this information; it should be clear from the almost zero values reported there that the type-ordering aggregation rules that we obtain are extremely close to being optimal.

\begin{table}
\centering
\begin{tabular}{ lcccc c }
\noalign{\hrule height 1.5pt}
model           & all2all  & acc-2\%  & acc-5\% & th-10\% & th-50\%    \\\noalign{\hrule height 1.5pt}
realistic 2015 	& 	$0$    & $0$      & $0$     & $0$     & $0$        \\ 
realistic 2016  & $0$     & $0$     & $0$     & $0$     & $0$        \\ 
mallows         & $6\cdot 10^{-6}$ & $6\cdot 10^{-6}$  & $6\cdot 10^{-6}$  & $0$  & $0$       \\ 
100-mallows  & $5\cdot 10^{-5}$ &$5\cdot 10^{-5}$	  & $5\cdot 10^{-5}$   & $0$  & $0$        \\ 
1000-mallows & $0$              & $8\cdot 10^{-6}$    & $1\cdot 10^{-5}$   & $0$  & $0$        \\ 
rum          & $2\cdot 10^{-4}$ & $2\cdot 10^{-4}$    & $1\cdot 10^{-4}$   & $0$  &  $2\cdot 10^{-5}$ \\ 
100-rum     & $1\cdot 10^{-4}$ & $2\cdot 10^{-4}$     & $7\cdot 10^{-4}$   & $0$  & $0$        \\ 
1000-rum    & $6\cdot 10^{-4}$ & $6\cdot 10^{-4}$     & $2\cdot 10^{-4}$   & $0$  &  $1\cdot 10^{-5}$ \\ 
\noalign{\hrule height 1.5pt}
\end{tabular}
 \caption{Upper bound error for all scenarios. The numbers depicted correspond to the difference between the performance of the respective type-ordering aggregation rule that is computed using Borda orderings within the strongly connected components of size larger than 10 and the theoretical upper bound on the performance of the optimal rule, which is computed by accounting for the maximum weight between any two types that are part of a big strongly connected component. The value $0$ indicates that there were no such big components and the rule is actually optimal. As one can observe, even in cases where we did have big components, the error is extremely close to $0$.}
\label{tab:error-difference}
\end{table}

Table \ref{tab:learning} shows the theoretical prediction values of the $100$- and $1000$-sample approximation type-ordering aggregation rules. The performance of the type-ordering aggregation rules that were computed using the $100$-sample approximations are already amazingly close to those for the actual models. For the rules that we computed using the $1000$-sample approximation, it is almost impossible to distinguish them from the actual ones, in terms of performance. 

\begin{table}[h!]
\centering
\begin{tabular}{ccccccc}	
\noalign{\hrule height 1.5pt}
\# samples  & \multicolumn{2}{c}{100} & \multicolumn{2}{c}{1000}& \multicolumn{2}{c}{$10^9$} \\\hline
setting & mallows & rum & mallows & rum & mallows & rum  \\\noalign{\hrule height 1.5pt}
all2all	&	84.95	&	77.51	&	85.14	&	77.85	&	85.15	&	77.89	\\
th-10\%	&	91.82	&	86.58	&	92.05	&	87.08	&	92.05	&	87.11	\\
th-50\%	&	88.21	&	80.84	&	88.39	&	81.25	&	88.39	&	81.27	\\
acc-2\%	&	86.31	&	78.59	&	86.51	&	78.95	&	86.52	&	78.99	\\
acc-5\%	&	88.19	&	80.21	&	88.41	&	80.51	&	88.42	&	80.57	\\
\noalign{\hrule height 1.5pt}
\end{tabular}
\caption{Theoretical performance prediction of the optimal type-ordering aggregation rules for the $100$- and $1000$-sample approximations of the Mallows and RUM model.}
\label{tab:learning}
\end{table}

A more refined graphical representation of these findings is given in Figure \ref{fig:clouds-apx} (best viewed in color). Each plot contains a blue and a red cloud of 1000 points, each corresponding to a single simulated exam with $10\,000$ students. The blue points (respectively, red points) show the performance of the optimal rule for the $1000$-sample (respectively, $100$-sample) Mallows and RUM approximation versus the Mallows- and RUM-optimal rule in subfigures (a)--(c) and (d)--(f), respectively. In all cases, the blue cloud almost coincides with the diagonal in each plot, indicating an optimal approximation of the optimal rule. The red cloud is distinct but still very close. To realize how close the two clouds are, for the case of the Mallows model, almost the whole cloud of points for Borda (from Figures \ref{fig:clouds-borda-vs-opt}(g)--(i)) would be located outside the plot area of Figure \ref{fig:clouds-apx}(a)--(c) (if we attempted to plot it).

\begin{figure*}[ht]
\centering
\begin{subfigure}{0.3\textwidth}
   \includegraphics[trim= 50 200 50 260, clip=true, scale=0.28]{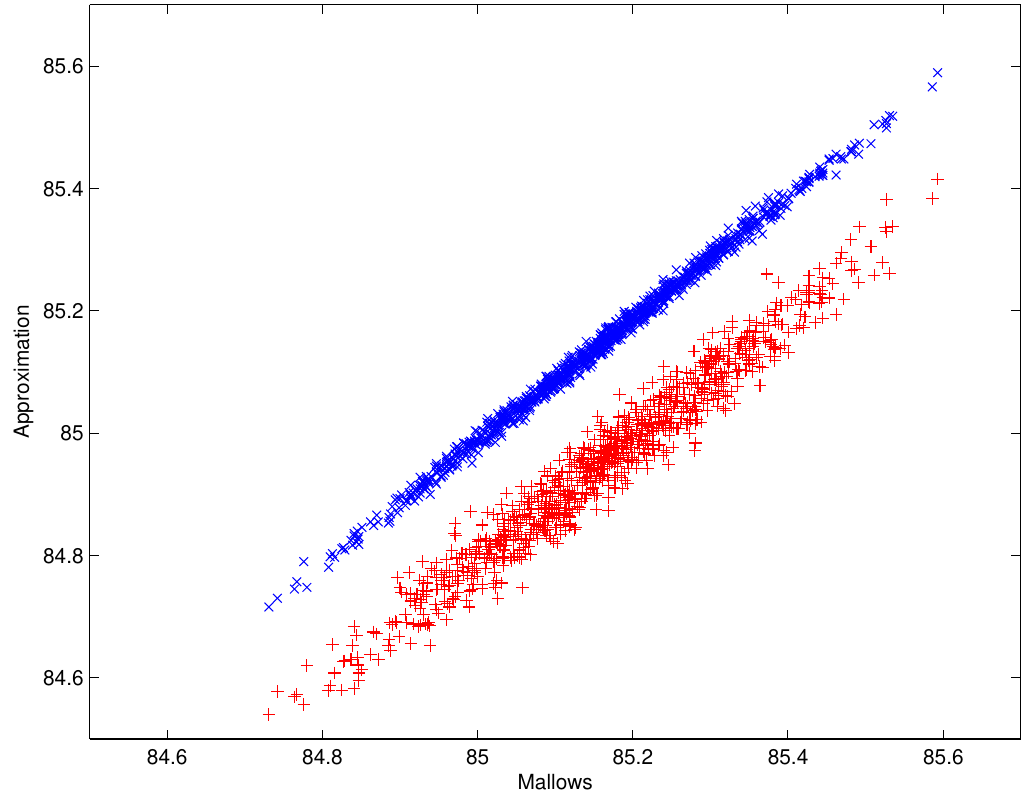} 
   \caption{all2all, Mallows}
\end{subfigure}
\begin{subfigure}{0.3\textwidth}
   \includegraphics[trim= 50 200 50 260, clip=true, scale=0.28]{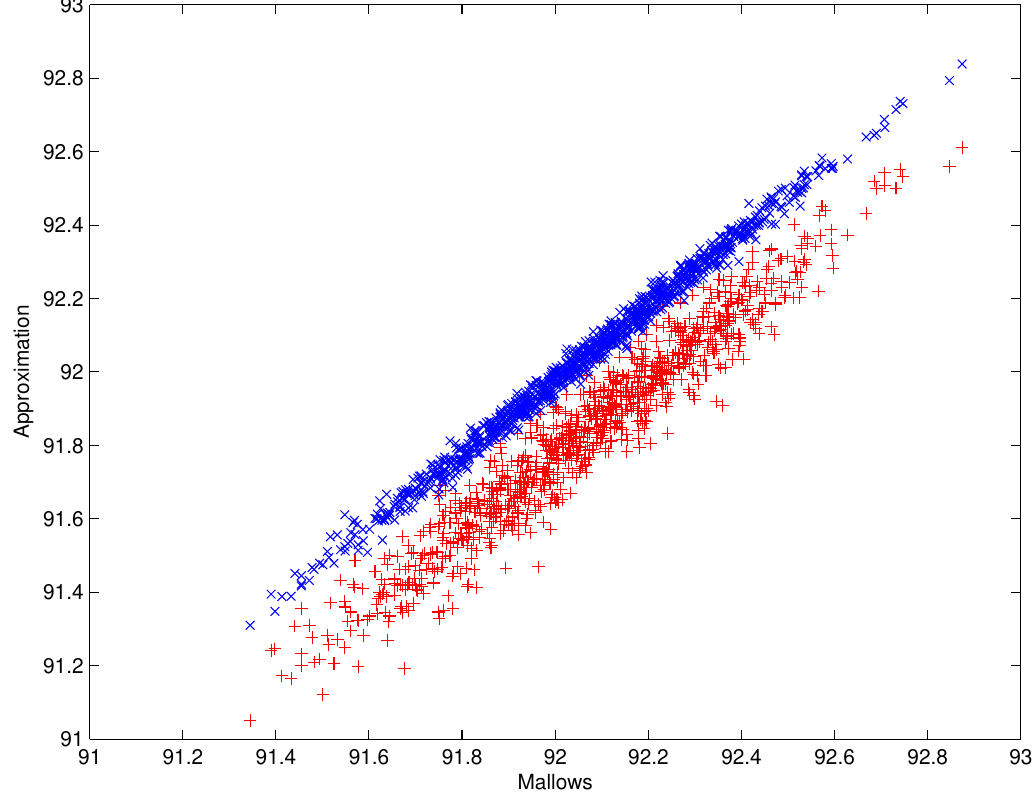} 
   \caption{th-10\%, Mallows}
\end{subfigure}
\begin{subfigure}{0.3\textwidth}
   \includegraphics[trim= 50 200 50 260, clip=true, scale=0.28]{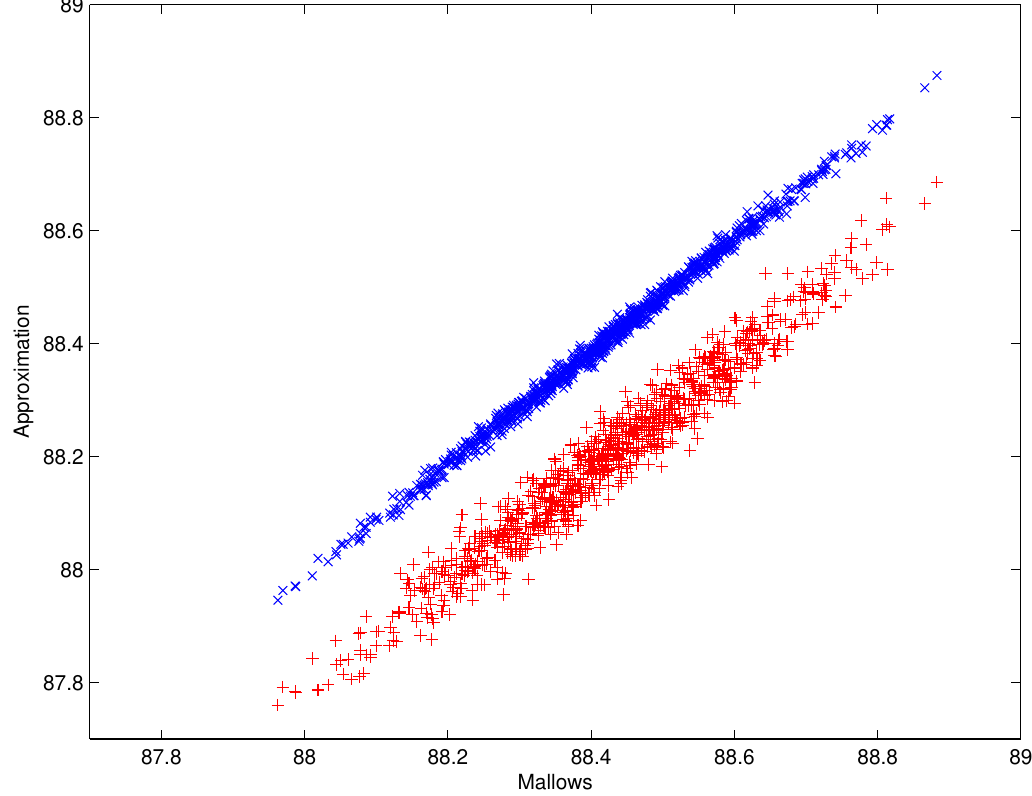} 
   \caption{acc-5\%, Mallows}
\end{subfigure}
\begin{subfigure}{0.3\textwidth}
   \includegraphics[trim= 50 200 50 200, clip=true, scale=0.28]{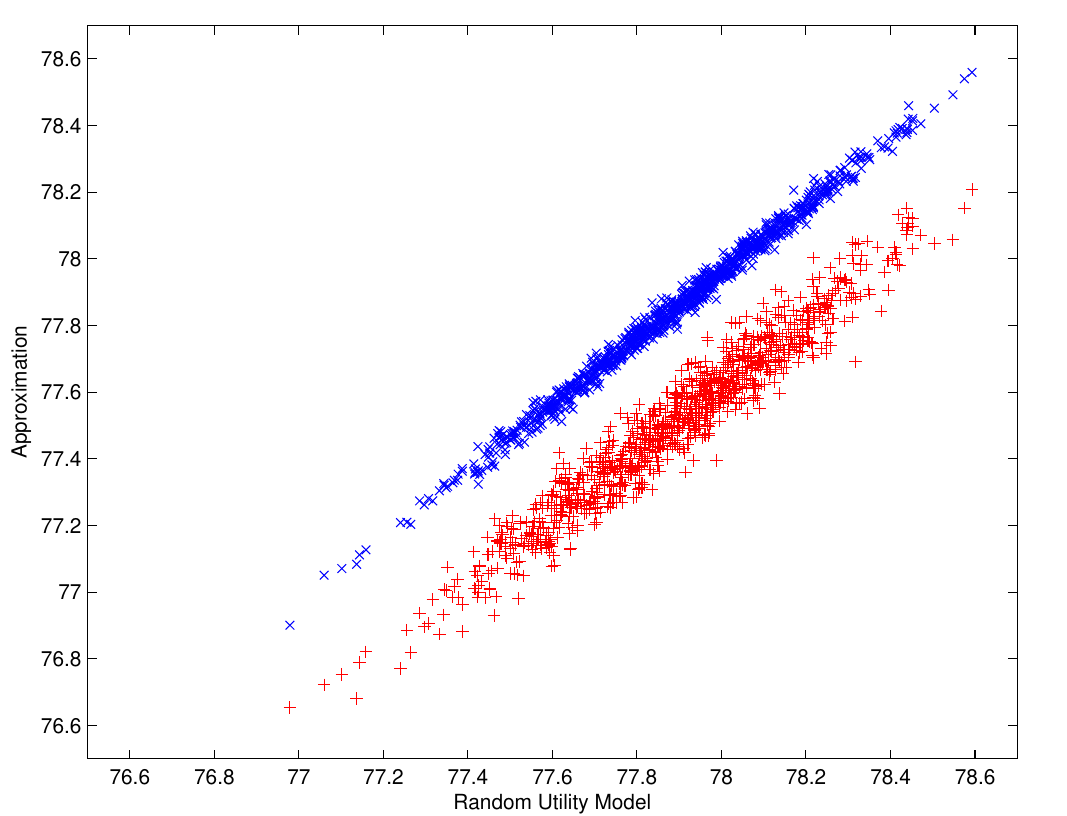} 
   \caption{all2all, RUM}
\end{subfigure}
\begin{subfigure}{0.3\textwidth}
   \includegraphics[trim= 50 200 50 200, clip=true, scale=0.28]{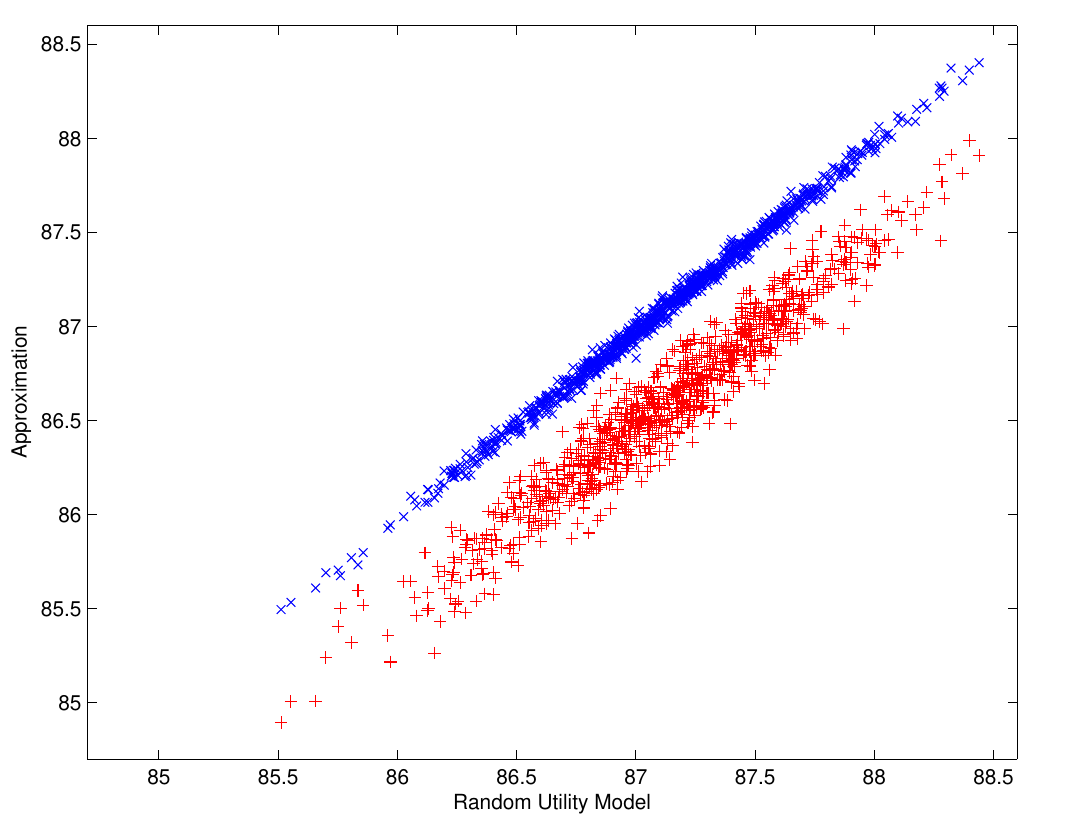} 
   \caption{th-10\%, RUM}
\end{subfigure}
\begin{subfigure}{0.3\textwidth}
   \includegraphics[trim= 50 200 50 200, clip=true, scale=0.28]{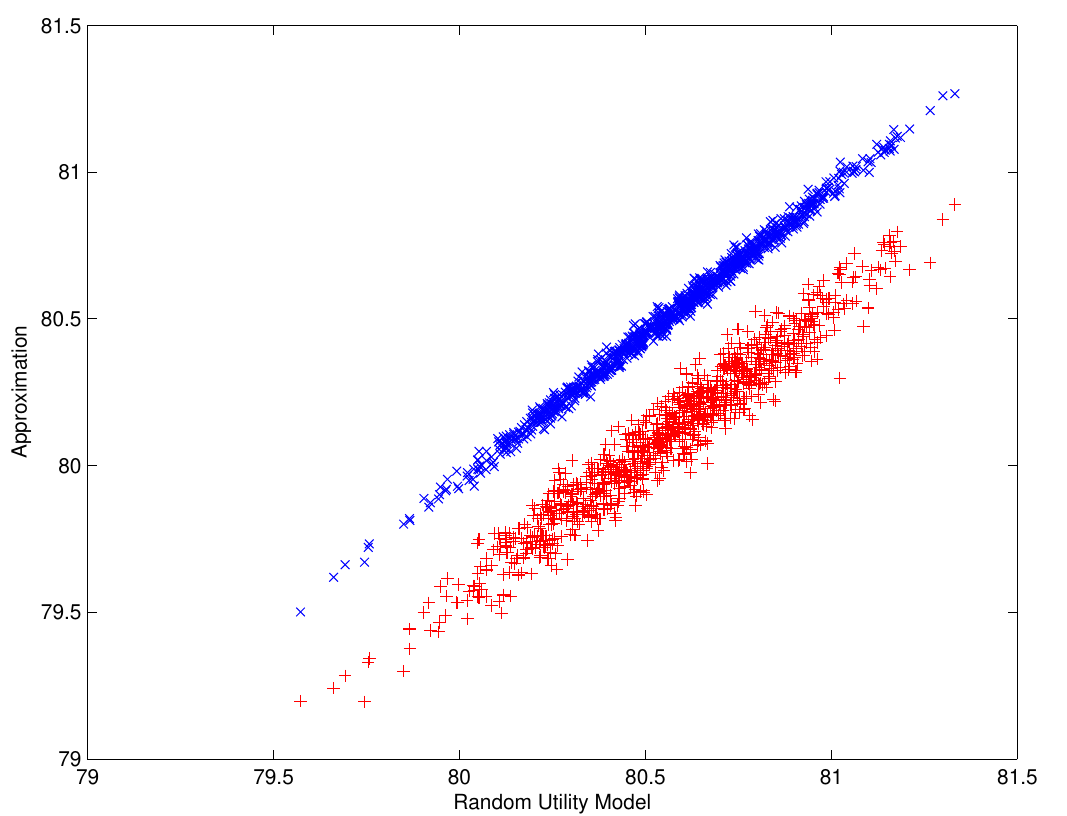} 
   \caption{acc-5\%, RUM}
\end{subfigure}
\caption{A comparison of the optimal rule for Mallows and RUM and their approximations with respect to the objectives (a) all2all, (b) acc-5\%, and (c) th-10\%. The depicted results are from 1000 executions of the type-ordering aggregation rules on simulated exams with $10\,000$ students.}
\label{fig:clouds-apx}
\end{figure*}

We conclude the presentation of our simulation results with a comparison of the type orderings of the optimal aggregation rules for Mallows and RUM and their $100$- and $1000$-sample approximations; these are presented in Tables \ref{tab:rules} and \ref{tab:rum-rules}. Therein, we can see that the optimal rules for the $1000$-sample noise models according to the all2all performance objective are very close (but not identical) to the Mallows- and RUM-optimal rules. The optimal rules for the $100$-sample noise models are substantially different (these differences are more apparent in lower positions of the orderings which cannot be included here). An interesting characteristic of optimal rules for Mallows, RUM and their approximations is that the orderings of types are non-monotonic. For example, type $(1,1,1,1,1,5)$ is always ahead of $(1,1,1,1,1,2)$ for the Mallows model and its approximations. This justifies our decision to study type-ordering aggregation rules and ignore positional scoring rules; clearly, no positional scoring rule would come up with non-monotonic orderings of types.

\begin{table}
\centering
\begin{tabular}{cccccc}
\noalign{\hrule height 1.5pt}
pos. & mallows & $100$-sample & $1000$-sample \\\noalign{\hrule height 1.5pt}
1& $(1,1,1,1,1,1)$ & $(1,1,1,1,1,1)$ & $(1,1,1,1,1,1)$	\\
2& $(1,1,1,1,1,6)$ & $(1,1,1,1,1,5)$ & $(1,1,1,1,1,6)$	\\
3& $(1,1,1,1,1,5)$ & $(1,1,1,1,1,2)$ & $(1,1,1,1,1,5)$	\\
4& $(1,1,1,1,1,2)$ & $(1,1,1,1,1,4)$ & $(1,1,1,1,1,2)$	\\
5& $(1,1,1,1,1,4)$ & $(1,1,1,1,1,3)$ & $(1,1,1,1,1,4)$	\\
6& $(1,1,1,1,1,3)$ & $(1,1,1,1,1,6)$ & $(1,1,1,1,1,3)$	\\
7& $(1,1,1,1,2,6)$ & $(1,1,1,1,2,2)$ & $(1,1,1,1,2,6)$	\\
8& $(1,1,1,1,2,2)$ & $(1,1,1,1,2,5)$ & $(1,1,1,1,2,2)$	\\
9& $(1,1,1,1,6,6)$ & $(1,1,1,1,5,5)$ & $(1,1,1,1,2,5)$	\\
10& $(1,1,1,1,2,5)$ & $(1,1,1,1,2,4)$ & $(1,1,1,1,6,6)$	\\
11& $(1,1,1,1,5,6)$ & $(1,1,1,1,2,3)$ & $(1,1,1,1,5,6)$	\\
12& $(1,1,1,1,2,4)$ & $(1,1,1,1,3,5)$ & $(1,1,1,1,5,5)$	\\
13& $(1,1,1,1,2,3)$ & $(1,1,1,1,4,5)$ & $(1,1,1,1,2,4)$	\\
14& $(1,1,1,1,5,5)$ & $(1,1,1,1,3,3)$ & $(1,1,1,1,2,3)$	\\
\noalign{\hrule height 1.5pt}
\end{tabular}
\caption{The first 14 types in the type ordering of the optimal rules for Mallows and its 100-sample and 1000-sample approximations, according to the all2all performance objective.}
\label{tab:rules}
\end{table}

\begin{table}[h]
\centering
\begin{tabular}{cccccc}
\noalign{\hrule height 1.5pt}
pos. & rum & $100$-sample & $1000$-sample \\\noalign{\hrule height 1.5pt}
1 	&	$(1,1,1,1,1,1)$	&	$(1,1,1,1,1,1)$		& $(1,1,1,1,1,1)$ \\
2 	&	$(1,1,1,1,1,6)$	&	$(1,1,1,1,1,6)$		&$(1,1,1,1,1,6)$ \\
3 	&	$(1,1,1,1,1,5)$	&	$(1,1,1,1,1,2)$		&$(1,1,1,1,1,5)$ \\
4 	&	$(1,1,1,1,1,4)$	&	$(1,1,1,1,1,5)$		&$(1,1,1,1,1,4)$ \\
5 	&	$(1,1,1,1,1,2)$	&	$(1,1,1,1,1,3)$		&$(1,1,1,1,1,2)$ \\
6 	&	$(1,1,1,1,1,3)$	&	$(1,1,1,1,2,6)$		&$(1,1,1,1,1,3)$ \\
7 	&	$(1,1,1,1,6,6)$	&	$(1,1,1,1,2,2)$		&$(1,1,1,1,6,6)$ \\
8 	&	$(1,1,1,1,5,6)$	&	$(1,1,1,1,6,6)$		&$(1,1,1,1,2,6)$ \\
9 	&	$(1,1,1,1,2,6)$	&	$(1,1,1,1,1,4)$		&$(1,1,1,1,5,6)$ \\
10 	&	$(1,1,1,1,5,5)$	&	$(1,1,1,1,2,5)$		&$(1,1,1,1,4,6)$ \\
11	&	$(1,1,1,1,2,5)$	&	$(1,1,1,1,5,6)$		&$(1,1,1,1,2,5)$ \\
12 	&	$(1,1,1,1,4,6)$	& 	$(1,1,1,1,2,3)$		&$(1,1,1,1,5,5)$ \\
13 	&	$(1,1,1,1,2,2)$	& 	$(1,1,1,1,5,5)$		&$(1,1,1,1,3,6)$ \\
14 	&	$(1,1,1,1,3,6)$	& 	$(1,1,1,1,3,6)$		&$(1,1,1,1,2,2)$ \\
\noalign{\hrule height 1.5pt}
\end{tabular}
\caption{The first 14 types in the type ordering of the optimal rules for RUM and its 100-sample and 1000-sample approximations, according to the all2all performance objective.}
\label{tab:rum-rules}
\end{table}

\section{Open problems and further research}\label{sec:open}
In this paper, we have developed a theoretical framework for performance prediction and optimization over a class of rank aggregation rules for ordinal peer grading in MOOCs. Our work reveals many challenging future research directions regarding ordinal peer grading and the deployment of our methods to real MOOCs. An obvious first direction is to develop an analogous framework for broader classes of aggregation rules. This framework will be most useful if it allows for selecting the optimal aggregation rule for a given scenario, as we have managed to do for type-ordering aggregation rules here.

In the deployment of ordinal peer grading in real MOOCs, a few professional graders may be actually available; in technical terms, this implies a partial knowledge of the ground truth~\citep{GWL16}. How should this partial knowledge be combined with rank aggregation of students' grading in order to get an even better final ranking? This question seems to suggest nice extensions to our theory. Another issue that we have completely neglected here is related to the common student drop out after their participation in an exam but before its grading. Even though we do not believe that such situations invalidate our methods, such issues have to be taken seriously into account before deciding which rank aggregation rules to deploy in real systems.

Finally, a thread of interesting research questions is related to incentives; e.g., see \citet{KLMP15} and \citet{ALMRW16}. Classical impossibilities in social choice theory imply that students may grade strategically in order to improve their own position in the final outcome. Can this strategic behaviour be taken into account when deciding the optimal rank aggregation rule? Our approach might be possible to adapt to strategic graders but this would require challenging technical work.

\bibliographystyle{named}
\bibliography{ordinal.bib}

\appendix

\section{Computing the weights}\label{app:sec:compute-integral}
We now elaborate on how to analytically compute the weight $W(\sigma,\sigma')$; the following computations are implemented in Algorithm~\ref{alg:weights}. Recall that $W(\sigma,\sigma')$ is given by equation (\ref{eq:new-weight}), i.e., 
\begin{align*}
W(\sigma,\sigma') = \int_0^1{\int_x^1{f(x,y)\Pr[x \rhd \sigma]\cdot \Pr[y \rhd \sigma']\ud y}\ud x}, 
\end{align*}
and that the performance objective bivariate function $f$ indicates whether a correctly recovered pairwise relation between two students $x$ and $y$ (with $x<y$) should be accounted for or not. All performance objectives we consider in this paper can generically be described by such a function $f$ with $f(x,y) = 1$ when $x \in [\alpha,\beta]$ and $y \in [x +\gamma,\delta]$ for appropriate values of $\alpha, \beta, \gamma, \delta \in [0,1]$, and $f(x,y)=0$ otherwise. In particular, all2all can be expressed with the tuple $(\alpha,\beta,\gamma,\delta)=(0,1,0,1)$, th-10\% and th-50\% with the tuples $(0,0.1,0,1)$ and
$(0,0.5,0,1)$, and acc-2\% and acc-5\% with the tuples $(0,0.98,0.02,1)$ and $(0,0.95,0.05,1)$. 

Then, $W(\sigma,\sigma')$ is equal to 
\begin{align} \nonumber
W(\sigma,\sigma') &= \int_\alpha^\beta{\int_{x+\gamma}^\delta{ \Pr[x \rhd \sigma]\cdot \Pr[y \rhd \sigma']\ud y}\ud x} \\\label{eq:weight-parameterized}
&= \int_\alpha^\beta{ \Pr[x \rhd \sigma] \int_{x+\gamma}^\delta{ \Pr[y \rhd \sigma']\ud y} \ud x}.
\end{align}

Now, recall that $\Pr[x \rhd \sigma]$ is given by equation (\ref{eq:prob-x-sigma-is-a-poly}) and is a univariate polynomial of degree $k^2-k$. Hence, it can be written as
\begin{align}\label{eq:x-poly-form}
\Pr[x \rhd \sigma] &= \sum_{s=0}^{k^2-k}{ c_s(\sigma) x^s },
\end{align} 
where the coefficients $c_s(\sigma)$ with $s=0, ..., k^2-k$ are computed by equation (\ref{eq:prob-x-sigma-is-a-poly}), and have been included for completeness in the first part of Algorithm~\ref{alg:weights}. 

The inner integral in equation (\ref{eq:weight-parameterized}) is computed as follows:
\begin{align*}
\int_{x+\gamma}^\delta{ \Pr[y \rhd \sigma'] }\ud y &= \int_{x+\gamma}^\delta{  \sum_{s=0}^{k^2-k}{ c_s(\sigma') y^s } }\ud y \\
&= \sum_{s=0}^{k^2-k}{ c_s(\sigma') \int_{x+\gamma}^\delta{ y^s } \ud y } \\
&= \sum_{s=0}^{k^2-k}{ \frac{c_s(\sigma')}{s+1} \left( \delta^{s+1} - (x+\gamma)^{s+1} \right) } \\
&= \sum_{s=0}^{k^2-k}{ \frac{c_s(\sigma')\delta^{s+1}}{s+1} } - \sum_{s=1}^{k^2-k+1}{ \frac{c_s(\sigma')(x+\gamma)^s}{s} }. 
\end{align*}
Using the fact that $(z+w)^m = \sum_{i=0}^m{ {m \choose i}z^{m-i}w^i}$ for $z=\gamma$, $w=x$, and $m=s$, we obtain
\begin{align} \label{eq:y-integral}
\int_{x+\gamma}^\delta{  \Pr[y \rhd \sigma'] }\ud y = \sum_{s=0}^{k^2-k}{ \frac{c_s(\sigma')\delta^{s+1}}{s+1} } - \sum_{s=1}^{k^2-k+1}{c_s(\sigma')\sum_{i=0}^s{ \frac{1}{s}{s \choose i}\gamma^{s-i} x^i }}.
\end{align}
Observe that the inner integral is also a univariate polynomial of degree $k^2-k+1$ and it can be written as 
\begin{align}\label{eq:y-poly-form}
\int_{z+\gamma}^\delta{  \Pr[y \rhd \sigma'] }\ud y = \sum_{t=0}^{k^2-k+1}{ d_t(\sigma) x^t }, 
\end{align}
where the coefficients $d_t(\sigma')$ with $t=0, ..., k^2-k+1$ are computed by equation (\ref{eq:y-integral}) at the second part of Algorithm~\ref{alg:weights}. 

By substituting equations (\ref{eq:x-poly-form}) and (\ref{eq:y-poly-form}) in equation (\ref{eq:weight-parameterized}), we obtain
\begin{align*}
W(\sigma,\sigma') &= \int_\alpha^\beta{ \Pr[x \rhd \sigma] \bigg( \int_{x+\gamma}^\delta{ \Pr[y \rhd \sigma']\ud y} \bigg) \ud x} \\
&= \int_\alpha^\beta{ \sum_{s=0}^{k^2-k}{ c_s(\sigma) x^s} \sum_{t=0}^{k^2-k+1}{ d_t(\sigma') x^t } }\ud z \\
&= \sum_{s=0}^{k^2-k}\sum_{t=0}^{k^2-k+1}{ c_s(\sigma) d_t(\sigma') \int_\alpha^\beta {x^{s+t} }\ud z } \\
&= \sum_{s=0}^{k^2-k}\sum_{t=0}^{k^2-k+1}{ \frac{c_s(\sigma) d_t(\sigma')}{s+t+1} \bigg( \beta^{s+t+1} - \alpha^{s+t+1} \bigg) }.
\end{align*}
This computation is described in the last part of Algorithm~\ref{alg:weights}.

\begin{algorithm}[t]
    \SetAlgoLined
    \tcp{Compute the coefficient vector $\cc(\sigma)$}
    \For{$s:=0 \dots k^2-k$}{
        set $c_s(\sigma) := 0$
    }
    \For{$\ell_1 := 1 \dots k$}{
    $\ddots$    \\
    \For{$\ell_k := 1 \dots k$}{
       set $|\ell|_1  := \sum_{i=1}^k{\ell_i}$\\
       \For{$j := 0 \dots k^2-|\ell|_1$}{ 
           set $c_{|\ell|_1-k+j}(\sigma) := c_{|\ell|_1-k+j}(\sigma) + N(\sigma) \bigg( \prod_{i=1}^k{ p_{\sigma_i,\ell_i} {k-1 \choose \ell_i-1}} \bigg) {k^2-|\ell|_1 \choose j}(-1)^j$
       }
    }
    }    
    \tcp{Compute the coefficient vector $\dd(\sigma')$}
    set $d_0(\sigma') := \sum_{s=0}^{k^2-k}{ \frac{c_s(\sigma')\delta^{s+1}}{s+1} }$ \\
    \For{$t :=1 \dots k^2-k+1$}{
        set $d_t(\sigma') := 0$
    }
    \For{$s := 1 \dots k^2-k+1$}{
       \For{$i := 0 \dots s$}{
           set $d_i(\sigma') := d_i(\sigma') - \frac{c_s(\sigma')}{s}{s \choose i}\gamma^{s-i}$
       }
    }
    \tcp{Compute $W(\sigma,\sigma')$}
    set $W(\sigma,\sigma') := 0$ \\
    \For{$s := 0 \dots k^2-k$}{
    \For{$t := 0 \dots k^2-k+1$}{
        set $W(\sigma,\sigma') := W(\sigma,\sigma') + \frac{c_s(\sigma) d_t(\sigma')}{s+t+1} \left( \beta^{s+t+1} - \alpha^{s+t+1} \right)$
    }    
    }
\caption{Computing $W(\sigma,\sigma')$}
\label{alg:weights}
\end{algorithm}

\section{Formal analysis of type-ordering aggregation rules}\label{app:sec:formal}
Our assumptions about infinite number of students make our analysis in Section~\ref{subsec:framework} non-rigorous. We now present a rigorous analysis that handles formally all subtleties involved. We denote by $n$ the number of students and by $k$ the bundle size. 

In our non-rigorous analysis, exam papers are represented by their fractional true ranks in $[0,1]$. Here, as the number of exam papers is considered to be finite, we adjust this notation as follows. The integer $\chi\in [n]$ will denote both an exam paper and its true rank (i.e., exam paper $\chi$ is the $\chi$-th best paper in the ground truth). For an exam paper $\chi$ and type $\sigma\in \mathcal{T}_k$, we will use ${\Pr}_n[\chi\rhd \sigma]$ to denote the probability that $\chi$ gets type $\sigma$. Then, the notation $\Pr[x\rhd \sigma]$ that is used in Section~\ref{subsec:framework} can be thought of as the limit, as $n$ approaches infinity, of the probability that exam paper $\chi$ (with $x=\chi/n$) gets type $\sigma$.

Recall that for a type $\sigma\in {\mathcal{T}_k}$, the quantity $N(\sigma)$ denotes the number of different ways the graders can give type $\sigma$ to a given exam paper. It can be easily seen that $N(\sigma)\leq k!$. The notation $L_k$ is again used to denote the set of all $k$-entry vectors $\ell=(\ell_1, ..., \ell_k)$ with $\ell_i\in [k]$. We use the abbreviation $|\ell|_1=\sum_{t=1}^k{\ell_t}$. Finally, $p$ is the noise matrix.

For a real $z\in [0,1]$, define 
\begin{align*}
\theta_\sigma(z) &= 
N(\sigma) \sum_{\ell\in L_k}{\bigg(\prod_{i=1}^k{ p_{\sigma_i,\ell_i} {k-1 \choose \ell_i-1}} \bigg)  z^{|\ell|_1-k}(1-z)^{k^2-|\ell|_1}}.
\end{align*}
Notice that $\theta_\sigma(x)$ is the equivalent expression (\ref{eq:theta}) for $\Pr[x\rhd \sigma]$ in our non-rigorous analysis. Clearly, $\theta_\sigma(z)\leq 1$ for every $z\in [0,1]$. 

Here, we will focus on exam paper $\chi\in [n]$ and will show that $\theta_\sigma(\chi/n)$ is an approximation for ${\Pr}_n[\chi\rhd \sigma]$, which becomes sharp as $n$ approaches infinity. This approximation is stated in Corollary~\ref{cor:approx}, which is obtained through the two next Lemmas~\ref{lem:approx-via-x-rhd-sigma-given-delta} and~\ref{lem:approx-x-rhd-sigma-given-delta}. 

We denote by $\Delta$ the event that the exam papers that are contained in the $k$ bundles in which $\chi$ appears are all different. So, when $\Delta$ is true, we can view the $k(k-1)$ exam papers that appear in bundles together with $\chi$ as selected uniformly at random without replacement among all exam papers besides $\chi$.

Also, we denote by $\eta(k)$ a sufficiently large quantity that depends only on $k$. Setting $\eta(k)=k^{k^2+2k+4}$ is enough for our proof below. Note that we have made no particular attempt to optimize $\eta(k)$.

\begin{lemma}\label{lem:approx-via-x-rhd-sigma-given-delta}
For every exam paper $\chi\in [n]$ and type $\sigma\in \mathcal{T}_k$, it holds that $$|{\Pr}_n[\chi \rhd \sigma] - {\Pr}_n[\chi\rhd \sigma|\Delta]|\leq \eta(k)/n.$$
\end{lemma}

\begin{proof}
Using the law of total probability, we have
\begin{align*}
{\Pr}_n[\chi\rhd \sigma] &= {\Pr}_n[\chi\rhd \sigma|\Delta]\cdot{\Pr}_n[\Delta]+{\Pr}_n[\chi\rhd \sigma|\overline{\Delta}]\cdot{\Pr}_n[\overline{\Delta}]\\
&= {\Pr}_n[\chi\rhd \sigma|\Delta]+{\Pr}_n[\overline{\Delta}]\cdot\left({\Pr}_n[\chi\rhd \sigma|\overline{\Delta}]-{\Pr}_n[\chi\rhd \sigma|\Delta]\right),
\end{align*}
which implies that
\begin{align}\label{eq:bound-by-overline-delta}
|{\Pr}_n[\chi\rhd \sigma]-{\Pr}_n[\chi\rhd \sigma|\Delta]| &\leq {\Pr}_n[\overline{\Delta}].
\end{align}

It remains to bound ${\Pr}_n[\overline{\Delta}]$. Consider the random process of forming the bundles (recall the discussion in Section \ref{subsec:bundles} and specifically footnote 3). The process consists of $k$ rounds. For $i=1, ..., k$, the $i$-th exam paper in each bundle is decided in round $i$. We denote by $B_i$ the bundle that receives exam paper $\chi$ in round $i$. Without loss of generality, we assume that, in each round $j$, the $j$-th exam paper in the bundles is decided as follows. By definition, if $j=i$, the $j$-th exam paper in bundle $B_i$ is exam paper $\chi$. If $i\not=j$, the $j$-th exam paper in bundle $B_i$ is selected uniformly at random among all exam papers besides exam paper $\chi$, the exam papers that have been included in bundle $B_i$ in rounds $1$, $2$, ..., $j-1$, and the $j$-th exam paper of bundles $B_1$, $B_2$, ..., $B_{i-1}$. Hence, if $i\not=j$, the $j$-th exam paper in bundle $B_i$ is selected uniformly at random among $n-i-j+2$ exam papers.

The number of distinct exam papers that have been included in bundles before deciding the $j$-th exam paper of bundle $B_i$ is $(j-1)(k-1)+i$ if $i<j$ and $(j-1)(k-1)+i-1$ if $i>j$. 
Hence, 
\begin{align*}
{\Pr}_n[\Delta] &= \prod_{j=1}^k{\prod_{i=1}^{j-1}{\frac{n-(j-1)(k-1)-i}{n-i-j+2}}\prod_{i=j+1}^k{\frac{n-(j-1)(k-1)-i+1}{n-i-j+2}}}\\
&\geq  \prod_{j=1}^k{\prod_{i=1}^{j-1}{\frac{n-k^2}{n}}\prod_{i=j+1}^k{\frac{n-k^2}{n}}} \geq \left(1-\frac{k^2}{n}\right)^{k^2} \geq 1-\frac{k^4}{n}\geq 1-\eta(k)/n.
\end{align*}
The second last inequality follows by Bernoulli inequality. Hence, ${\Pr}_n[\overline{\Delta}]\leq \eta(k)/n$ and the lemma follows due to inequality (\ref{eq:bound-by-overline-delta}).
\end{proof}

\begin{lemma}\label{lem:approx-x-rhd-sigma-given-delta} 
For every exam paper $\chi\in [n]$ and type $\sigma\in \mathcal{T}_k$, it holds that $$|{\Pr}_n[\chi \rhd \sigma|\Delta]- \theta_\sigma(\chi/n)|\leq \eta(k)/n.$$
\end{lemma}

\begin{proof}
Let $B_1$, $B_2$, ..., $B_k$ be the $k$ bundles which contain exam paper $\chi$. In order to compute ${\Pr}_n[\chi\rhd \sigma|\Delta]$, we will compute the probability that $\chi$ will be ranked $\sigma_i$-th in bundle $B_i$ for $i=1, ..., k$ and, due to symmetry, we will multiply by $N(\sigma)$ in order to account for all possible different ways to get type $\sigma$. We denote by $\mathcal{E}_i$ the event that exam paper $\chi$ is ranked $\sigma_i$-th by the grader of bundle $B_i$. For a vector $\ell=(\ell_1, ..., \ell_k)\in L_k$, we denote by $\mathcal{Z}_i$ the event that $\chi$ has true rank $\ell_i$ among the exam papers in bundle $B_i$ (i.e., $\chi$ is the $\ell_i$-th best among the exam papers in bundle $B_i$). Then,
\begin{align}\nonumber
{\Pr}_n[\chi\rhd \sigma|\Delta] &= N(\sigma)\cdot {\Pr}_n[\cap_{i=1}^k{\mathcal{E}_i}|\Delta]\\\label{eq:x-sigma}
&= N(\sigma) \cdot \sum_{\ell\in L_k}{{\Pr}_n[\cap_{i=1}^k{\mathcal{Z}_i}|\Delta]\cdot {\Pr}_n[\cap_{i=1}^k{\mathcal{E}_i}|\cap_{i=1}^k{\mathcal{Z}_i},\Delta]}.
\end{align}
Now, observe that 
\begin{align}\label{eq:graders}
{\Pr}_n[\cap_{i=1}^k{\mathcal{E}_i}|\cap_{i=1}^k{\mathcal{Z}_i},\Delta] &= \prod_{i=1}^k{{\Pr}_n[\mathcal{E}_i|\mathcal{Z}_i]} = \prod_{i=1}^k{p_{\sigma_i,\ell_i}}
\end{align}
by the definition of the noise matrix. Furthermore, using the chain rule, 
\begin{align}\label{eq:prob-true}
{\Pr}_n[\cap_{i=1}^k{\mathcal{Z}_i}|\Delta] &= \prod_{i=1}^k{{\Pr}_n[\mathcal{Z}_i|\mathcal{Z}_1, ..., \mathcal{Z}_{i-1},\Delta]}.
\end{align}

Now, the conditions $\mathcal{Z}_1$, ..., $\mathcal{Z}_{i-1}$ indicate that among the exam papers in bundles $B_1$, ..., $B_{i-1}$, exactly $\sum_{t=1}^{i-1}{\ell_t}-i+1$ have better true rank than $\chi$ and exactly $k(i-1)-\sum_{t=1}^{i-1}{\ell_t}$ have worse true rank than $\chi$. Assuming $\Delta$ and $\mathcal{Z}_1$, ..., $\mathcal{Z}_{i-1}$, the $k-1$ exam papers in bundle $B_i$ (besides $\chi$) are selected uniformly at random without replacement between all exam papers that have not been included in bundles $B_1$, ..., $B_{i-1}$. Then, the probability that exactly $\ell_i-1$ among the $k-1$ exam papers of bundle $B_i$ have better rank than exam paper $\chi$ is
\begin{align}\nonumber
&{\Pr}_n[\mathcal{Z}_i|\mathcal{Z}_1, ..., \mathcal{Z}_{i-1},\Delta]\\\label{eq:products}
&= {k-1 \choose \ell_i-1} \prod_{j=1}^{\ell_i-1}{\frac{\chi+i-j-1-\sum_{t=1}^{i-1}{\ell_t}}{n-j-(k-1)(i-1)}}\prod_{j=\ell_i+1}^{k}{\frac{n+1-\chi-(i-1)k-j+\sum_{t=1}^{i-1}{\ell_t}}{n+1-(k-1)(i-1)-\ell_i-j}}.
\end{align}

We bound the fractions in the above expression using the property $\frac{\alpha}{\beta}\leq \frac{\alpha+\gamma}{\beta+\gamma}$ when $0\leq \alpha\leq \beta$ and $\gamma>0$ and the facts that $i,j,\ell_i\in [k]$. We have
\begin{align}\label{eq:upper-x}
\frac{\chi+i-j-1-\sum_{t=1}^{i-1}{\ell_t}}{n-j-(k-1)(i-1)} \leq \frac{\chi+(i-1)k-\sum_{t=1}^{i-1}{\ell_t}}{n} \leq \min\left\{1,\frac{\chi}{n}+\frac{k^2}{n}\right\}
\end{align}
and
\begin{align}\label{eq:upper-n-x}
\frac{n+1-\chi-(i-1)k-j+\sum_{t=1}^{i-1}{\ell_t}}{n+1-(k-1)(i-1)-\ell_i-j} &\leq \frac{n-\chi+\sum_{t=1}^{i}{\ell_t}}{n} \leq \min\left\{1,1-\frac{\chi}{n}+\frac{k^2}{n}\right\}.
\end{align}
Using equalities (\ref{eq:prob-true}) and (\ref{eq:products}) and inequalities (\ref{eq:upper-x}) and (\ref{eq:upper-n-x}), we get
\begin{align}\nonumber
{\Pr}_n[\cap_{i=1}^k{\mathcal{Z}_i}|\Delta] &\leq \prod_{i=1}^k{{k-1 \choose \ell_i-1}\left(\min\left\{1,\frac{\chi}{n}+\frac{k^2}{n}\right\}\right)^{\ell_i-1}\left(\min\left\{1,1-\frac{\chi}{n}+\frac{k^2}{n}\right\}\right)^{k-\ell_i}}\\\label{eq:mins}
&= \left(\min\left\{1,\frac{\chi}{n}+\frac{k^2}{n}\right\}\right)^{|\ell|_1-k}\left(\min\left\{1,1-\frac{\chi}{n}+\frac{k^2}{n}\right\}\right)^{k^2-|\ell|_1}\prod_{i=1}^k{{k-1 \choose \ell_i-1}}.
\end{align}
Using the variation of Bernoulli inequality, which states that $(\alpha+\beta)^\gamma\leq \alpha^\gamma+\beta\gamma$ when $\alpha,\beta>0$, $\alpha+\beta\leq 1$, and $\gamma$ is a non-negative integer, we have 
\begin{align}\nonumber
&\left(\min\left\{1,\frac{\chi}{n}+\frac{k^2}{n}\right\}\right)^{|\ell|_1-k}\left(\min\left\{1,1-\frac{\chi}{n}+\frac{k^2}{n}\right\}\right)^{k^2-|\ell|_1}\\\nonumber
&\leq \min\left\{1,\left(\frac{\chi}{n}\right)^{|\ell|_1-k}+\frac{k^2}{n}(|\ell|_1-k)\right\} \cdot \min\left\{1,\left(1-\frac{\chi}{n}\right)^{k^2-|\ell|_1}+\frac{k^2}{n}(k^2-|\ell|_1)\right\}\\\label{eq:upper-mins}
&\leq \left(\frac{\chi}{n}\right)^{|\ell|_1-k}\cdot \left(1-\frac{\chi}{n}\right)^{k^2-|\ell|_1}+\frac{k^4}{n}.
\end{align}
Putting (\ref{eq:x-sigma}), (\ref{eq:graders}), (\ref{eq:mins}), and (\ref{eq:upper-mins}) together and using the definition of $\theta_\sigma(\chi/n)$, we obtain that
\begin{align}\nonumber
{\Pr}_n[\chi\rhd\sigma|\Delta] &\leq \theta_\sigma(\chi/n)+\frac{k^4}{n}\cdot N(\sigma)\sum_{\ell\in L_k}{\prod_{i=1}^k{{k-1\choose \ell_i-1}p_{\sigma_i,\ell_i}}}\\\label{eq:upper}
&\leq \theta_\sigma(\chi/n)+k^{k^2+2k+4}/n \leq \theta_\sigma(\chi/n)+\eta(k)/n.
\end{align}

Similarly, we bound the fractions in the RHS of (\ref{eq:products}) from below using the facts that $i,j,\ell_i\in [k]$. We have
\begin{align}\label{eq:lower-x}
\frac{\chi+i-j-1-\sum_{t=1}^{i-1}{\ell_t}}{n-j-(k-1)(i-1)} \geq \frac{\chi+i-j-1-\sum_{t=1}^{i-1}{\ell_t}}{n}  \geq \max\left\{0,\frac{\chi}{n}-\frac{k^2}{n}\right\}
\end{align}
and
\begin{align}\label{eq:lower-n-x}
\frac{n+1-\chi-(i-1)k-j+\sum_{t=1}^{i-1}{\ell_t}}{n+1-(k-1)(i-1)-\ell_i-j} &\geq \frac{n+1-\chi-(i-1)k-j+\sum_{t=1}^{i-1}{\ell_t}}{n} \geq \max\left\{0,1-\frac{\chi}{n}-\frac{k^2}{n}\right\}.
\end{align}
Using equalities (\ref{eq:prob-true}) and (\ref{eq:products}) and inequalities (\ref{eq:lower-x}) and (\ref{eq:lower-n-x}), we get
\begin{align}\nonumber
{\Pr}_n[\cap_{i=1}^k{\mathcal{Z}_i}|\Delta] &\geq \prod_{i=1}^k{{k-1 \choose \ell_i-1}\left(\max\left\{0,\frac{\chi}{n}-\frac{k^2}{n}\right\}\right)^{\ell_i-1}\left(\max\left\{0,1-\frac{\chi}{n}-\frac{k^2}{n}\right\}\right)^{k-\ell_i}}\\\label{eq:maxs}
&= \left(\max\left\{0,\frac{\chi}{n}-\frac{k^2}{n}\right\}\right)^{|\ell|_1-k}\left(\max\left\{0,1-\frac{\chi}{n}-\frac{k^2}{n}\right\}\right)^{k^2-|\ell|_1}\prod_{i=1}^k{{k-1 \choose \ell_i-1}}.
\end{align}
Using the variation of Bernoulli inequality, which states that $(\alpha-\beta)^\gamma\geq \alpha^\gamma-\beta\gamma$ when $\alpha,\beta>0$, $\alpha-\beta\geq 0$, and $\gamma$ is a non-negative integer, we have 
\begin{align}\nonumber
&\left(\max\left\{0,\frac{\chi}{n}-\frac{k^2}{n}\right\}\right)^{|\ell|_1-k}\left(\max\left\{0,1-\frac{\chi}{n}-\frac{k^2}{n}\right\}\right)^{k^2-|\ell|_1}\\\nonumber
&\geq \max\left\{0,\left(\frac{\chi}{n}\right)^{|\ell|_1-k}-\frac{k^2}{n}(|\ell|_1-k)\right\} \cdot \max\left\{0,\left(1-\frac{\chi}{n}\right)^{k^2-|\ell|_1}-\frac{k^2}{n}(k^2-|\ell|_1)\right\}\\\label{eq:lower-maxs}
&\geq \left(\frac{\chi}{n}\right)^{|\ell|_1-k}\cdot \left(1-\frac{\chi}{n}\right)^{k^2-|\ell|_1}-\frac{k^4}{n}.
\end{align}
Putting (\ref{eq:x-sigma}), (\ref{eq:graders}), (\ref{eq:maxs}), and (\ref{eq:lower-maxs}) together and using the definition of $\theta_\sigma(\chi/n)$, we obtain that
\begin{align}\nonumber
{\Pr}_n[\chi\rhd\sigma|\Delta] &\geq \theta_\sigma(\chi/n)-\frac{k^4}{n}\cdot N(\sigma)\sum_{\ell\in L_k}{\prod_{i=1}^k{{k-1\choose \ell_i-1}p_{\sigma_i,\ell_i}}}\\\label{eq:lower}
&\geq \theta_\sigma(\chi/n)-k^{k^2+2k+4}/n \geq \theta_\sigma(\chi/n)-\eta(k)/n.
\end{align}
The lemma follows by inequalities (\ref{eq:upper}) and (\ref{eq:lower}).
\end{proof}

Lemmas~\ref{lem:approx-via-x-rhd-sigma-given-delta} and~\ref{lem:approx-x-rhd-sigma-given-delta} imply the following.
\begin{corollary}\label{cor:approx}
For every exam paper $\chi\in [n]$ and type $\sigma\in \mathcal{T}_k$, it holds that $$|{\Pr}_n[\chi\rhd \sigma]-\theta_\sigma(\chi/n)| \leq 2\eta(k)/n.$$
\end{corollary}

We next focus on two exam papers $\chi, \upsilon\in [n]$. We will show that the events that they get types $\sigma,\sigma'\in \mathcal{T}_k$ are almost independent.
\begin{lemma}\label{lem:independent}
For every pair of exam papers $\chi,\upsilon\in [n]$ and pair of types $\sigma,\sigma'\in \mathcal{T}_k$, it holds that $$|{\Pr}_n[\chi \rhd \sigma \mbox{ and } \upsilon \rhd \sigma']-{\Pr}_n[\chi \rhd \sigma]\cdot {\Pr}_n[\upsilon \rhd \sigma']|\leq \eta(k)/n.$$
\end{lemma}

\begin{proof}
Consider the two exam papers $\chi$ and $\upsilon$ and let $\Gamma$ denote the event that no bundle contains both $\chi$ and $\upsilon$. Using Bayes' rule and the law of total probability, we have
\begin{align*}
&{\Pr}_n[\chi \rhd \sigma \mbox{ and } \upsilon \rhd \sigma']\\
&={\Pr}_n[\chi \rhd \sigma]\cdot {\Pr}_n[\upsilon \rhd \sigma'|\chi\rhd \sigma]\\
&= {\Pr}_n[\chi \rhd \sigma]\cdot \left({\Pr}_n[\upsilon \rhd \sigma'|\chi\rhd \sigma,\Gamma]{\Pr}_n[\Gamma]+{\Pr}_n[\upsilon \rhd \sigma'|\chi\rhd \sigma,\overline{\Gamma}]{\Pr}_n[\overline{\Gamma}]\right)\\
&= {\Pr}_n[\chi \rhd \sigma]\cdot \left({\Pr}_n[\upsilon \rhd \sigma'|\Gamma]{\Pr}_n[\Gamma]+{\Pr}_n[\upsilon \rhd \sigma'|\chi\rhd \sigma,\overline{\Gamma}]{\Pr}_n[\overline{\Gamma}]\right)\\
&= {\Pr}_n[\chi \rhd \sigma]\cdot {\Pr}_n[\upsilon \rhd \sigma']+{\Pr}_n[\overline{\Gamma}]\cdot {\Pr}_n[\chi\rhd \sigma]\cdot \left({\Pr}_n[\upsilon \rhd \sigma'|\chi\rhd \sigma,\overline{\Gamma}]-{\Pr}_n[\upsilon\rhd \sigma'|\overline{\Gamma}]\right).
\end{align*}
In the third equality, we have used the fact that, whether exam paper $\upsilon$ gets type $\sigma'$ does not depend on whether exam paper $\chi$ gets type $\sigma$ when no bundle contains both $\chi$ and $\upsilon$, i.e., ${\Pr}_n[\upsilon\rhd \sigma'|\chi\rhd \sigma,\Gamma]={\Pr}_n[\upsilon\rhd \sigma'|\Gamma]$.
Hence,
\begin{align*}
|{\Pr}_n[\chi \rhd \sigma \mbox{ and } \upsilon \rhd \sigma']-{\Pr}_n[\chi \rhd \sigma]\cdot {\Pr}_n[\upsilon \rhd \sigma']| &\leq \Pr[\overline{\Gamma}].
\end{align*}
It remains to show that $\Pr[\overline{\Gamma}] \leq \eta(k)/n$. Let $t\in \{k, k+1, ..., k(k-1)\}$ be the random variable indicating the number of papers different than $\chi$ which appear in the bundles of $\chi$. Given $t$, the probability that paper $\upsilon$ is one of these papers is ${\Pr}_n[\overline{\Gamma}|t]=\frac{t}{n-1}\leq \frac{\eta(k)}{n}$.
Hence, ${\Pr}_n[\overline{\Gamma}]\leq \eta(k)/n$ as well and the lemma follows. 
\end{proof}

Now, we use $C_n$ to denote the expected fraction of pairwise relations between exam papers that are recovered correctly by the type-ordering aggregation rule $\succ$. Using Lemma~\ref{lem:independent} and Corollary~\ref{cor:approx}, we have
\begin{align*}
C_n &= {n \choose 2}^{-1}\sum_{\chi=1}^{n-1}{\sum_{\upsilon=\chi+1}^{n}{\left(\sum_{\sigma,\sigma':\sigma\succ \sigma'}{{\Pr}_{n}[\chi\rhd \sigma \mbox{ and } \upsilon\rhd \sigma']}+\frac{1}{2}\sum_\sigma{{\Pr}_n[\chi\rhd \sigma \mbox{ and } \upsilon\rhd \sigma]}\right)}}\\
&\leq {n \choose 2}^{-1}\sum_{\chi=1}^{n-1}{\sum_{\upsilon=\chi+1}^{n}{\left(\sum_{\sigma,\sigma':\sigma\succ \sigma'}{\left({\Pr}_{n}[\chi\rhd \sigma]\cdot {\Pr}_n[\upsilon\rhd \sigma']+\frac{\eta(k)}{n}\right)}\right.}}\\
&\quad {{\left.+\frac{1}{2}\sum_\sigma{\left({\Pr}_n[\chi\rhd \sigma]\cdot{\Pr}_n[\upsilon\rhd \sigma]+\frac{\eta(k)}{n}\right)}\right)}}\\
&\leq {n \choose 2}^{-1}\sum_{\chi=1}^{n-1}{\sum_{\upsilon=\chi+1}^{n}{\left(\sum_{\sigma,\sigma':\sigma\succ \sigma'}{\left(\theta_\sigma(\chi/n)\cdot \theta_{\sigma'}(\upsilon/n)+5\frac{\eta(k)}{n}\right)}+\frac{1}{2}\sum_\sigma{\left(\theta_\sigma(\chi/n)\cdot \theta_{\sigma}(\upsilon/n)+5\frac{\eta(k)}{n}\right)}\right)}}\\
&= {n \choose 2}^{-1}\sum_{\chi=1}^{n-1}{\sum_{\upsilon=\chi+1}^{n}{\left(\sum_{\sigma,\sigma':\sigma\succ \sigma'}{\theta_\sigma(\chi/n)\cdot \theta_{\sigma'}(\upsilon/n)}+\frac{1}{2}\sum_\sigma{\theta_\sigma(\chi/n)\cdot \theta_{\sigma}(\upsilon/n)}\right)}}+\frac{5|{\mathcal{T}}_k|^2 \eta(k)}{2n}.
\end{align*}
In the second inequality, besides Corollary~\ref{cor:approx}, we have also used the fact that $\theta_\sigma(\chi/n)$ and $\theta_{\sigma'}(\upsilon/n)$ are at most $1$.

Similarly, we can obtain the following lower bound:
\begin{align*}
C_n &\geq  {n \choose 2}^{-1}\sum_{\chi=1}^{n-1}{\sum_{\upsilon=\chi+1}^{n}{\left(\sum_{\sigma,\sigma':\sigma\succ \sigma'}{\theta_\sigma(\chi/n)\cdot \theta_{\sigma'}(\upsilon/n)}+\frac{1}{2}\sum_\sigma{\theta_\sigma(\chi/n)\cdot \theta_{\sigma}(\upsilon/n)}\right)}}-\frac{5|{\mathcal{T}}_k|^2 \eta(k)}{2n}.
\end{align*}
So far, we have shown that $C_n$ takes values that range in an interval of width $5|\mathcal{T}_k|^2\eta(k)/n$. We remark that our experiments indicate that the concentration is much sharper. 

Notice that the quantity $\frac{5|{\mathcal{T}}_k|^2 \eta(k)}{2n}$, which appears in the above upper and lower bounds of $C_n$, approaches $0$ as $n$ tends to infinity, as both $|\mathcal{T}_k|$ and $\eta(k)$ depend only on $k$. Hence,
\begin{align*}
\lim_{n\rightarrow +\infty}{C_n} &= \int_0^1{\int_x^1{\left(\sum_{\sigma,\sigma':\sigma\succ \sigma'}{\theta_\sigma(x)\cdot \theta_{\sigma'}(y)}+\frac{1}{2}\sum_\sigma{\theta_\sigma(x)\cdot \theta_{\sigma}(y)}\right)\ud y}\ud x}\\
&=\sum_{\sigma,\sigma':\sigma\succ \sigma'}{W(\sigma,\sigma')} +\frac{1}{2}\sum_\sigma{W(\sigma,\sigma)},
\end{align*}
where the weights $W$ are exactly as we have defined them in Section~\ref{subsec:framework}. We conclude that the quantity $C$ that we study in our non-rigorous analysis is indeed the limit of the expected number of correctly recovered pairwise relations as the number of students approaches infinity. Extending the analysis for more general bivariate performance objectives $f$ (as we did in Section~\ref{subsec:optimal}) is straightforward.

\section{Experimental data}\label{app:sec:data}
The following tables contain the data collected from our two field experiments. Information for each student consists of an identifier, a half-integer quality (the cardinal grade of the student in the mid term exam) and the ranking provided by the student for the six exam papers in her bundle (assuming that the correct ranking is 1 2 3 4 5 6). Table \ref{tab:2015} contains the data collected in the 2015 field experiment. Due to the larger number of participating students, the data from the 2016 field experiment have been split into the two Tables \ref{tab:2016a} and \ref{tab:2016b}.

\begin{table}[h!]
\centering{\small
\begin{tabular}{ccc ccc ccc ccc}
\noalign{\hrule height 1pt}\hline
\# & qual. & ranking & \# & qual. & ranking & \# & qual. & ranking & \# & qual. & ranking  \\\hline
1	&	10	&	2	1	3	4	5	6	&	35	&	7	&	1	3	2	6	4	5	&	69	&	5	&	5	6	4	1	2	3	&	103	&	3.5	&	2	1	3	4	5	6	\\
2	&	10	&	1	2	3	4	5	6	&	36	&	7	&	6	5	3	4	1	2	&	70	&	5	&	4	5	2	1	3	6	&	104	&	3.5	&	1	2	4	3	5	6	\\
3	&	9.5	&	1	3	2	4	5	6	&	37	&	6.5	&	1	2	5	3	4	6	&	71	&	5	&	2	1	3	4	5	6	&	105	&	3.5	&	4	2	3	6	5	1	\\
4	&	9.5	&	1	2	5	4	3	6	&	38	&	6.5	&	2	1	5	3	4	6	&	72	&	5	&	1	3	2	4	5	6	&	106	&	3	&	4	1	2	5	3	6	\\
5	&	9	&	1	3	2	5	4	6	&	39	&	6.5	&	1	2	6	3	4	5	&	73	&	5	&	2	1	5	3	4	6	&	107	&	3	&	4	1	3	2	6	5	\\
6	&	9	&	3	1	2	5	4	6	&	40	&	6.5	&	1	2	3	4	5	6	&	74	&	4.5	&	1	3	4	2	5	6	&	108	&	3	&	2	3	4	6	5	1	\\
7	&	9	&	2	6	5	4	3	1	&	41	&	6	&	1	4	2	3	5	6	&	75	&	4.5	&	1	2	3	5	4	6	&	109	&	3	&	3	4	5	1	2	6	\\
8	&	8.5	&	3	2	1	4	5	6	&	42	&	6	&	1	2	3	5	6	4	&	76	&	4.5	&	1	5	2	3	4	6	&	110	&	3	&	3	1	5	6	4	2	\\
9	&	8	&	3	4	2	5	1	6	&	43	&	6	&	2	1	5	3	4	6	&	77	&	4.5	&	1	4	3	2	5	6	&	111	&	3	&	1	3	2	6	4	5	\\
10	&	8	&	1	3	2	4	5	6	&	44	&	6	&	5	1	2	4	3	6	&	78	&	4.5	&	3	1	2	5	4	6	&	112	&	2.5	&	2	1	5	3	4	6	\\
11	&	8	&	3	1	2	4	5	6	&	45	&	6	&	1	2	3	4	5	6	&	79	&	4.5	&	2	4	1	3	6	5	&	113	&	2.5	&	3	4	6	1	2	5	\\
12	&	8	&	1	5	6	2	3	4	&	46	&	6	&	1	4	2	3	6	5	&	80	&	4.5	&	3	4	1	5	2	6	&	114	&	2.5	&	1	2	4	3	6	5	\\
13	&	8	&	5	3	4	6	2	1	&	47	&	6	&	1	5	2	3	4	6	&	81	&	4.5	&	1	2	6	4	5	3	&	115	&	2.5	&	1	2	4	5	3	6	\\
14	&	8	&	1	2	3	4	5	6	&	48	&	6	&	1	2	3	6	5	4	&	82	&	4.5	&	2	3	1	5	4	6	&	116	&	2.5	&	1	5	2	6	3	4	\\
15	&	8	&	1	2	3	5	4	6	&	49	&	6	&	1	2	5	6	4	3	&	83	&	4.5	&	3	1	4	2	6	5	&	117	&	2.5	&	1	3	4	2	5	6	\\
16	&	8	&	1	4	3	6	2	5	&	50	&	6	&	3	6	1	2	4	5	&	84	&	4.5	&	3	2	1	5	4	6	&	118	&	2.5	&	1	3	2	6	4	5	\\
17	&	8	&	4	2	3	5	1	6	&	51	&	6	&	1	2	3	4	5	6	&	85	&	4.5	&	3	1	4	2	6	5	&	119	&	2.5	&	3	1	4	6	5	2	\\
18	&	8	&	3	2	4	5	6	1	&	52	&	5.5	&	1	2	3	4	5	6	&	86	&	4.5	&	4	1	2	6	3	5	&	120	&	2.5	&	4	2	3	1	6	5	\\
19	&	7.5	&	1	2	3	6	4	5	&	53	&	5.5	&	4	2	1	3	5	6	&	87	&	4.5	&	4	2	3	5	1	6	&	121	&	2	&	1	3	4	2	5	6	\\
20	&	7.5	&	1	2	6	5	4	3	&	54	&	5.5	&	3	1	5	4	6	2	&	88	&	4	&	1	4	5	2	3	6	&	122	&	2	&	1	3	2	4	5	6	\\
21	&	7.5	&	1	3	6	5	4	2	&	55	&	5.5	&	4	1	3	6	5	2	&	89	&	4	&	1	2	6	5	3	4	&	123	&	2	&	2	1	4	5	3	6	\\
22	&	7.5	&	3	5	4	1	2	6	&	56	&	5.5	&	1	6	2	5	4	3	&	90	&	4	&	3	1	4	6	5	2	&	124	&	2	&	2	1	4	5	3	6	\\
23	&	7.5	&	1	2	6	3	4	5	&	57	&	5.5	&	1	2	4	5	3	6	&	91	&	4	&	1	3	2	4	5	6	&	125	&	2	&	3	4	1	2	6	5	\\
24	&	7.5	&	1	3	2	5	4	6	&	58	&	5.5	&	6	3	4	2	1	5	&	92	&	4	&	4	6	1	5	2	3	&	126	&	2	&	2	3	1	4	6	5	\\
25	&	7.5	&	2	3	5	4	6	1	&	59	&	5.5	&	1	6	5	3	4	2	&	93	&	4	&	1	2	3	4	5	6	&	127	&	2	&	3	2	5	4	1	6	\\
26	&	7	&	1	2	3	5	6	4	&	60	&	5.5	&	4	5	2	1	3	6	&	94	&	4	&	2	4	5	3	6	1	&	128	&	1.5	&	1	3	2	4	5	6	\\
27	&	7	&	1	3	2	5	4	6	&	61	&	5.5	&	1	4	5	6	3	2	&	95	&	3.5	&	2	1	4	3	5	6	&	129	&	1.5	&	1	2	6	3	4	5	\\
28	&	7	&	1	2	6	3	4	5	&	62	&	5	&	1	2	4	3	5	6	&	96	&	3.5	&	2	1	3	4	6	5	&	130	&	1.5	&	2	4	3	1	5	6	\\
29	&	7	&	2	1	5	3	6	4	&	63	&	5	&	2	1	3	4	6	5	&	97	&	3.5	&	1	4	2	3	5	6	&	131	&	1.5	&	5	2	4	3	1	6	\\
30	&	7	&	2	1	3	4	5	6	&	64	&	5	&	2	1	4	3	5	6	&	98	&	3.5	&	2	1	3	5	4	6	&	132	&	1	&	1	2	6	5	3	4	\\
31	&	7	&	1	2	3	6	4	5	&	65	&	5	&	2	1	3	4	6	5	&	99	&	3.5	&	4	1	3	2	6	5	&	133	&	1	&	5	6	3	2	4	1	\\
32	&	7	&	2	3	1	5	4	6	&	66	&	5	&	1	2	3	6	4	5	&	100	&	3.5	&	1	5	4	3	2	6	&	134	&	0.5	&	3	1	2	4	5	6	\\
33	&	7	&	1	2	4	6	3	5	&	67	&	5	&	4	1	2	3	5	6	&	101	&	3.5	&	5	1	4	2	6	3	&	135	&	0.5	&	3	4	5	6	1	2	\\
34	&	7	&	2	3	1	5	4	6	&	68	&	5	&	1	3	2	6	5	4	&	102	&	3.5	&	6	2	1	3	4	5	&	136	&	0	&	2	3	1	4	6	5	\\
\noalign{\hrule height 1pt}\hline\end{tabular}}
\caption{The data collected in our 2015 field experiment (with $136$ students). Each quality/ranking pair corresponds to a student. Quality takes half-integer values between $0$ and $10$. The ranking $1\,2\,3\,4\,5\,6$ is the correct one. Hence, in the ranking $2\,1\,3\,4\,5\,6$ provided by the first student, all pairwise relations are correct besides the one involving the best and the second best exam papers in the bundle of the student.
From these rankings, we can compute the probability $p_{i,j}$ for each $i \in [6], j \in [6]$ by counting the number of times $j$ appears $i$-th in the rankings provided by all students, and then dividing by the total number of students that participated in the field experiment. For example, since $2$ appears $28$ times first, we have that $p_{2,1} = 28/136 \approx 0.2059$; see the noise matrix $P_{2015}$, given by \eqref{eq:real-2015}.
}
\label{tab:2015}
\end{table} 

\begin{table}[h!]
\centering{\small
\begin{tabular}{ccc ccc ccc ccc}
\noalign{\hrule height 1pt}\hline
\# & qual. & ranking & \# & qual. & ranking & \# & qual. & ranking & \# & qual. & ranking  \\\hline
1	&	10	&	1	2	3	5	6	4	&	31	&	8.5	&	3	2	5	4	1	6	&	61	&	7	&	1	3	2	4	5	6	&	91	&	6	&	1	2	3	4	5	6	\\
2	&	10	&	1	3	2	4	5	6	&	32	&	8	&	1	2	3	4	5	6	&	62	&	7	&	1	2	3	6	4	5	&	92	&	6	&	2	5	6	3	4	1	\\
3	&	10	&	1	2	3	6	5	4	&	33	&	8	&	1	4	5	2	3	6	&	63	&	7	&	1	2	3	4	5	6	&	93	&	6	&	1	2	5	3	4	6	\\
4	&	10	&	1	2	3	4	5	6	&	34	&	8	&	3	1	2	5	4	6	&	64	&	7	&	5	1	2	6	4	3	&	94	&	6	&	1	3	6	5	2	4	\\
5	&	10	&	2	1	3	4	5	6	&	35	&	8	&	1	3	4	2	5	6	&	65	&	7	&	1	4	3	2	5	6	&	95	&	6	&	2	1	3	4	5	6	\\
6	&	10	&	1	6	2	4	5	3	&	36	&	8	&	1	5	3	4	2	6	&	66	&	7	&	1	2	3	4	5	6	&	96	&	6	&	1	3	2	4	5	6	\\
7	&	10	&	1	2	3	4	5	6	&	37	&	8	&	3	1	2	4	5	6	&	67	&	7	&	1	2	3	4	6	5	&	97	&	6	&	4	1	5	2	3	6	\\
8	&	9.5	&	1	2	4	3	6	5	&	38	&	8	&	1	2	4	3	5	6	&	68	&	7	&	1	2	4	6	3	5	&	98	&	6	&	1	2	4	3	5	6	\\
9	&	9.5	&	6	4	5	3	1	2	&	39	&	8	&	2	1	3	4	5	6	&	69	&	6.5	&	2	1	4	3	6	5	&	99	&	6	&	3	1	4	2	5	6	\\
10	&	9.5	&	1	2	3	5	4	6	&	40	&	8	&	3	2	1	5	4	6	&	70	&	6.5	&	2	4	1	3	5	6	&	100	&	6	&	1	2	3	5	4	6	\\
11	&	9.5	&	1	2	5	3	4	6	&	41	&	8	&	5	2	3	4	1	6	&	71	&	6.5	&	1	3	5	2	4	6	&	101	&	6	&	2	1	5	3	6	4	\\
12	&	9	&	4	5	2	1	3	6	&	42	&	8	&	1	3	2	4	5	6	&	72	&	6.5	&	5	2	1	3	6	4	&	102	&	6	&	1	4	2	3	5	6	\\
13	&	9	&	1	2	3	4	5	6	&	43	&	7.5	&	2	1	3	5	4	6	&	73	&	6.5	&	1	4	3	2	5	6	&	103	&	6	&	1	2	5	3	4	6	\\
14	&	9	&	1	2	3	5	6	4	&	44	&	7.5	&	1	2	3	4	5	6	&	74	&	6.5	&	1	2	3	5	4	6	&	104	&	6	&	2	1	3	5	4	6	\\
15	&	9	&	1	2	3	4	6	5	&	45	&	7.5	&	1	3	6	5	4	2	&	75	&	6.5	&	1	2	3	4	6	5	&	105	&	6	&	1	2	4	3	5	6	\\
16	&	9	&	2	1	3	6	4	5	&	46	&	7.5	&	4	1	2	5	6	3	&	76	&	6.5	&	1	2	3	4	5	6	&	106	&	6	&	4	5	6	2	3	1	\\
17	&	9	&	5	4	1	3	2	6	&	47	&	7.5	&	3	1	2	4	5	6	&	77	&	6.5	&	1	2	5	4	6	3	&	107	&	5.5	&	6	4	2	5	3	1	\\
18	&	9	&	5	1	2	4	3	6	&	48	&	7.5	&	1	2	3	4	5	6	&	78	&	6.5	&	1	2	3	4	5	6	&	108	&	5.5	&	2	1	3	4	5	6	\\
19	&	9	&	3	1	2	4	6	5	&	49	&	7.5	&	1	3	2	4	5	6	&	79	&	6.5	&	1	3	2	4	5	6	&	109	&	5.5	&	1	3	2	4	5	6	\\
20	&	9	&	2	3	4	1	6	5	&	50	&	7.5	&	1	2	3	4	5	6	&	80	&	6.5	&	1	2	5	3	4	6	&	110	&	5.5	&	1	3	2	4	5	6	\\
21	&	8.5	&	1	2	3	4	6	5	&	51	&	7.5	&	1	2	3	5	6	4	&	81	&	6.5	&	2	1	3	6	4	5	&	111	&	5.5	&	4	1	2	3	5	6	\\
22	&	8.5	&	2	4	3	1	5	6	&	52	&	7.5	&	2	1	3	6	4	5	&	82	&	6	&	1	4	3	2	5	6	&	112	&	5.5	&	1	2	3	4	5	6	\\
23	&	8.5	&	2	3	1	4	5	6	&	53	&	7.5	&	1	2	3	4	5	6	&	83	&	6	&	1	2	3	4	5	6	&	113	&	5.5	&	2	3	1	4	6	5	\\
24	&	8.5	&	1	2	3	4	5	6	&	54	&	7.5	&	1	2	4	5	3	6	&	84	&	6	&	1	3	2	5	4	6	&	114	&	5.5	&	2	1	3	5	4	6	\\
25	&	8.5	&	2	1	4	3	6	5	&	55	&	7.5	&	1	2	3	6	4	5	&	85	&	6	&	3	1	4	2	5	6	&	115	&	5.5	&	1	2	3	4	6	5	\\
26	&	8.5	&	2	3	1	4	5	6	&	56	&	7.5	&	1	2	4	3	5	6	&	86	&	6	&	1	2	3	5	4	6	&	116	&	5.5	&	3	2	1	6	4	5	\\
27	&	8.5	&	2	5	3	1	4	6	&	57	&	7.5	&	2	3	1	4	6	5	&	87	&	6	&	1	5	4	2	3	6	&	117	&	5.5	&	3	1	2	5	4	6	\\
28	&	8.5	&	1	3	5	2	4	6	&	58	&	7	&	2	1	4	3	6	5	&	88	&	6	&	1	2	3	4	5	6	&	118	&	5.5	&	1	2	3	4	6	5	\\
29	&	8.5	&	4	1	3	2	5	6	&	59	&	7	&	2	1	3	4	5	6	&	89	&	6	&	1	5	2	6	4	3	&	119	&	5.5	&	1	2	5	3	6	4	\\
30	&	8.5	&	1	4	3	2	5	6	&	60	&	7	&	1	2	3	4	5	6	&	90	&	6	&	5	1	2	3	4	6	&	120	&	5.5	&	3	2	1	4	5	6	\\
\noalign{\hrule height 1pt}\hline\end{tabular}}
\caption{The data collected in our 2016 field experiment (the table contains data for $120$ out of $241$ students). Each quality/ranking pair corresponds to a student. Quality takes half-integer values between $0$ and $10$. The ranking $1\,2\,3\,4\,5\,6$ is the correct one.}
\label{tab:2016a}
\end{table} 

\begin{table}[h!]
\centering
{\small
\begin{tabular}{ccc ccc ccc ccc}
\noalign{\hrule height 1pt}\hline
\# & qual. & ranking & \# & qual. & ranking & \# & qual. & ranking & \# & qual. & ranking  \\\hline
121	&	5.5	&	4	2	3	1	5	6	&	152	&	5	&	1	2	3	4	5	6	&	182	&	4	&	1	2	3	5	6	4	&	212	&	3	&	1	5	2	4	3	6	\\
122	&	5.5	&	1	3	4	2	6	5	&	153	&	4.5	&	1	4	3	2	5	6	&	183	&	4	&	1	2	3	5	4	6	&	213	&	2.5	&	1	2	3	4	5	6	\\
123	&	5.5	&	1	3	2	5	4	6	&	154	&	4.5	&	1	2	3	4	5	6	&	184	&	4	&	1	2	4	3	5	6	&	214	&	2.5	&	1	2	3	6	5	4	\\
124	&	5.5	&	1	2	3	4	5	6	&	155	&	4.5	&	1	2	4	3	5	6	&	185	&	3.5	&	1	2	3	5	4	6	&	215	&	2.5	&	1	3	2	5	4	6	\\
125	&	5.5	&	1	2	3	4	6	5	&	156	&	4.5	&	4	2	1	3	5	6	&	186	&	3.5	&	1	2	3	4	5	6	&	216	&	2.5	&	3	2	4	1	5	6	\\
126	&	5.5	&	5	1	2	3	4	6	&	157	&	4.5	&	1	2	6	4	3	5	&	187	&	3.5	&	1	2	4	3	5	6	&	217	&	2.5	&	1	2	4	3	5	6	\\
127	&	5.5	&	1	2	6	5	3	4	&	158	&	4.5	&	4	1	2	3	5	6	&	188	&	3.5	&	5	1	4	2	3	6	&	218	&	2.5	&	1	2	3	4	5	6	\\
128	&	5.5	&	2	1	4	3	5	6	&	159	&	4.5	&	1	4	2	3	5	6	&	189	&	3.5	&	1	3	2	4	5	6	&	219	&	2.5	&	4	2	1	3	5	6	\\
129	&	5	&	1	3	2	4	5	6	&	160	&	4.5	&	4	1	2	3	6	5	&	190	&	3.5	&	1	2	4	5	6	3	&	220	&	2.5	&	1	2	3	5	4	6	\\
130	&	5	&	3	1	2	4	5	6	&	161	&	4.5	&	2	4	5	1	3	6	&	191	&	3.5	&	2	1	3	6	5	4	&	221	&	2.5	&	1	2	3	4	6	5	\\
131	&	5	&	1	2	3	4	6	5	&	162	&	4.5	&	6	3	5	4	2	1	&	192	&	3.5	&	1	3	2	4	5	6	&	222	&	2.5	&	2	1	3	4	5	6	\\
132	&	5	&	1	2	5	3	4	6	&	163	&	4.5	&	3	1	2	5	4	6	&	193	&	3.5	&	4	2	3	1	5	6	&	223	&	2.5	&	1	5	3	2	4	6	\\
133	&	5	&	1	2	3	4	5	6	&	164	&	4.5	&	1	2	3	4	5	6	&	194	&	3.5	&	1	2	5	3	4	6	&	224	&	2	&	1	6	3	4	2	5	\\
134	&	5	&	1	3	2	4	5	6	&	165	&	4.5	&	1	2	3	5	6	4	&	195	&	3.5	&	1	5	6	3	4	2	&	225	&	2	&	1	2	3	5	4	6	\\
135	&	5	&	5	1	2	4	3	6	&	166	&	4.5	&	1	2	5	3	4	6	&	196	&	3.5	&	1	2	4	3	6	5	&	226	&	2	&	1	2	5	4	6	3	\\
136	&	5	&	4	1	2	3	5	6	&	167	&	4.5	&	1	2	3	4	5	6	&	197	&	3.5	&	4	2	1	3	5	6	&	227	&	2	&	2	1	3	5	6	4	\\
137	&	5	&	1	2	3	4	5	6	&	168	&	4.5	&	1	2	4	5	3	6	&	198	&	3.5	&	1	3	2	4	5	6	&	228	&	2	&	1	2	3	4	5	6	\\
138	&	5	&	1	2	3	4	6	5	&	169	&	4.5	&	1	2	3	4	5	6	&	199	&	3.5	&	1	3	2	4	5	6	&	229	&	2	&	4	1	5	2	6	3	\\
139	&	5	&	1	2	3	4	6	5	&	170	&	4.5	&	2	3	1	4	5	6	&	200	&	3.5	&	2	1	3	4	5	6	&	230	&	2	&	1	6	5	2	4	3	\\
140	&	5	&	2	1	4	3	5	6	&	171	&	4.5	&	1	2	3	5	4	6	&	201	&	3.5	&	1	2	3	5	4	6	&	231	&	2	&	6	2	5	3	4	1	\\
141	&	5	&	1	3	4	2	5	6	&	172	&	4.5	&	2	1	3	4	5	6	&	202	&	3.5	&	2	1	3	5	6	4	&	232	&	2	&	1	4	6	2	3	5	\\
142	&	5	&	1	2	4	3	6	5	&	173	&	4.5	&	2	1	3	4	5	6	&	203	&	3.5	&	1	2	4	3	5	6	&	233	&	1.5	&	1	2	3	4	5	6	\\
143	&	5	&	5	3	1	2	6	4	&	174	&	4.5	&	1	2	3	4	5	6	&	204	&	3	&	3	1	2	4	5	6	&	234	&	1.5	&	5	6	3	4	2	1	\\
144	&	5	&	1	2	3	4	5	6	&	175	&	4	&	1	2	4	3	5	6	&	205	&	3	&	2	1	3	4	6	5	&	235	&	1.5	&	1	2	3	6	4	5	\\
145	&	5	&	2	5	1	3	6	4	&	176	&	4	&	3	1	2	4	5	6	&	206	&	3	&	1	2	4	3	6	5	&	236	&	1.5	&	1	2	3	4	5	6	\\
146	&	5	&	1	2	3	4	5	6	&	177	&	4	&	1	2	3	6	5	4	&	207	&	3	&	1	6	3	2	5	4	&	237	&	1	&	2	1	3	6	4	5	\\
147	&	5	&	4	5	3	2	6	1	&	178	&	4	&	1	2	3	4	5	6	&	208	&	3	&	1	3	2	5	4	6	&	238	&	1	&	2	3	1	5	6	4	\\
148	&	5	&	2	3	1	4	6	5	&	179	&	4	&	1	2	3	5	4	6	&	209	&	3	&	2	1	6	4	3	5	&	239	&	1	&	1	2	3	4	5	6	\\
149	&	5	&	2	3	4	1	5	6	&	180	&	4	&	1	2	3	4	5	6	&	210	&	3	&	1	2	4	3	5	6	&	240	&	0.5	&	1	4	2	6	3	5	\\
150	&	5	&	1	2	3	4	6	5	&	181	&	4	&	1	2	3	4	6	5	&	211	&	3	&	1	2	3	4	6	5	&	241	&	0.5	&	2	6	1	5	4	3	\\
151	&	5	&	2	1	3	4	5	6	&		&		&							&		&		&							&		&		&							\\
\noalign{\hrule height 1pt}\hline\end{tabular}}
\caption{The data collected in our 2016 field experiment (the table contains data for the remaining $121$ out of $241$ students).}
\label{tab:2016b}
\end{table} 

\end{document}